\let\hat\widehat
\let\tilde\widetilde
\DeclareMathOperator*{\essinf}{ess\,inf}
\DeclareMathOperator*{\esssup}{ess\,sup}
\newcommand\cA{{\cal A}}
\newcommand\cC{{\cal C}}
\newcommand\cF{{\cal F}}
\newcommand\cG{{\cal G}}
\newcommand\cH{{\cal H}}
\newcommand\cI{{\cal I}}
\newcommand\cJ{{\cal J}}
\newcommand\cK{{\cal K}}
\newcommand\cM{{\cal M}}
\newcommand\cQ{{\cal Q}}
\newcommand\cS{{\cal S}}
\newcommand\Union{\bigcup}
\newcommand\intersect{\cap}
\newcommand\norm[1]{\left\|#1\right\|}
\newcommand\Set[1]{\left\{#1\right\}}
\renewcommand\angle{\mathop{\sf angle}}
\newcommand\tube{\mathop{\sf tube}}
\newenvironment{enum}{
\begin{enumerate}
  \setlength{\itemsep}{1pt}
  \setlength{\parskip}{0pt}
  \setlength{\parsep}{0pt}
}{\end{enumerate}}
\begin{document}

\title{Minimax Manifold Estimation\\ \today}
\author{\name Christopher R.~Genovese \email genovese@stat.cmu.edu\\
       \addr Department of Statistics\\
       Carnegie Mellon University\\
       Pittsburgh, PA 15213, USA
       \AND
       \name Marco Perone-Pacifico \email marco.peronepacifico@uniroma1.it \\
       \addr Department of Statistical Sciences\\
       Sapienza University of Rome\\
       Rome, Italy
       \AND
       \name Isabella Verdinelli \email isabella@stat.cmu.edu \\
       \addr Department of Statistics\\
       Carnegie Mellon University \\
       Pittsburgh, PA 15213, USA\\
       and Department of Statistical Sciences\\
       Sapienza University of Rome\\
       Rome, Italy
       \AND
       \name Larry Wasserman \email larry@stat.cmu.edu \\
       \addr Department of Statistics\\
       and Machine Learning Department\\
       Carnegie Mellon University\\
       Pittsburgh, PA 15213, USA
       }

\editor{}

\maketitle

\begin{abstract}%
We find the minimax rate of convergence 
in Hausdorff distance
for
estimating a manifold $M$ of dimension $d$
embedded in $\mathbb{R}^D$
given a noisy sample from the manifold.
Under certain conditions,
we show that the optimal rate of convergence is
$n^{-2/(2+d)}$.
Thus, the minimax rate depends only
on the dimension of the manifold, not on the
dimension of the space in which $M$ is embedded.
\end{abstract}

\begin{keywords}
Manifold learning, Minimax estimation.
\end{keywords}

\section{Introduction}

We consider the problem of estimating a manifold $M$
given noisy observations near the manifold.
The observed data are a random sample
$Y_1,\ldots, Y_n$ where
$Y_i\in\mathbb{R}^D$.
The model for the data is
\begin{equation}\label{eq::first}
Y_i = \xi_i +  Z_i
\end{equation}
where $\xi_1,\ldots, \xi_n$ are 
unobserved variables drawn from
a distribution supported on 
a manifold $M$
with dimension $d < D$.
The noise variables
$Z_1, \ldots, Z_n$ are 
drawn from a distribution $F$.
Our main assumption is that 
$M$ is a compact, $d$-dimensional, smooth Riemannian submanifold in $\mathbb{R}^D$;
the precise conditions on $M$ are given in
Section \ref{sec::conditions}.

A manifold $M$ and a distribution for $(\xi,Z)$
induce a distribution $Q\equiv Q_{M}$ for $Y$.
In Section 
\ref{section::dist},
we define a class of such distributions
\begin{equation}
{\cal Q}=
\Bigl\{ Q_{M} :\ M\in {\cal M}\Bigr\}
\end{equation}
where ${\cal M}$ is a set of manifolds.
Given two sets $A$ and $B$, the Hausdorff distance between $A$ and $B$ is 
\begin{equation}
H(A,B) = \inf \Bigl\{ \epsilon: \ A\subset B\oplus\epsilon
\ \ \ {\rm and}\ \ \ \ B\subset A\oplus\epsilon \Bigr\}
\end{equation}
where
\begin{equation}
A\oplus\epsilon = \bigcup_{x\in A} B_D(x,\epsilon)
\end{equation}
and
$B_D(x,\epsilon)$ is an open ball in $\mathbb{R}^D$ centered at $x$ with radius $\epsilon$.
We are interested in the 
minimax risk
\begin{equation}
R_n({\cal Q}) = \inf_{\hat M}\sup_{Q\in {\cal Q}}\mathbb{E}_Q [ H(\hat M,M)]
\end{equation}
where the infimum is over all estimators $\hat M$.
By an estimator $\hat M$ we mean a measurable function
of $Y_1,\ldots, Y_n$ taking values in the set of all manifolds.
Our first main result is the following minimax lower bound
which is proved in Section \ref{sec::minimax}.

\begin{theorem}
\label{thm::minimax}
Under the conditions given in Section 2,
there is a constant $C_1>0$ such that, for all large $n$,
\begin{equation}
\inf_{\hat M}
\sup_{Q\in {\cal Q}}
\mathbb{E}_{Q}\left[ H(\hat M,M)\right] \geq
C_1 \, \left(\frac{1}{n}\right)^{\frac{2}{2+d}}
\end{equation}
where the infimum is over all estimators $\hat M$.
\end{theorem}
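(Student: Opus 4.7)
The plan is to apply Le Cam's two-point method: I will exhibit two manifolds $M_0, M_1 \in \cM$ whose Hausdorff distance is of order $\epsilon_n \asymp n^{-2/(2+d)}$, but whose induced sampling distributions $Q_{M_0}$ and $Q_{M_1}$ are so close that no test based on $Y_1,\dots,Y_n$ can reliably tell them apart. Combined with Le Cam's inequality, this yields the claimed lower bound.

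For the construction, fix a reference manifold $M_0 \in \cM$, a point $p \in M_0$ with unit normal $\nu$, and a smooth, compactly supported profile $\phi : \mathbb{R}^d \to [0,1]$ with $\phi(0) = 1$. Working in exponential coordinates around $p$, I obtain $M_1$ by replacing a small patch of $M_0$ with the graph of $u \mapsto \epsilon_n \phi(u/\delta_n) \nu$ over the tangent disk. This bump has curvature of order $\epsilon_n / \delta_n^2$, so the conditions on $\cM$ from Section 2 (smoothness and reach) are preserved for all $n$ provided $\epsilon_n \leq c_0\, \delta_n^2$ for a sufficiently small constant $c_0$; I will saturate this. By construction, $H(M_0, M_1) = \epsilon_n$.

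The heart of the argument is a bound on the Hellinger distance between $Q_{M_0}$ and $Q_{M_1}$. Since $M_0$ and $M_1$ agree outside a region whose $d$-dimensional manifold volume is $\asymp \delta_n^d$, and differ there by at most $\epsilon_n$, coupling the two measures through the natural bijection on the bump and using the properties of the noise density from Section 2 gives a bound of the form
\begin{equation*}
h^2(Q_{M_0}, Q_{M_1}) \;\leq\; C_0\, \epsilon_n\, \delta_n^d .
\end{equation*}
With the coupling $\epsilon_n = c_0 \delta_n^2$ and the choice $\delta_n \asymp n^{-1/(2+d)}$, this makes $n\, h^2(Q_{M_0}, Q_{M_1}) = O(1)$, so $\mathrm{TV}(Q_{M_0}^{\otimes n}, Q_{M_1}^{\otimes n}) \leq \gamma < 1$. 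Le Cam's two-point lemma then gives
\begin{equation*}
\inf_{\hat M}\sup_{Q \in \cQ}\mathbb{E}_Q\!\left[H(\hat M, M)\right] \;\geq\; \tfrac{1}{2}\, H(M_0, M_1)\, (1-\gamma) \;\geq\; C_1\, n^{-2/(2+d)} .
\end{equation*}

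The main obstacle I expect is the Hellinger bound above. One has to show that an $\epsilon_n$-scale geometric perturbation yields only an $\epsilon_n\, \delta_n^d$-scale squared-Hellinger distributional perturbation, rather than, say, $\epsilon_n$: this requires leveraging both the localization of the bump within a $\delta_n^d$ patch and the specific structure of the noise model in Section 2 (for instance, normal-noise models or noise with bounded density and compact support yield this scaling; a purely smooth noise density would give a worse scaling and a different rate). The subsidiary check that $M_1$ lies in $\cM$ uniformly in $n$ is what pins down the coupling $\epsilon_n \asymp \delta_n^2$, which together with the Hellinger bound forces the exponent $2/(2+d)$.
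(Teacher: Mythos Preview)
Your proposal is correct and follows essentially the same route as the paper: Le Cam's two-point method applied to a reference manifold $M_0$ and a local bump perturbation $M_1$, with the curvature/reach constraint forcing the bump height to scale like the square of its width, and the key distributional bound $\ell_1(Q_0,Q_1)$ (equivalently $h^2$) of order $(\text{height})\times(\text{width})^d$.

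The only notable differences are in bookkeeping. The paper parameterizes by the bump height $\gamma$ alone (so the width is $\asymp\sqrt{\gamma}$ automatically), builds the bump explicitly out of spheres of radius $\kappa$ so that $\Delta(M_1)\ge\kappa$ is immediate rather than requiring a second-derivative estimate on $\phi$, and bounds $\ell_1(q_0,q_1)$ directly via the volume of the symmetric difference of the supports $S_0\circ S_1$ (which is where the perpendicular, compactly supported noise model enters). Your ``coupling through the natural bijection'' is a less explicit version of the same computation; in this model the dominant contribution is exactly $V(S_0\circ S_1)=O(\epsilon_n\delta_n^d)$, and you correctly flag that this is the step that depends on the specific noise assumptions of Section~2.
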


Thus, no method of estimating $M$ can have an expected Hausdorff distance smaller than
the stated bound.
Note that the rate depends on $d$ but not on $D$
even though the support of the distribution $Q$ for $Y$
has dimension $D$.
Our second result is the following upper bound
which is proved in Section \ref{OUTLINE}. 

\begin{theorem}
\label{thm::upper}
Under the conditions given in Section 2,
there exists an estimator $\hat M$ such that, for all large $n$,
\begin{equation}
\sup_{Q\in{\cal Q}} \mathbb{E}_{Q}\left[ H(\hat M,M)\right] \leq
C_2 \,  \left(\frac{\log n}{n}\right)^{\frac{2}{2+d}}
\end{equation}
for some $C_2>0$.
\end{theorem}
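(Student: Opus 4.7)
The plan is to construct an explicit plug-in estimator built from local polynomial fits at bandwidth $h_n \asymp (\log n/n)^{1/(2+d)}$ and to bound its Hausdorff error by a bias-plus-variance analysis that is uniform over $M$. The bandwidth choice is dictated by the usual manifold-smoothing trade-off: the second-order curvature bias of a local fit is $O(h_n^2)$, while the uniform stochastic fluctuation of a local average, taken over an effective sample of $n h_n^d$ observations and inflated by a union bound across $O(h_n^{-d})$ local neighborhoods, is $O(\sqrt{\log n/(n h_n^d)})$. Equating these two terms produces the rate in the theorem.

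Concretely, the construction proceeds in three phases. First, cover $\mathbb{R}^D$ by a cubic $h_n$-grid and declare a cell \emph{active} if it contains at least $c\,n h_n^d$ observations. Under the noise/tube conditions of Section \ref{sec::conditions}, binomial concentration together with a union bound shows that with probability $1-o(1)$ the active cells are exactly those within $O(h_n)$ of $M$. Second, inside each active cell $B_j$, run local PCA on the observations, use the top $d$ eigenvectors as an estimate $\hat T_j$ of the tangent plane at the projection of the cell center onto $M$ (controlled to within $O(h_n)$ by a Davis--Kahan argument), and fit a local quadratic surface on the graph over $\hat T_j$ by weighted least squares. Local polynomial regression theory, adapted to the manifold setting, then gives a patch $\hat M_j$ with Hausdorff error $O(h_n^2)+O(\sqrt{\log n/(n h_n^d)})$ relative to the corresponding portion of $M$. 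Third, set $\hat M=\bigcup_j \hat M_j$, truncated to the relevant neighborhoods to avoid extrapolation. The bound on $H(\hat M,M)$ then follows because every $x\in\hat M$ lies in some patch close to $M$, and conversely every $y\in M$ lies in an active cell that contributes a patch close to $y$.

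The hardest part is the uniform control over all $O(h_n^{-d})$ patches: the deviations of local PCA and of local polynomial fitting must be bounded simultaneously, and it is exactly this union bound that produces the $\log n$ factor and accounts for the sole gap to the lower bound of Theorem \ref{thm::minimax}. A secondary subtlety is ruling out spurious activations far from $M$ that would inflate the $\hat M\subset M\oplus\epsilon$ side of the Hausdorff distance; here the noise/tube assumptions from Section \ref{sec::conditions} are essential. Finally, because the minimax risk allows any measurable set-valued estimator, it suffices that $\hat M$ be a closed set achieving the stated error; if one also insists that $\hat M$ itself be a smooth submanifold, a standard patch-stitching step can be inserted without changing the rate.
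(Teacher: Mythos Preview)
Your bias--variance heuristic does not yield the rate you claim. With bandwidth $h_n$, the curvature bias of a local fit is $O(h_n^2)$ and the uniform stochastic error of a local \emph{average} over roughly $n h_n^d$ observations is $O\bigl(\sqrt{\log n/(n h_n^d)}\bigr)$, as you write; but equating these gives $h_n^{4+d}=\log n/n$, hence $h_n=(\log n/n)^{1/(4+d)}$ and final rate $(\log n/n)^{2/(4+d)}$ --- the ordinary regression rate for a twice-differentiable target, strictly slower than $(\log n/n)^{2/(2+d)}$. If instead you substitute your stated $h_n=(\log n/n)^{1/(2+d)}$, the stochastic term evaluates to $(\log n/n)^{1/(2+d)}=h_n$, which dominates $h_n^2$ and again misses the target by a square root. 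The underlying reason is that this is not a regular regression problem: the perpendicular noise is uniform on a ball of \emph{fixed} radius $\sigma$, so $q_M$ has a hard jump at $\partial(M\oplus\sigma)$, and the informative feature is the location of that support boundary, not the conditional mean. A least-squares average of $m$ points fluctuates like $\sigma/\sqrt{m}$, whereas the boundary (and hence $M$) can be localized to order $1/m$; your weighted least-squares step discards this information. The paper's estimator captures it through sieve maximum likelihood: within each thin slab the key inequality is $H(M_1\cap R_j,M_2\cap R_j)\leq C\,h^2(Q_{j1},Q_{j2})$ (Lemma \ref{lemma::hellinger-hausdorff-slab}), a \emph{linear} relation between Hausdorff and squared Hellinger distance that holds precisely because disjoint pieces of support contribute order $H$, not $H^2$, to $h^2$. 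That is what converts the slab-level Hellinger rate $(\log n/n)^{1/(2+d)}$ into the Hausdorff rate $(\log n/n)^{2/(2+d)}$.

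There are also structural problems upstream. Since $\sigma>0$ is fixed while $h_n\to 0$, every $h_n$-cell lying inside the tube $M\oplus\sigma$ carries density bounded below by $C_*\,u_M$ (Lemma \ref{lemma::meta}), so the ``active'' cells fill the entire tube rather than an $O(h_n)$-neighborhood of $M$; the claim that binomial concentration singles out cells near $M$ is false for this model. For the same reason, local PCA on the data in such a cell sees an essentially isotropic sample (nearly uniform in an $h_n$-cube), so the top $d$ eigenvectors carry no information about $T_xM$ and the Davis--Kahan step cannot get started. The paper handles both issues by first running a \emph{global} sieve MLE to obtain a pilot $\tilde M$ at a slow rate, and only then forming slabs along the fibers of $\tilde M$; your grid-and-PCA phase does not provide an adequate substitute.
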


Thus the rate is tight,
up to logarithmic factors.
The estimator in Theorem \ref{thm::upper}
is of theoretical interest because it
establishes that the lower bound is tight.
But, the estimator constructed in the proof of that
theorem is not practical and so
in Section \ref{sec::suboptimal},
we construct a very simple estimator $\hat M$ such that
\begin{equation}
\sup_{Q\in{\cal Q}} \mathbb{E}_{Q}\left[ H(\hat M,M)\right] \leq
\left(\frac{C\log n}{n}\right)^{1/D}.
\end{equation}
This is slower than the minimax rate, but 
the estimator is computationally very simple
and requires no knowledge of $d$ or 
the smoothness of $M$.

\vspace{.5cm}

{\em Related Work.}
There is a vast literature on manifold estimation.
Much of the literature deals with
using manifolds for the purpose of dimension reduction.
See, for example,
\cite{baraniuk} and references therein.
We are interested instead in actually estimating the manifold itself.
There is a large literature on this problem in
the field of computational geometry;
see, for example, \cite{Dey},
\cite{deygoswami},
\cite{chazal2008}
\cite{chengdey} and
\cite{boissonnatghosh}.
However, very few papers allow for noise in the statistical sense,
by which we mean observations drawn randomly
from a distribution.
In the literature on computational geometry,
observations are called noisy if they depart from the underlying
manifold in a very specific way:
the observations have to be close to the manifold but not too close
to each other.
This notion of noise is quite different from random sampling
from a distribution.
An exception is \cite{NSW2008}
who constructed the following estimator.
Let
$I = \{ i:\ \hat{p}(Y_i) > \lambda\}$ 
where
$\hat{p}$ is a density estimator.
They define
$\hat M = \bigcup_{i\in I} B_D(Y_i,\epsilon)$ and they show that
if $\lambda$ and $\epsilon$ are chosen properly,
then $\hat M$ is homologous to $M$.
(This means that $M$ and $\hat M$ share certain topological properties.)
However, the result does not guarantee closeness in Hausdorff distance.
Note that
$\bigcup_{i=1}^n B_D(Y_i,\epsilon)$
is precisely the
Devroye-Wise estimator for the
support of a distribution
(\cite{dw}).

\vspace{.5cm}
{\em Notation.}
Given a set $S$, we denote its boundary by
$\partial S$.
We let
$B_D(x,r)$
denote a $D$-dimensional open ball centered at $x$ with radius $r$.
If $A$ is a set and $x$ is a point then we write
$d(x,A) = \inf_{y\in A}||x-y||$ where
$||\,\cdot\,||$ is the Euclidean norm.
Let
\begin{equation}
A\circ B = (A\cap B^c)\, \bigcup \,(A^c \cap B)
\end{equation}
denote symmetric set difference between sets $A$ and $B$.

The uniform measure on a manifold $M$
is denoted by $\mu_M$.
Lebesgue measure on $\mathbb{R}^k$ is denoted by $\nu_k$.
In case $k=D$, we sometimes write $V$ instead of $\nu_D$;
in other words $V(A)$ is simply the volume of $A$.
Any integral of the form $\int f$ is understood
to be the integral with respect to Lebesgue measure on $\mathbb{R}^D$.
If $P$ and $Q$ are two probability measures 
on $\mathbb{R}^D$ with densities $p$ and $q$
then the {\em Hellinger distance} between $P$ and $Q$ is
\begin{equation}
h(P,Q) \equiv h(p,q) =
\sqrt{\int (\sqrt{p}- \sqrt{q})^2} =
\sqrt{2\left( 1 - \int \sqrt{pq}\right)}
\end{equation}
where the integrals are with respect to $\nu_D$.
Recall that 
\begin{equation}\label{eq::hell-l1}
\ell_1(p,q)\leq h(p,q) \leq \sqrt{\ell_1(p,q)}
\end{equation}
where
$\ell_1(p,q)=\int |p-q|$.
Let $p(x)\wedge q(x) = \min\{p(x),q(x)\}$.
The {\em affinity} between $P$ and $Q$ is
\begin{equation}
||P \wedge Q|| = \int p \wedge q = 1 - \frac{1}{2} \int |p-q|.
\end{equation}
Let $P^n$ denote the $n$-fold product measure
based on $n$ independent observations from $P$.
In the appendix Section \ref{sec::app1} we show that
\begin{equation}\label{eq::affinity-product}
||P^n\wedge Q^n|| \geq \frac{1}{8}\left(1 - \frac{1}{2}\int|p-q| \right)^{2n}.
\end{equation}
We write $X_n = O_P(a_n)$ to mean that,
for every $\epsilon>0$ there exists $C>0$ such that
$\mathbb{P}( ||X_n||/a_n > C) \leq \epsilon$ for all large $n$.
Throughout, we use
symbols like
$C, C_0,C_1, c,c_0, c_1\ldots $ to denote generic positive contants
whose value may be different in different expressions.

\section{Model Assumptions}

\subsection{Manifold Conditions}
\label{sec::conditions}

We shall be concerned with $d$-dimensional compact Riemannian submanifolds
without boundary
embedded in $\mathbb{R}^D$ with $d < D$.
(Informally, this means that $M$ looks like $\mathbb{R}^d$ in a small neighborhood
around any point in $M$.)
We assume that $M$ is contained in some compact set
${\cal K}\subset \mathbb{R}^D$.

At each $u\in M$ let $T_u M$ denote the tangent space to $M$
and let $T_u^\perp M $ be the normal space.
We can regard $T_u M$ as a $d$-dimensional hyperplane in $\mathbb{R}^D$
and we can regard
$T_u^\perp M $ as the $D-d$ dimensional hyperplane 
perpendicular to $T_u M$.
Define the 
{\em fiber of size $a$ at $u$} to be
$L_a(u)\equiv L_a(u,M)=T_u^\perp M\bigcap B_D(u,a)$.

Let $\Delta(M)$ be the largest $r$ such that 
each point in $M\oplus r$ has a unique projection onto $M$.
The quantity $\Delta(M)$ will be small
if either $M$ highly curved or
if $M$ is close to being self-intersecting.
Let ${\cal M}\equiv {\cal M}(\kappa)$ denote all $d$-dimensional manifolds
embedded in
${\cal K}$ such that $\Delta(M) \geq \kappa$.
Throughout this paper, $\kappa$ is a fixed positive constant.
The quantity $\Delta(M)$
has been rediscovered many times.
It is called the {\em condition number} in \cite{smale},
the {\em thickness} in 
\cite{gonzalez} and
the {\em reach} in \cite{federer}.

An equivalent definition of $\Delta(M)$ is the following:
$\Delta(M)$ is the largest number $r$ such that
the fibers $L_r(u)$ never intersect.
See Figure \ref{fig::CN}.
Note that if $M$ is a sphere then
$\Delta(M)$ is just the radius of the sphere and if $M$ is a linear space then
$\Delta(M)=\infty$.
Also, if $\sigma < \Delta(M)$ then
$M\oplus \sigma$ is the disjoint union of
its fibers:
\begin{equation}\label{eq::fiber}
M\oplus\sigma = \bigcup_{u\in M}L_\sigma(u).
\end{equation}
Define ${\sf tube}(M,a) = \bigcup_{u\in M}L_a(u).$
Thus, if $\sigma < \Delta(M)$ then
$M\oplus\sigma = {\sf tube}(M,\sigma)$.

\begin{figure}
\begin{center}
\includegraphics[scale=.35]{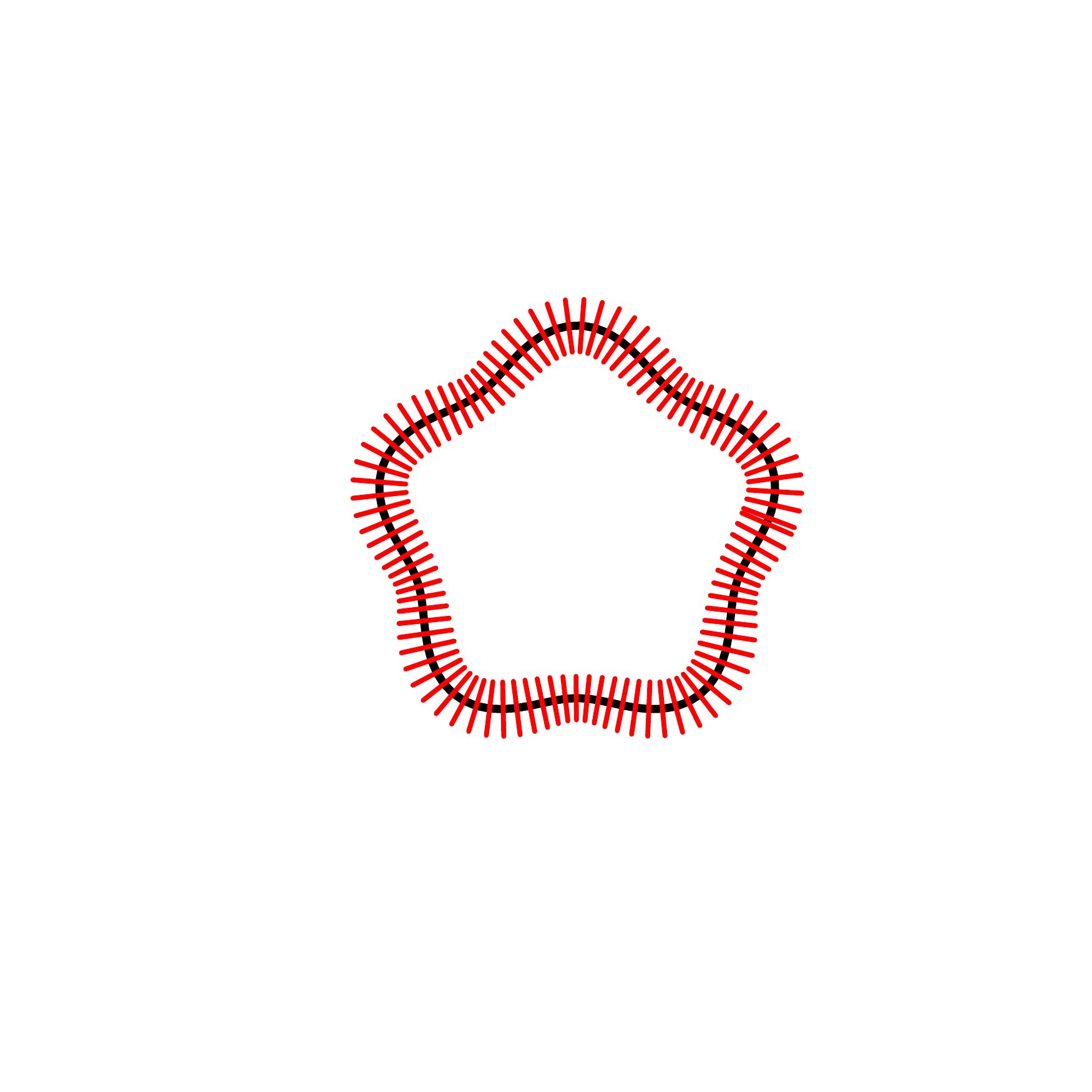}
\includegraphics[scale=.35]{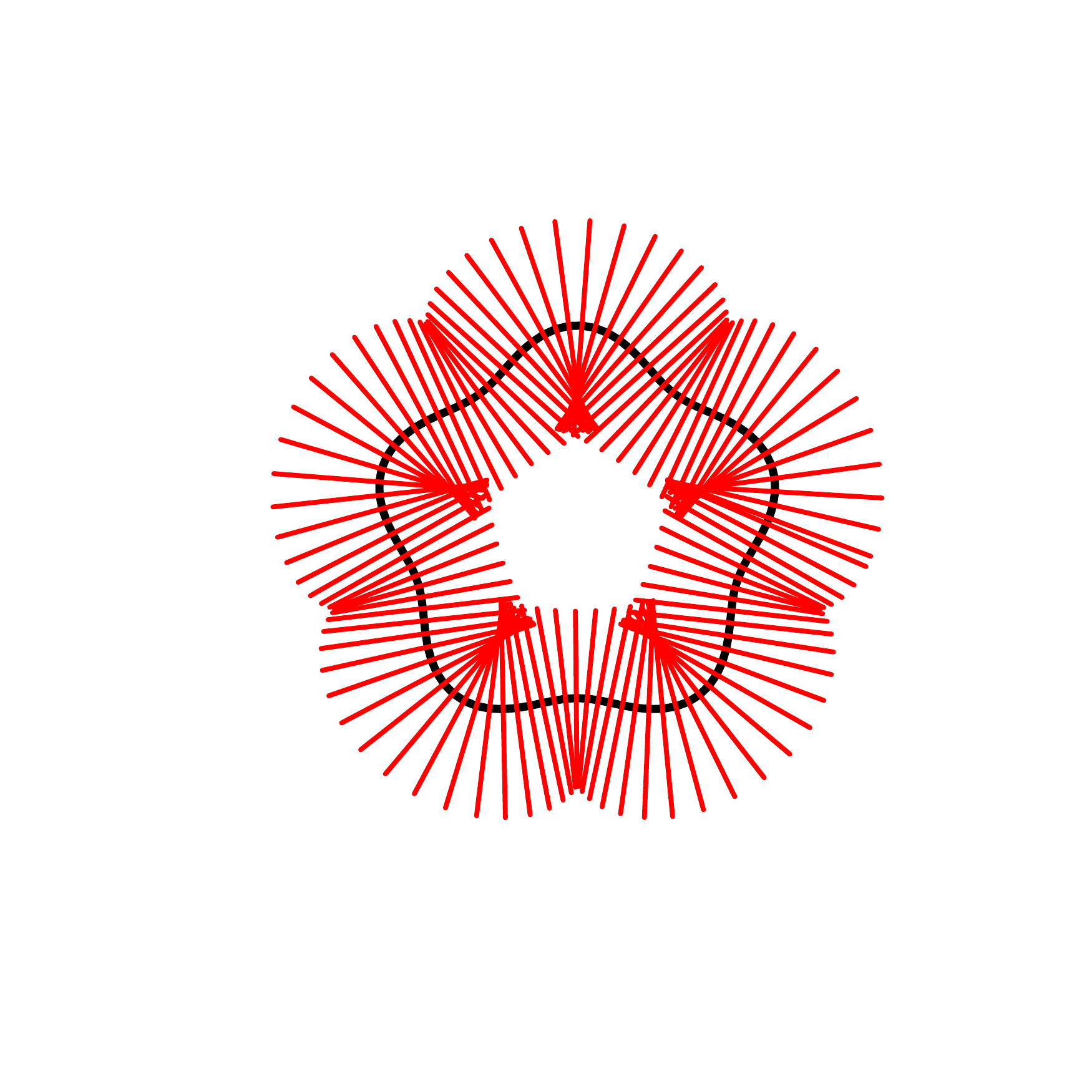}
\end{center}
\vspace{-.9in}
\caption{The condition number $\Delta(M)$ of a manifold
is the largest number $\kappa$ such that the normals to the manifold
do not cross as long as they are not extended beyond $\kappa$.
The plot on the left shows a one-dimensional manifold (a curve)
and some normals of length $r < \kappa$.
The plot on the right shows the same manifold
and some normals of length $r > \kappa$.}
\label{fig::CN}
\end{figure}

Let $p,q\in M$.
The angle between two
tangent spaces $T_p$ and $T_q$ is defined to be
\begin{equation}
{\sf angle}(T_p,T_q) = \cos^{-1}\Bigl(\min_{u\in T_p}\max_{v\in T_q}|\langle u-p, v-q\rangle|\Bigr)
\end{equation}
where
$\langle u,v\rangle$ is the usual inner product in $\mathbb{R}^D$.
Let $d_M(p,q)$ denote the geodesic distance between $p,q\in M$.

We now summarize some useful results 
from \cite{smale}.

\begin{lemma}
\label{lemma::smale}
Let $M\subset {\cal K}$ be a manifold and suppose that
$\Delta(M) = \kappa >0$.
Let $p,q\in M$.
\begin{enumerate}
\item 
Let $\gamma$ be a geodesic connecting $p$ and $q$
with unit speed parameterization.
Then the curvature of $\gamma$ is bounded above by $1/\kappa$.
\item 
$\cos(\angle(T_p,T_q)) > 1- d_M(p,q)/\kappa$.
Thus,
$\angle(T_p,T_q) \leq \sqrt{ 2 d_M(p,q)/\kappa} + o(\sqrt{ d_M(p,q)/\kappa})$.
\item If $a=||p-q|| \leq \kappa/2$ then
$d_M(p,q) \leq \kappa - \kappa \sqrt{1-(2a)/\kappa} = a + o(a)$.
\item
If $a=||p-q|| \leq \kappa/2$ then
$a\geq d_M(p,q) - (d_M(p,q))^2/(2\kappa)$.
\item If $ ||q-p|| > \epsilon$ and
$v\in B_D(q,\epsilon) \cap T_p^\perp M\cap B_D(p,\kappa)$
then $||v-p|| < \epsilon^2/\kappa$.
\item Fix any $\delta>0$.
There exists points
$x_1,\ldots, x_N \in M$ such that
$M\subset \bigcup_{j=1}^N B_D(x_j,\delta)$
and such that $N \leq (c/\delta)^d$.
\end{enumerate}
\end{lemma}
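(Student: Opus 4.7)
The plan is to derive all six items sequentially from the single geometric content of $\Delta(M) = \kappa$: every point within distance $\kappa$ of $M$ has a unique nearest point on $M$, equivalently the normal fibers $L_\kappa(u)$ are pairwise disjoint. Each item then follows from elementary calculus on arc-length parameterized curves once a single curvature bound is established.

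For item (1) I first show that every unit-speed geodesic $\gamma \subset M$ has $\|\gamma''(t)\| \leq 1/\kappa$. Since $\gamma$ is a geodesic its acceleration is normal to $M$. If $\|\gamma''(t_0)\| = 1/r > 1/\kappa$, then the center of the osculating circle of $\gamma$ at $t_0$ lies at distance $r < \kappa$ from $\gamma(t_0)$ along a normal direction, and a short computation shows it has two distinct nearest points on $\gamma$ at parameters $t_0 \pm \varepsilon$ for all sufficiently small $\varepsilon > 0$, contradicting uniqueness of the nearest-point projection on $M \oplus \kappa$.

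For item (2), I fix a minimizing unit-speed geodesic $\gamma$ from $p$ to $q$ of length $L = d_M(p,q)$ and compute
\begin{equation*}
\langle \gamma'(0), \gamma'(L)\rangle \;=\; 1 + \int_0^L \langle \gamma'(0), \gamma''(s)\rangle\,ds \;\geq\; 1 - L/\kappa
\end{equation*}
using $\|\gamma''\| \leq 1/\kappa$. The principal angles between $T_pM$ and $T_qM$ are then controlled by parallel-transporting an orthonormal frame along $\gamma$, an operation whose angular speed is bounded by $1/\kappa$. Items (3)--(4) come from comparing $\gamma$ to a circular arc of radius $\kappa$ on the same arc-length interval: standard chord-versus-arc identities for a circle of radius $\kappa$ directly produce both the bound $d_M(p,q) \leq \kappa - \kappa\sqrt{1 - 2a/\kappa}$ and the quadratic lower bound $a \geq d_M(p,q) - d_M(p,q)^2/(2\kappa)$.

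Item (5) is a second-order osculation statement: the normal plane $T_p^\perp M$ meets $M$ transversally at $p$ within $B_D(p,\kappa)$, and item (3) applied to $p$ and a hypothetical $q \in M$ near $v$ forces the tangent component of $q - p$ to be comparable to $\|q - p\|$ up to a quadratic error $\|q-p\|^2/(2\kappa)$; combining with $\|v - q\| < \epsilon$ and $v - p \in T_p^\perp M$ pins $\|v - p\|$ into the $\epsilon^2/\kappa$ regime. Finally for item (6), a maximal $\delta$-separated set $\{x_1,\ldots,x_N\} \subset M$ is automatically a $\delta$-cover; the geodesic $\delta/3$-neighborhoods of the $x_j$ on $M$ are disjoint, and by items (2)--(4) each is bi-Lipschitz equivalent to a $d$-dimensional Euclidean ball of comparable radius, hence has $d$-volume at least $c\,\delta^d$. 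Since the total $d$-volume of $M$ is bounded in terms of $\cK$ and $\kappa$ alone, $N \leq (c/\delta)^d$ follows.

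The single substantive step is item (1), the extrinsic-to-intrinsic passage from nonoverlap of normal fibers to a uniform bound on the geodesic curvature of $M$. Every subsequent claim reduces to elementary circular-arc geometry or a volume-packing argument once this curvature bound is in place.
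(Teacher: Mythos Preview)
The paper does not prove this lemma at all: it is introduced with the sentence ``We now summarize some useful results from \cite{smale}'' and is stated without proof as a summary of facts from Niyogi, Smale and Weinberger. There is therefore no paper proof to compare your proposal against.

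Your sketch is broadly in line with how these facts are established in the cited source. Item (1) via the osculating-circle/unique-projection contradiction, items (3)--(4) via chord-versus-arc comparison with a circle of radius $\kappa$, and item (6) via a volume-packing argument are all the standard routes. Two places where your outline is thinner than a full proof would require: in item (2), bounding $\langle \gamma'(0),\gamma'(L)\rangle$ controls the angle between two tangent \emph{vectors}, but the lemma concerns the principal angle between the tangent \emph{spaces} $T_pM$ and $T_qM$; your appeal to parallel transport of a full frame with angular speed $\leq 1/\kappa$ needs the second fundamental form bound, not just the geodesic-curvature bound from item (1), so this step deserves a line of justification. In item (5) your description is more heuristic than the others; the actual argument in the source is a direct computation using the fact that the open normal ball of radius $\kappa$ at $p$ does not meet $M\setminus\{p\}$, which forces $q$ to lie outside that ball and hence forces $v$ (which is inside it and within $\epsilon$ of $q$) to be very close to $p$. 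Your version via item (3) and tangent-component estimates can be made to work but is a longer path to the same inequality.
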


For further information about manifolds,
see \cite{Lee2002}.

\subsection{Distributional Assumptions }
\label{section::dist}

The distribution of $Y$ 
is induced by the distribution of
$\xi$ and $Z$.
We will assume that $\xi$ is drawn uniformly on the manifold.
Then we assume that $Z$ is drawn uniformly
on the normal to $M$.
More precisely,
given $\xi$, we draw $Z$
uniformly on $L_\sigma(\xi)$.
In other words, the noise is perpendicular to the manifold.
The result is that,
if $\sigma < \kappa$,
then the distribution $Q=Q_M$ of $Y$ 
has support equal to $M\oplus \sigma$.

The distributional assumption on $\xi$ is not critical.
Any smooth density bounded away from 0 on the manifold will lead to similar results.
However, the assumption on the noise $Z$ is critical.
We have chosen the simplest noise distribution here.
(Perpendicular noise is also assumed in \cite{NSW2008}.)
In current work, we are deriving the rates for more complicated noise distributions.
The rates are quite different and the proofs are more complex.
Those results will be reported elsewhere.

The set of distributions we consider is as follows.
Let $\kappa$ and $\sigma$ be fixed positive numbers
such that $0 < \sigma < \kappa$.
Let
\begin{equation}
{\cal Q} \equiv {\cal Q}(\kappa,\sigma)= \Bigl\{ Q_M: \ M\in {\cal M}(\kappa)\Bigr\}.
\end{equation}

For any
$M\in {\cal M}(\kappa)$
consider the corresponding distribution $Q_M$,
supported on $S_M=M\oplus \sigma$.
Let $q_M$ be the density of $Q_M$
with respect to Lebesgue measure.
We now show that $q_M$ is bounded above and below by a uniform density.

Recall that the essential supremum and essential infimum of $q_M$
are defined by
$$
\esssup_{y\in A} q_M =
\inf\Bigl\{ a\in \mathbb{R}:\ \nu_D(\{y:\ q_M(y) > a\}\cap A)=0\Bigr\}
$$
and
$$
\essinf_{y\in A}q_M = 
\sup\Bigl\{ a\in \mathbb{R}:\ \nu_D(\{y:\ q_M(y) < a\}\cap A)=0\Bigr\}.
$$
Also recall that, by the Lebesgue density theorem,
$q_M(y) = \lim_{\epsilon\to 0} 
Q_M(B_D(y,\epsilon))/V(B_D(y,\epsilon))$
for almost all $y$.
Let $U_M$ be the uniform distribution on $M\oplus\sigma$ and let
$u_M = 1/V(M\oplus \sigma)$ be the density of $U_M$.
Note that, for $A\subset M\oplus\sigma$,
$U_M(A) = V(A)/V(M\oplus\sigma)$.

\begin{lemma}
\label{lemma::meta}
There exist constants $0 < C_* \leq C^* < \infty$,
depending only on $\kappa$ and $d$,
such that
\begin{equation}
C_* \leq \inf_{M \in {\cal M}}\essinf_{y\in S_M} \frac{q_M(y)}{u_M (y)} \leq
\sup_{M \in {\cal M}}\esssup_{y\in S_M} \frac{q_M(y)}{u_M(y)} \leq C^*.
\end{equation}
\end{lemma}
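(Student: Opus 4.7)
The natural approach is to parametrize $S_M = M\oplus\sigma$ via the tubular neighborhood map so that both $q_M$ and $u_M$ can be written in terms of a single Jacobian, and then to show that this Jacobian is uniformly bounded above and below.

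The plan proceeds in three steps. First, since $\sigma<\kappa\le\Delta(M)$, equation~(\ref{eq::fiber}) implies that the map
\begin{equation*}
\Phi:\ \{(u,w): u\in M,\ w\in T_u^\perp M,\ \|w\|<\sigma\}\ \to\ S_M,\qquad \Phi(u,w)=u+w,
\end{equation*}
is a diffeomorphism, with inverse $y\mapsto(\pi(y),\,y-\pi(y))$, where $\pi$ is the nearest-point projection onto $M$. Write $J(u,w)$ for its Jacobian determinant at $(u,w)$, let $|M|$ denote the $d$-dimensional volume of $M$, and let $\omega_\sigma=\nu_{D-d}(B_{D-d}(0,\sigma))$. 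Second, since $\xi$ is uniform on $M$ and $Z\mid\xi$ is uniform on $L_\sigma(\xi)$, the change-of-variables formula under $\Phi$ gives
\begin{equation*}
q_M(y)\ =\ \frac{1}{|M|\,\omega_\sigma\,J(\pi(y),\,y-\pi(y))}, \qquad V(S_M)\ =\ |M|\,\omega_\sigma\,\bar J,
\end{equation*}
where $\bar J$ is the average of $J$ over the normal ball bundle. Since $u_M(y)=1/V(S_M)$ on $S_M$, the ratio $q_M(y)/u_M(y)$ equals $\bar J/J(\pi(y),\,y-\pi(y))$, so the lemma reduces to a two-sided pointwise bound on $J$.

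Third, to bound $J(u,w)$, pick an orthonormal frame at $u$ adapted to $T_uM\oplus T_u^\perp M$ together with a local normal trivialization that is covariantly constant at $u$. In this frame, $d\Phi_{(u,w)}$ is block triangular: the tangential--tangential block equals $I_d - L(w)$, where $L(w)$ is the shape operator of $M$ at $u$ in the direction $w$, and the normal--normal block equals $I_{D-d}$. By Lemma~\ref{lemma::smale}(1), the curvature of any unit-speed geodesic of $M$ is bounded by $1/\kappa$, so the principal curvatures of $M$ in any normal direction lie in $[-1/\kappa,\,1/\kappa]$; hence the eigenvalues of $L(w)$ lie in $[-\|w\|/\kappa,\,\|w\|/\kappa]\subset(-\sigma/\kappa,\,\sigma/\kappa)$, and
\begin{equation*}
(1-\sigma/\kappa)^d\ \le\ J(u,w)\ \le\ (1+\sigma/\kappa)^d.
\end{equation*}
Taking the ratio gives $q_M(y)/u_M(y)\in[((1-\sigma/\kappa)/(1+\sigma/\kappa))^d,\,((1+\sigma/\kappa)/(1-\sigma/\kappa))^d]$, uniformly in $M\in\cM(\kappa)$.

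The only delicate step is verifying the block-triangular form of $d\Phi$ and identifying its tangential block with $I_d-L(w)$; this requires choosing the normal trivialization so that its normal-connection term vanishes at the base point. Once this is done, the curvature bound from Lemma~\ref{lemma::smale}(1) immediately delivers the uniform Jacobian bounds, and hence the lemma.
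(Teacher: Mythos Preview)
Your argument is correct and proceeds along a genuinely different route from the paper. The paper argues directly at the level of measures: it fixes a point $x\in S_M$, takes a small ball $B=B_D(x,\epsilon)$, and bounds $Q_M(B)/U_M(B)$ from above and below by sandwiching $M$ near the projection point between two tangent spheres of radius $\kappa$; the density bound then follows from the Lebesgue density theorem. By contrast, you work infinitesimally via the tubular-neighborhood diffeomorphism $\Phi$, reducing everything to a pointwise bound on the Jacobian $\det(I_d-L(w))$ and invoking the principal-curvature bound implied by Lemma~\ref{lemma::smale}(1). Your approach is the classical Weyl tube-formula computation; it is cleaner and yields the explicit constants $C_*=\bigl((1-\sigma/\kappa)/(1+\sigma/\kappa)\bigr)^d$ and $C^*=\bigl((1+\sigma/\kappa)/(1-\sigma/\kappa)\bigr)^d$ directly, at the cost of needing the block decomposition of $d\Phi$ and a locally parallel normal frame. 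The paper's approach is more elementary in that it avoids setting up the normal bundle machinery, but it has to work harder with volume estimates (e.g.\ invoking Lemma~5.3 of \cite{smale} for the lower bound) and arrives at essentially the same constant $(1+\sigma/\kappa)^d\omega_d$ for the upper bound. Note that both your constants and the paper's actually depend on $\sigma$ as well as $\kappa$ and $d$; since $\sigma$ is fixed throughout, this is harmless.
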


\begin{proof}
Choose any $M\in {\cal M}(\kappa)$.
Let $x$ by any point in the interior of $S_M$.
Let $B = B_D(x,\epsilon)$ where $\epsilon>0$ is small enough
so that $B\subset S_M = M\oplus \sigma$.
Let $y$ be the projection of $x$ onto $M$.
We want to upper and lower bound
$Q(B)/V(B)$. Then we will take the limit as $\epsilon \to 0$.
Consider the two spheres of radius $\kappa$ tangent to $M$ at $y$
in the direction of the line between $x$ and $y$.
(See Figure \ref{fig::densitylemma}.)
Note that $Q(B)$ is maximized by taking $M$ to be equal to the upper sphere and
$Q(B)$ is minimized by taking $M$ to be equal to the lower sphere.
Let us consider first the case where $M$ is equal to the upper sphere.
Let
$$
U = \Bigl\{u\in M:\ L_\sigma(u)\cap B \neq \emptyset\Bigr\}
$$
be the projection of $B$ onto $M$.
By simple geometry,
$U = M \cap B_D(y,r\epsilon)$
where
$$
\left(1 + \frac{\sigma}{\kappa}\right)^{-1} \leq r \leq
\left(1 + \frac{\sigma}{\kappa}\right).
$$
Let
${\sf Vol}$ denote $d$-dimensional volume on $M$.
Then
${\sf Vol}(B_D(y,r\epsilon)\cap M) \leq c_1 r^d \epsilon^d \omega_d$
where $\omega_d$ is the volume of a unit $d$-ball and
$c_1$ depends only on $\kappa$ and $d$.
To see this, note that
because $M$ is a manifold and $\Delta(M) \ge \kappa$, it follows that near $y$, 
$M$ may be locally parameterized as
a smooth function $f = (f_1, \ldots, f_{D-d})$ over $B \intersect T_y M$.
The surface area of the graph of $f$ over $B \intersect T_y M$
is bounded by $\int_{B_D(y,r\epsilon)\intersect T_y M} \sqrt{1 + \norm{\nabla f_i}^2}$,
which is bounded by a constant $c_1$ uniformly over ${\cal M}$.
Hence, ${\sf Vol}(B_D(y,r\epsilon) \intersect M) \le 
c_1 {\sf Vol}(B_D(y,r\epsilon) \intersect T_y M) = c_1 r^d \epsilon^d\omega_d$.

Let $\Lambda_M$ be the uniform distribution on $M$ and
let $\Gamma_u$ denote the uniform measure on
$L_\sigma(u)$.
Note that, for $u\in U$,
$L_\sigma(u)\cap B$ is a $(D-d)$-ball whose radius is at most $\epsilon$.
Hence,
$$
\Gamma_u(L_\sigma(u)\cap B) \leq \frac{\epsilon^{D-d} \omega_{D-d}}{\sigma^{D-d} \omega_{D-d}}=
\left(\frac{\epsilon}{\sigma}\right)^{D-d}.
$$
Thus,
\begin{eqnarray*}
Q_M(B) &=& \int_M \Gamma_u(B\cap L_\sigma(u)) d\Lambda_M(u) =
\int_U \Gamma_u(B\cap L_\sigma(u)) d\Lambda_M(u)\\
& \leq & \left(\frac{\epsilon}{\sigma}\right)^{D-d} \Lambda(U) =
\left(\frac{\epsilon}{\sigma}\right)^{D-d} 
\frac{{\sf Vol}(B_D(y,r)\cap M)}{{\sf Vol}(M)}\\
& \leq &
\left(\frac{\epsilon}{\sigma}\right)^{D-d} 
\frac{\epsilon^d r^d \omega_d}{{\sf Vol}(M)} \leq
\left(\frac{\epsilon}{\sigma}\right)^{D-d} 
\frac{\epsilon^d (1+\sigma/\kappa)^d \omega_d}{{\sf Vol}(M)}.
\end{eqnarray*}
Now,
$U_M(B) = V(B)/V(M\oplus \sigma) = \epsilon^D \omega_D/( \sigma^{D-d}\,{\sf Vol}(M))$.
Hence,
$$
\frac{Q_M(B)}{U_M(B)} \leq
\left(1 + \frac{\sigma}{\kappa}\right)^d \omega_d.
$$
Taking limits as $\epsilon\to 0$ we have that
$q_M(y) \leq C^* u_M(y)$ for almost all $y$.

The proof of the lower bound is similar to the upper bound except for the following changes:
let $U_0$ denote all $u\in U$
such that
the radius of $B\cap L_\sigma(u)$ is at least $\epsilon/2$.
Then
$\Lambda(U_0) \geq \Lambda(U) (1-O(\epsilon))$
and the projection of $U_0$ onto $M$ is again of the form
$B_D(y,r\epsilon)\cap M$.
By Lemma 5.3 of \cite{smale},
$$
{\sf Vol}(B_D(y,r)\cap M) \geq \left(1 - \frac{r^2 \epsilon^2}{4\kappa^2}\right)^{d/2} r^d \epsilon^d\omega_d
$$
and the latter is larger than
$2^{-d/2}r^d \epsilon^d\omega_d$ for all small $\epsilon$.
Also,
$\Gamma_u(L_\sigma(u)\cap B) \geq
(\epsilon/(2\sigma))^{D-d}$ for all $u\in U_0$.
\end{proof}

Of course, an immediate consequence of the above lemma is that,
for every $M\in {\cal M}(\kappa)$ and every measurable set $A$,
$C_* \, U_M(A) \leq Q_M(A) \leq C^* \, U_M(A)$.

\begin{figure}
\vspace{-.5in}
\begin{center}
\includegraphics[scale=.7]{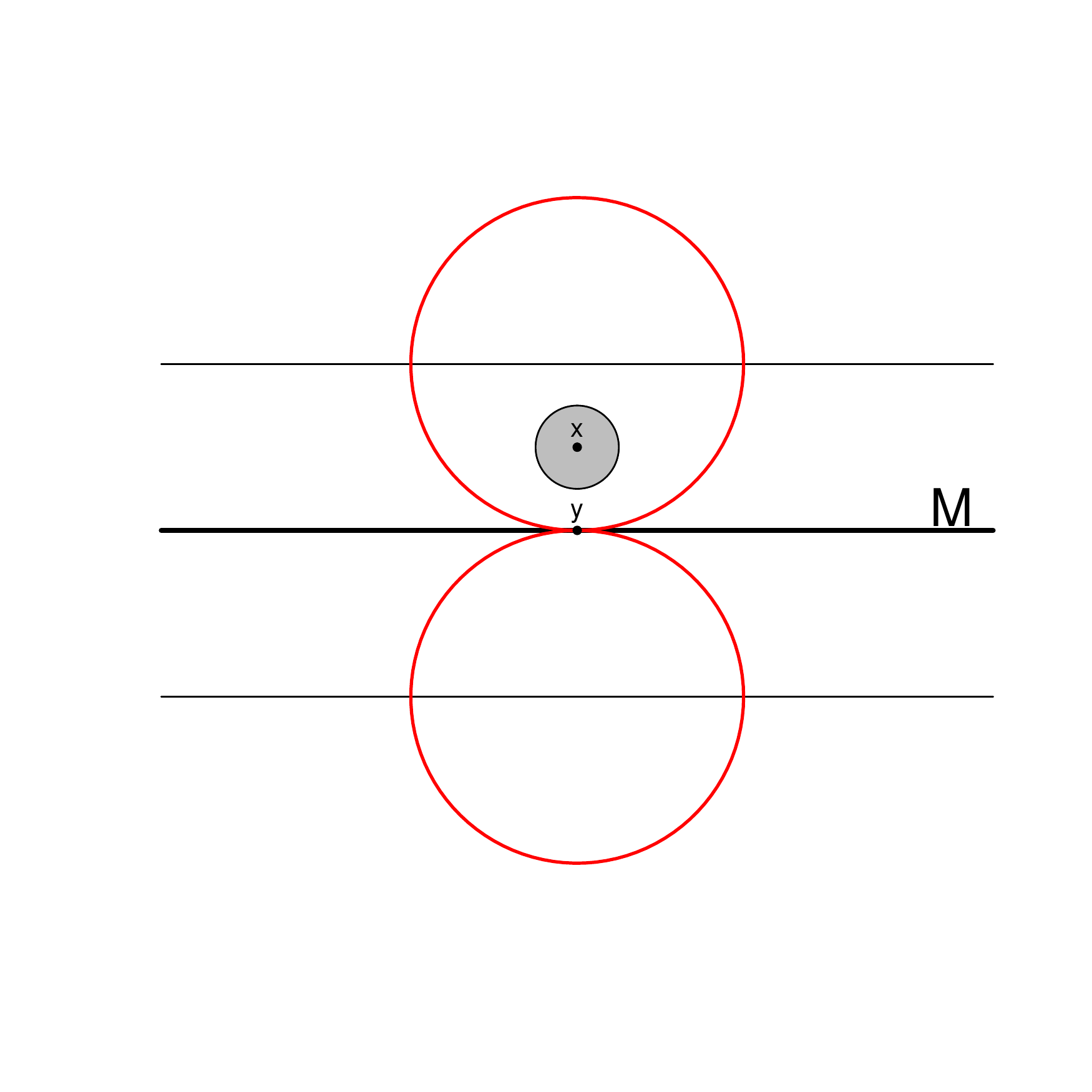}
\end{center}
\vspace{-1.4in}
\caption{Figure for proof of Lemma \ref{lemma::meta}.
$x$ is a point in the support $M\oplus \sigma$. $y$ is the projection
of $x$ onto $M$. The two spheres are tangent to $M$ at $y$ and have radius $\kappa$.}
\label{fig::densitylemma}
\end{figure}

\section{Minimax Lower Bound}
\label{sec::minimax}

In this section we derive a lower bound on the minimax rate of convergence for this problem.
We will make use of the following result due to \cite{lecam}.
The following version is from Lemma 1 of
\cite{binyu}.

\begin{lemma}[Le Cam 1973]
Let ${\cal Q}$
be a set of distributions.
Let $\theta(Q)$ take values in a metric space with metric $\rho$.
Let $Q_0,Q_1\in {\cal Q}$ be any pair of distributions in ${\cal Q}$.
Let $Y_1,\ldots, Y_n$ be drawn iid from some $Q\in {\cal Q}$
and denote the corresponding product measure by
$Q^n$.
Let $\hat\theta (Y_1,\ldots, Y_n)$ be any estimator.
Then
\begin{equation}
\sup_{Q\in {\cal Q}} 
\mathbb{E}_{Q^n}\Bigl[\rho(\hat\theta(Y_1,\ldots, Y_n),\theta(Q))\Bigr] \geq
\rho\bigl(\theta(Q_0),\theta(Q_1)\bigr) \  ||Q_0^n \wedge Q_1^n||.
\end{equation}
\end{lemma}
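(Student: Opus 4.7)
The plan is to reduce the minimax supremum to a two-point hypothesis testing problem between $Q_0^n$ and $Q_1^n$, and then invoke the elementary identity relating the minimal total error of such a test to the affinity $\|Q_0^n \wedge Q_1^n\|$. No manifold structure enters here; this is a purely abstract two-point Le Cam argument, so the same proof would work for any metric-space-valued parameter $\theta$.

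First I would lower bound the supremum by a two-point average,
\begin{equation*}
\sup_{Q\in \cQ} \mathbb{E}_{Q^n}[\rho(\hat\theta, \theta(Q))] \;\geq\; \tfrac{1}{2}\Bigl(\mathbb{E}_{Q_0^n}[\rho(\hat\theta, \theta(Q_0))] + \mathbb{E}_{Q_1^n}[\rho(\hat\theta, \theta(Q_1))]\Bigr),
\end{equation*}
and convert the estimator into the test $\phi = \mathbf{1}\{\rho(\hat\theta,\theta(Q_1)) < \rho(\hat\theta,\theta(Q_0))\}$, which simply picks whichever of $\theta(Q_0),\theta(Q_1)$ is closer to $\hat\theta$. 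Writing $\Delta = \rho(\theta(Q_0),\theta(Q_1))$, the triangle inequality forces $\rho(\hat\theta,\theta(Q_0)) \geq \Delta/2$ on $\{\phi=1\}$ and $\rho(\hat\theta,\theta(Q_1)) \geq \Delta/2$ on $\{\phi=0\}$, so taking expectations yields
\begin{equation*}
\mathbb{E}_{Q_j^n}[\rho(\hat\theta,\theta(Q_j))] \;\geq\; \tfrac{\Delta}{2}\, Q_j^n(\phi = 1-j), \qquad j=0,1.
\end{equation*}

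Next I would apply the pointwise inequality $\phi\, q_0^n + (1-\phi)\, q_1^n \geq q_0^n \wedge q_1^n$, which after integrating gives $Q_0^n(\phi=1) + Q_1^n(\phi=0) \geq \|Q_0^n \wedge Q_1^n\|$ for every test $\phi$. Summing the two risk bounds from the previous step and combining with this affinity inequality produces the product $\Delta \cdot \|Q_0^n \wedge Q_1^n\|$, up to an absolute multiplicative constant (a factor of $1/4$ in the symmetric version sketched here; the cleaner form stated in the excerpt is obtained either by using $\sup \geq \max$ in place of the averaging step, or by absorbing the constant into $C_1$ of Theorem \ref{thm::minimax}).

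The entire argument is a standard two-point Le Cam reduction, so there is no substantial obstacle: the triangle inequality and the Neyman--Pearson pointwise bound are the only two ingredients, and bookkeeping the constants to match the cited form is the only real care required. The actual work in applying this lemma to the present problem lies downstream, where one must construct two specific manifolds $M_0, M_1 \in \cM(\kappa)$ with $H(M_0,M_1) \asymp n^{-2/(2+d)}$ while keeping $\|Q_{M_0}^n \wedge Q_{M_1}^n\|$ bounded away from zero; that perturbation construction is where the minimax rate actually originates.
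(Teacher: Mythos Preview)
The paper does not supply its own proof of this lemma: it is stated as a citation to Le Cam (1973) via Lemma~1 of Yu, with no argument given. So there is nothing in the paper to compare your proposal against.

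Your sketch is the standard two-point reduction and is correct in substance. One small point worth tightening: as you note, the symmetric averaging route yields $\tfrac{1}{4}\,\Delta\,\|Q_0^n\wedge Q_1^n\|$, not the constant-free form displayed in the paper. Replacing the average by $\sup \geq \max$ does not by itself recover the missing factor; the cleanest way to match the stated inequality exactly is to drop Markov-style bounding and instead write, for each $j$, $\mathbb{E}_{Q_j^n}[\rho(\hat\theta,\theta(Q_j))] \geq \Delta\,\int \mathbf{1}\{\phi=1-j\}\,(q_0^n\wedge q_1^n)$, then add and use $\sup\geq\tfrac12(\cdot+\cdot)$ together with the fact that the two indicator sets partition the sample space. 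Either way the discrepancy is only a universal constant, and as you correctly observe it is immaterial for the application to Theorem~\ref{thm::minimax}, where it is absorbed into $C_1$.
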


To get a useful bound from Le Cam's lemma, we need to construct an appropriate pair
$Q_0$ and $Q_1$.
This is the topic of the next subsection.

\subsection{A Geometric Construction }
\label{section::geometric}

In this section,
we construct a pair of manifolds 
$M_0,M_1 \in {\cal M}(\kappa)$ 
and corresponding distributions
$Q_0, Q_1$ for use in Le Cam's lemma.
An informal description 
is as follows.
Roughly speaking, $M_0$ and $M_1$ 
minimize the Hellinger distance
$h(Q_{0},Q_{1})$ subject to
their Hausdorff distance
$H(M_0,M_1)$ being equal to a given value $\gamma$.

Let 
\begin{equation}
M_0 = \Bigl\{ (u_1,\ldots, u_d,0,\ldots, 0):\ -1 \leq u_j \leq 1,\ 1\leq j \leq d\Bigr\}
\end{equation}
be a $d$-dimensional hyperplane in $\mathbb{R}^D$.
Hence $\Delta(M_0) = \infty$.
Place a hypersphere of radius $\kappa$
below $M_0$.
Push the sphere upwards into $M_0$ causing a bump of height $\gamma$
at the origin.
This creates a new manifold $M_0'$ such that
$H(M_0,M_0') = \gamma$.
However, $M_0'$ is not smooth.
We will roll a sphere of radius $\kappa$ around $M_0'$
to get a smooth manifold $M_1$ as in
Figure \ref{fig::construction}.
The formal details of the construction are in 
Section \ref{sec::flying-saucer}.

\begin{figure}
\begin{center}
\begin{tabular}{c}
\fbox{\includegraphics[width=7cm]{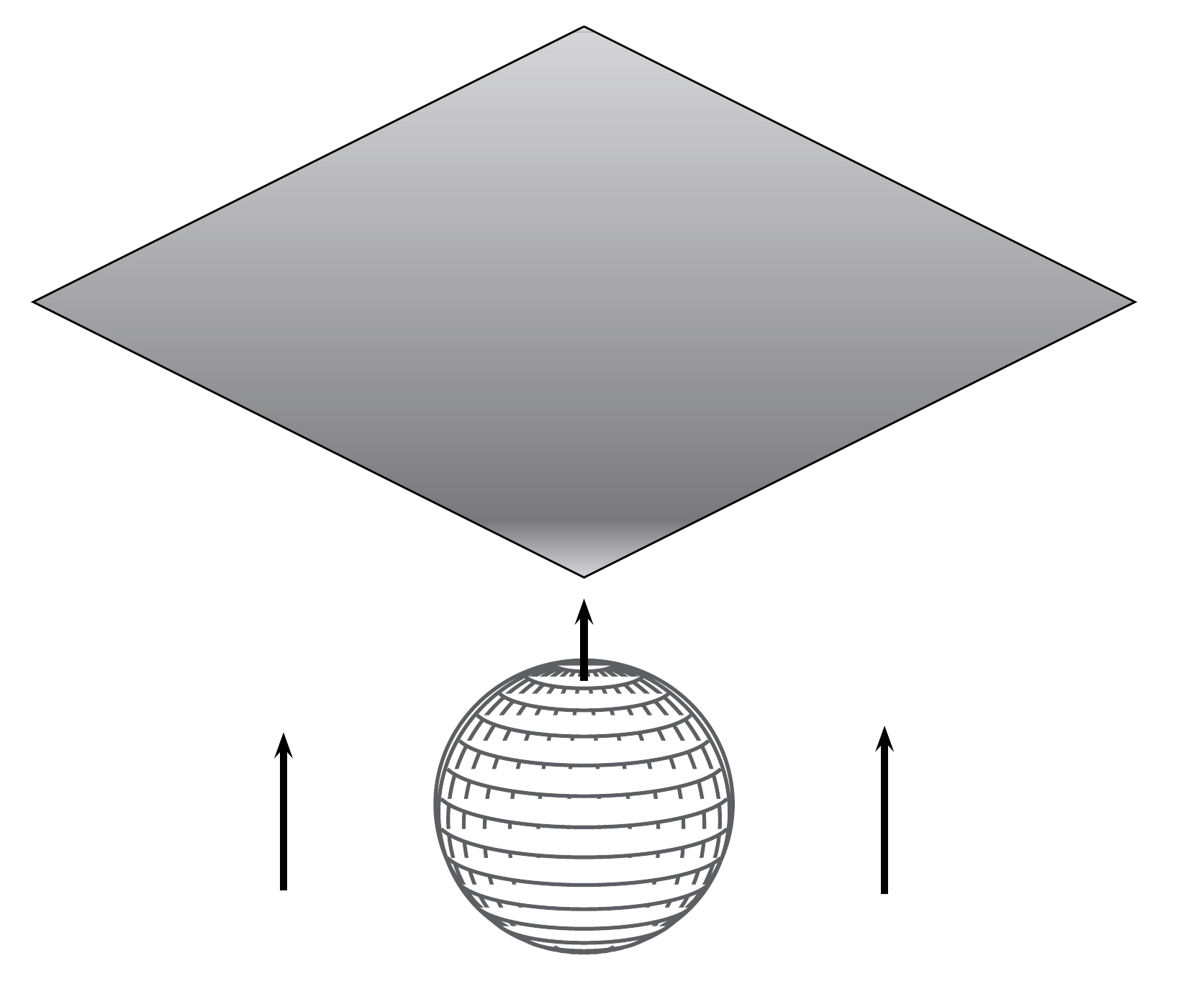}
A }\\
\\
\fbox{\includegraphics[width=7cm]{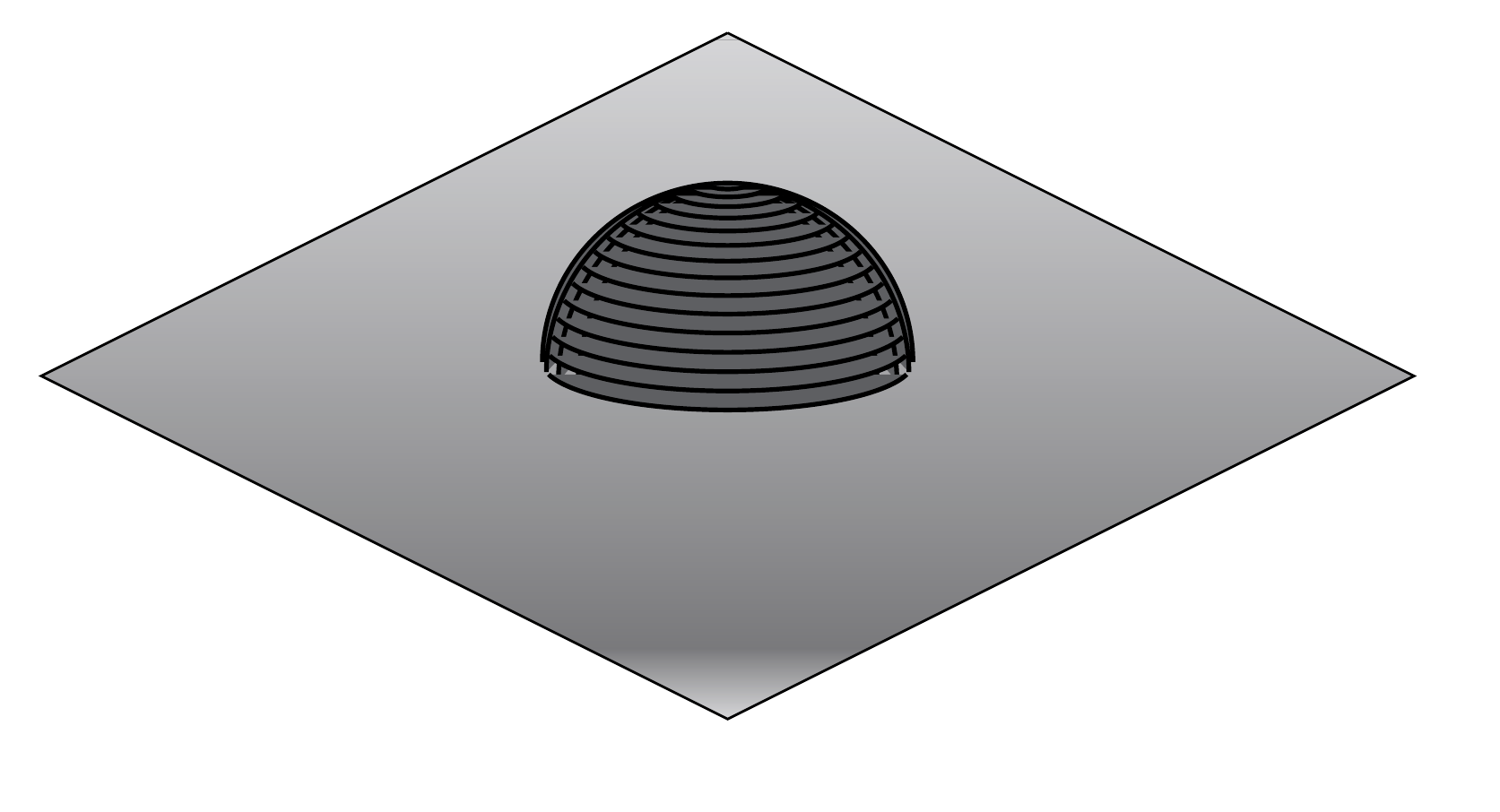} 
B }\\
\\
\fbox{\includegraphics[width=7cm]{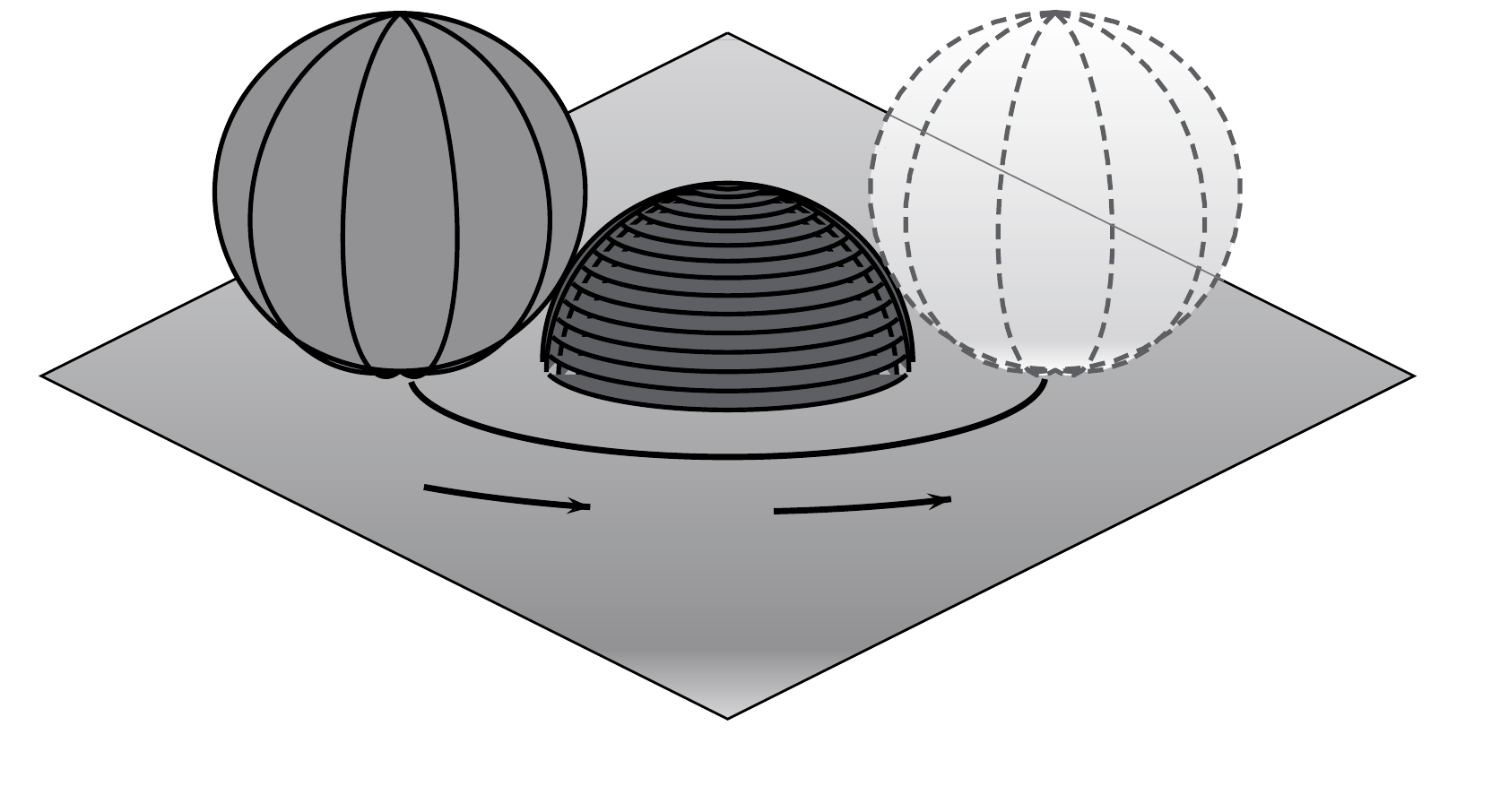} 
C }\\
\\
\fbox{\includegraphics[width=7cm]{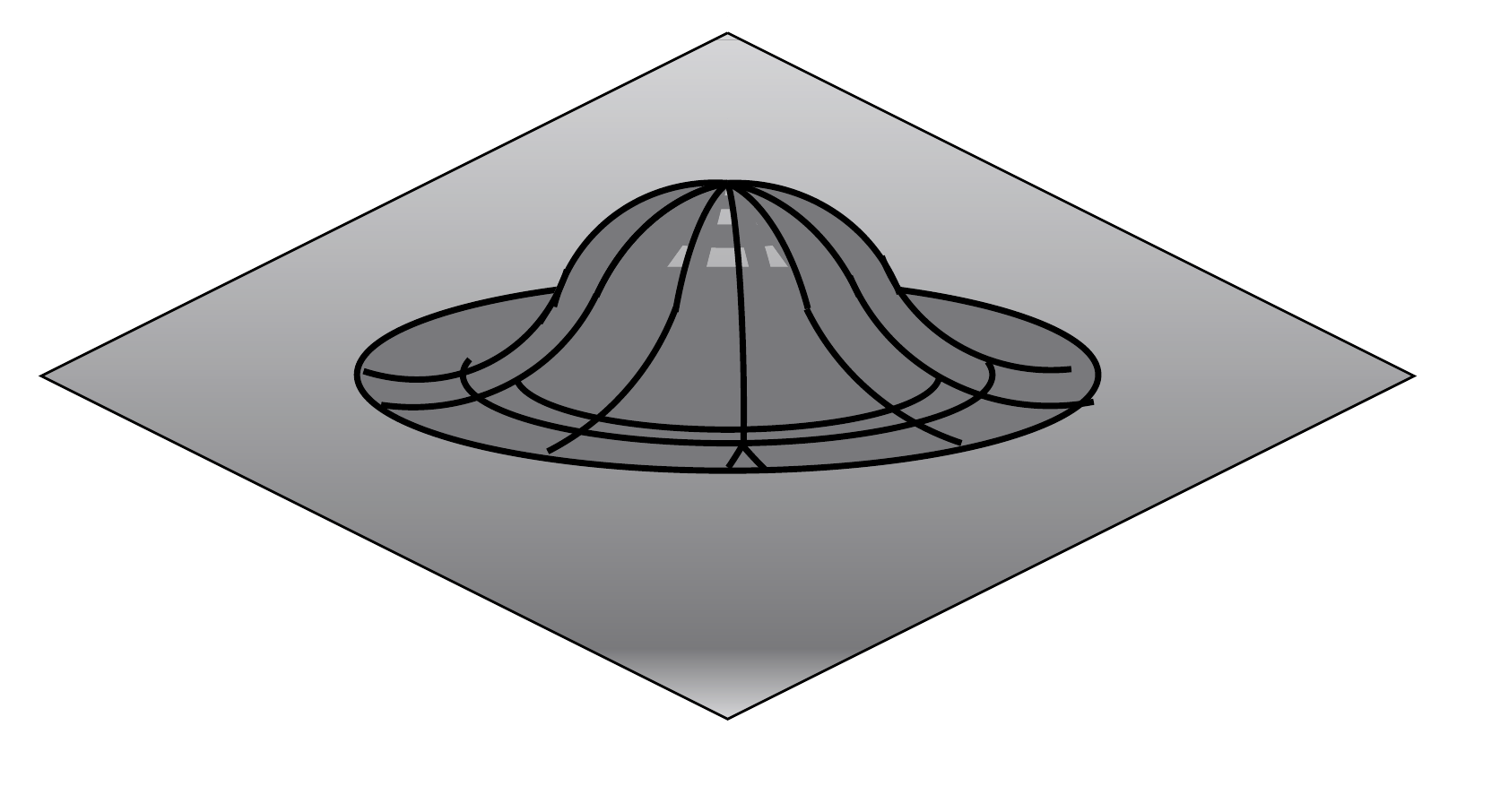} 
D }
\end{tabular}
\end{center}
\caption{A sphere of radius $\kappa$ is pushed upwards into the plane $M_0$ (panel A).
The resulting manifold $M_0'$ is not smooth (panel B).
A sphere is then rolled around the manifold (panel C) to produce a
smooth manifold $M_1$ (panel D). }
\label{fig::construction}
\end{figure}

\begin{theorem}
\label{thm::geometric}
Let $\gamma$ be a small positive number.
Let $M_0$ and $M_1$ be as defined in Section \ref{sec::flying-saucer}.
Let $Q_i$ be the corresponding distributions on $M_i\oplus \sigma$ for $i=0,1$.
Then:
\begin{enum}
\item $\Delta(M_i)\geq\kappa$, $i=0,1$.
\item $H(M_0,M_1)=\gamma$. 
\item $\int|q_0 - q_1| = O(\gamma^{(d+2)/2})$.
\end{enum}
\end{theorem}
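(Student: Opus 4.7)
The plan is to treat the three parts separately, with the real analytic work concentrated in part (3).

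For part (1), $M_0$ is a flat $d$-plane so $\Delta(M_0)=\infty$; for $M_1$ I would invoke the rolling-ball characterization of the reach (namely, $\Delta(M)\ge\kappa$ iff a closed ball of radius $\kappa$ can be placed tangent to $M$ at any point on either side without meeting $M$ elsewhere), which is explicit by construction: the convex cap of $M_1$ is inherited from the pushed-up sphere of radius $\kappa$, and the concave neck is traced by the rolling sphere of the same radius. For part (2), the bump has maximum vertical displacement $\gamma$ at the apex and agrees with $M_0$ away from the bump, so both inclusions $M_0\subset M_1\oplus\gamma$ and $M_1\subset M_0\oplus\gamma$ follow by vertical projection, with equality realized at the apex, giving $H(M_0,M_1)=\gamma$.

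For part (3), I split
\begin{equation*}
\int|q_0-q_1|\,dy \;=\; \int_{S_0\circ S_1}(q_0+q_1)\,dy \;+\; \int_{S_0\cap S_1}|q_0-q_1|\,dy,
\end{equation*}
where $S_i=M_i\oplus\sigma$. A direct geometric estimate gives $V(S_0\circ S_1)=O(\gamma^{(d+2)/2})$: the bump's footprint on $M_0$ has $d$-volume $O(\gamma^{d/2})$ (a ball of radius $O(\sqrt{\kappa\gamma})$), the two tubes differ in the axial direction over this footprint by at most $O(\gamma)$, and extend $O(\sigma^{D-d-1})$ in the remaining transverse normal directions. Combined with the density bound from Lemma~\ref{lemma::meta}, this term contributes $O(\gamma^{(d+2)/2})$.

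For the intersection integral I use the explicit formula $q_M(y)=[{\sf Vol}(M)\sigma^{D-d}\omega_{D-d} J_M(\xi,v)]^{-1}$, where $y=\xi+v$ with $\xi=\pi_M(y)$ and $J_M(\xi,v)$ is the Jacobian of the normal-exponential map $(\xi,v)\mapsto\xi+v$. Changing variables by $dy=J_1\,dv\,dV_{M_1}(\xi_1)$ and splitting by whether $\xi_1\in M_1$ lies on the flat or curved (bump) part: on the flat part $J_1\equiv 1$, so the integrand reduces to $|V_1-V_0|/(V_0V_1)=O(\gamma^{(d+2)/2})$, contributing $O(\gamma^{(d+2)/2})$ overall. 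On the curved part I would expand $J_1(\xi_1,v)=1+\ell_{\xi_1}(v)+O(|v|^2/\kappa^2)$, where $\ell_{\xi_1}$ is the linear functional in $v$ coming from the principal curvatures at $\xi_1$ (with $\|\ell_{\xi_1}\|=O(1/\kappa)$). The quadratic remainder integrates to $O(\gamma^{(d+2)/2})$; the linear term vanishes over any fiber $L_\sigma(\xi_1)$ entirely contained in $S_0$ by symmetry; and the residual from fibers clipped by $\partial S_0$ is controlled by combining the asymmetric clipping (an $O(\gamma)$ signed contribution per point of the bump) with the cap--neck cancellation in the principal curvatures of $M_1$, which balance by the rolling-sphere construction since $M_1$ is globally flat outside the bump.

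The main obstacle is this curved-projection subcase: a naive pointwise bound $|q_0-q_1|=O(1/\kappa)$ times the tube volume $O(\gamma^{d/2})$ over the bump only yields $O(\gamma^{d/2})$, missing a factor of $\gamma$. Extracting the missing factor requires simultaneously exploiting fiber symmetry, the asymmetric clipping by $\partial S_0$, and the cap--neck curvature balance built in by the rolling-sphere smoothing, and keeping track of these three cancellations together is where the analytic substance of the proof lies.
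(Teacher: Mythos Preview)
Parts (1) and (2) match the paper's reasoning; one minor point is that the actual $M_0$ in the appendix is not the bounded plane of Section~3.1 but a closed ``flying-saucer'' manifold (a flat $d$-disk capped by a hemispherical rim), so the boundary issues are absent by construction and $\Delta(M_0)\ge\kappa$ holds for the same rolling-sphere reason as $M_1$.

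For part (3) the paper's argument is far more elementary than yours. It never splits $\int|q_0-q_1|$ into symmetric-difference and intersection pieces, and it never touches the normal-exponential Jacobian. Working in explicit coordinates $y=(u,v,z)\in\mathbb{R}^d\times\mathbb{R}\times\mathbb{R}^{D-d-1}$, it shows by direct inequalities that every point of $S_1\setminus S_0$ satisfies $\|u\|\le\sqrt{4\kappa\gamma-\gamma^2}$, $\|z\|\le\sigma$, and $v$ confined to an interval of length $\gamma$; hence $V(S_0\circ S_1)=O(\gamma^{(d+2)/2})$, and the $\ell_1$ bound is then asserted in one line. That last step is immediate when $Q_i$ is the \emph{uniform} law on $S_i$ --- which is exactly how the construction is invoked in the proof of Theorem~1 --- since then $\int_{S_0\cap S_1}|q_0-q_1|\le V(S_0\circ S_1)/\min_i V(S_i)$.

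Your route via the induced (non-uniform) $Q_{M_i}$ is genuinely different and harder, and there is a gap in the intersection analysis. The fiber-symmetry you invoke cancels $\int_{L_\sigma(\xi_1)}\ell_{\xi_1}(v)\,dv$, i.e.\ the \emph{signed} integral, but the quantity you need is $\int|q_0-q_1|$. Over the bump, $q_0-q_1$ is dominated pointwise by the linear-in-$v$ term of size $O(|v|/\kappa)=O(\sigma/\kappa)$; after taking absolute values this does \emph{not} cancel across the fiber, so each fiber contributes $O(1)$ and the bump contributes $O(\gamma^{d/2})$ --- precisely the shortfall you flag. The cap--neck ``curvature balance'' cannot rescue this, because it too is a signed cancellation (and in any case the cap and neck annulus have $d$-volumes differing by a factor $2^d-1$, so the balance is not automatic). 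The paper sidesteps the entire issue by working with uniform $Q_i$; if you want to stay with the induced distributions, the intersection term needs a different argument than symmetry.
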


\begin{proof}
See Section \ref{sec::flying-saucer}.
\end{proof}

\subsection{Proof of the Lower Bound}

Now we are in a position to prove the first theorem.
Let us first restate the theorem.

\vspace{1cm}

\noindent
{\bf Theorem 1.}
{\em There is a constant $C>0$ such that, for all large $n$,
\begin{equation}
\inf_{\hat M}
\sup_{Q\in {\cal Q}}
\mathbb{E}_{Q}\left[ H(\hat M,M)\right] \geq
C  n^{-\frac{2}{2+d}}
\end{equation}
where the infimum is over all estimators $\hat M$.}

\vspace{1cm}

\noindent
{\bf Proof of Theorem \ref{thm::minimax}.}
Let $M_0$ and $M_1$ be as defined
in Section \ref{section::geometric}.
Let $Q_i$ be the uniform distribution on
$M_i \oplus \sigma$, $i=0,1$.
Let $q_i$ be the density of $Q_i$ with respect to 
Lebesgue measure $\nu_D$, $i=0,1$.
Then, from Theorem 
\ref{thm::geometric},
$H(M_0,M_1) = \gamma$ and
$\int |q_0 - q_1| = O(\gamma^{(d+2)/2})$.
Le Cam's lemma then gives, for any $\hat M$,
$$
\sup_{Q\in {\cal Q}}
\mathbb{E}_{Q^n} [H (M,\hat M)] \geq
H (M_0,M_1) \ ||Q_0^n \wedge Q_1^n|| \geq 
\gamma (1-c\gamma^{(d+2)/2})^{2n}
$$
where we used equation (\ref{eq::affinity-product}).
Setting 
$\gamma = n^{-2/(d+2)}$ yields the result.
$\blacksquare$

\section{Upper bound}
\label{sec::upper-bound}

To establish the upper bound, we will construct an estimator that achieves
the appropriate rate.
The estimator is intended only for the theoretical
purpose of establishing the rate.
(A simpler but non-optimal method is discussed in Section \ref{sec::suboptimal}.)
Recall that
${\cal M}= {\cal M}(\kappa)$ is the set of
all $d$-dimensional submanifolds $M$
contained in ${\cal K}$
such that
$\Delta(M)\geq \kappa >0$.
Before proceeding, we need to discuss sieve maximum likelihood.

\vspace{1cm}

{\bf Sieve Maximum Likelihood.}
Let ${\cal P}$ be any set of distributions such that
each $P\in {\cal P}$ has a density $p$
with respect to Lebesgue measure $\nu_D$.
Recall that $h$ denotes Hellinger distance.
A set of pairs of functions
${\cal B}=\{ (\ell_1,u_1),\ldots, (\ell_N,u_N)\}$
is an $\epsilon$-Hellinger bracketing for ${\cal P}$ if,
(i) for each $p\in {\cal P}$ there is a
$(\ell,u)\in {\cal B}$ such that
$\ell(y) \leq p(y) \leq u(y)$ for all $y$ and
(ii) $h(\ell,u) \leq \epsilon$.
The logarithm of the size of the smallest $\epsilon$-bracketing
is called the {\em bracketing entropy} and is denoted by
$\cH_{[\,]}(\epsilon,{\cal P},h)$.

We will make use of the following result
which is Example 4 of \cite{shen}.

\begin{theorem}[\cite{shen}]
\label{thm::shen}
Let $\epsilon_n$ solve the equation
$\cH_{[\,]}(\epsilon_n,{\cal P},h) = n \epsilon_n^2$.
Let
$(\ell_1,u_1),\ldots, (\ell_N,u_N)$
be an $\epsilon_n$ bracketing where
$N=\cH_{[\,]}(\epsilon_n,{\cal P},h)$.
Define the set of densities
$S^*_n = \{p_1^*,\ldots, p_N^*\}$
where $p_t^* = u_t/\int u_t$.
Let $\hat p^*$ maximize
the likelihood 
$\prod_{i=1}^n p_t^*(Y_i)$
over the set $S^*_n$.
Then
\begin{equation}
\sup_{P\in {\cal P}}
P^n\left(\{ h(p,\hat p^*) \geq \epsilon_n\}\right) \leq c_1 e^{- c_2 n \epsilon_n^2}.
\end{equation}
\end{theorem}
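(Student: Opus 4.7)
\noindent\textbf{Proof proposal for Theorem \ref{thm::shen}.}
The plan is to follow the classical Le Cam--Wong--Shen sieve MLE argument: on the event that $\hat p^*$ is far in Hellinger distance from the true density $p$, the ratio of the likelihood at $\hat p^*$ to that at the normalized upper bracket $p^*_{t(p)}$ containing $p$ must exceed one, and this is an exponentially rare event once the bracketing entropy calibration $\cH_{[\,]}(\epsilon_n,\cP,h)=n\epsilon_n^2$ has been invoked.

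First I would show that the normalized upper bracket $p^*_{t(p)} = u_{t(p)}/\int u_{t(p)}$ is itself a good Hellinger approximation to $p$. Since $\ell_{t(p)}\leq p\leq u_{t(p)}$, integrating gives $1 \leq \int u_{t(p)} \leq 1 + \int(u_{t(p)}-\ell_{t(p)})$, and the bracket's $L^1$ width is controlled by Cauchy--Schwarz, $\int|u-\ell|\leq h(u,\ell)\cdot\norm{\sqrt u +\sqrt \ell}_2$, combined with uniform boundedness of the densities on a compact support (which in the manifold application comes from Lemma~\ref{lemma::meta}). A short expansion of $h^2(p,p^*_{t(p)}) = 2\bigl(1 - \int\sqrt{p\,p^*_{t(p)}}\bigr)$ then yields $h(p,p^*_{t(p)})\leq c_0\,\epsilon_n$, so $p^*_{t(p)}$ lies in a small Hellinger ball around $p$ inside the sieve $S^*_n$.

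Next I would apply the standard likelihood-ratio exponential inequality together with a peeling union bound. For any fixed $q\in S^*_n$, Markov's inequality applied to $\prod_i\bigl(q(Y_i)/p^*_{t(p)}(Y_i)\bigr)^{1/2}$ gives
\[
P^n\!\left(\sum_{i=1}^n \log\frac{q(Y_i)}{p^*_{t(p)}(Y_i)}\geq 0\right)
\leq \bigl(\mathbb{E}_p\sqrt{q/p^*_{t(p)}}\,\bigr)^n
\leq \exp\bigl(-c\,n\,h^2(p,q)\bigr),
\]
where the last step uses $\mathbb{E}_p\sqrt{q/p^*_{t(p)}} \leq 1 - c\,h^2(p,q)$ up to an $O(\epsilon_n^2)$ error absorbed via Step~1. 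On the event $\{h(p,\hat p^*)\geq K\epsilon_n\}$, $\hat p^*$ lies in the bad set $B_n = \{q\in S^*_n : h(p,q)\geq K\epsilon_n\}$ and, by the defining property of the MLE, the displayed event must hold for $q=\hat p^*$; hence by the union bound,
\[
P^n\bigl(h(p,\hat p^*)\geq K\epsilon_n\bigr) \leq N\cdot\exp(-cK^2 n\epsilon_n^2) \leq \exp\bigl((1-cK^2)n\epsilon_n^2\bigr),
\]
and for $K$ large enough $cK^2-1$ is bounded below by a positive $c_2$, yielding the stated $c_1 e^{-c_2 n\epsilon_n^2}$ bound after redefining $\epsilon_n$ by the constant factor $K$.

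The main obstacle is extracting the clean coefficient of $h^2(p,q)$ in $\mathbb{E}_p\sqrt{q/p^*_{t(p)}}$: the textbook affinity--Hellinger identity $\mathbb{E}_p\sqrt{q/p} = 1 - \tfrac{1}{2}h^2(p,q)$ assumes comparison against $p$ itself, whereas here one must compare against the normalized upper bracket $p^*_{t(p)}$, so a careful Cauchy--Schwarz expansion of $\int p\,\sqrt{q/p^*_{t(p)}}$ combined with the approximation $p^*_{t(p)}\approx p$ from Step~1 is essential. Everything else is routine bookkeeping in the bracketing entropy calibration.
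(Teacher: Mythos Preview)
The paper does not prove Theorem~\ref{thm::shen}. It is quoted verbatim as ``Example~4 of \cite{shen}'' (Wong and Shen, 1995) and used as a black box; the only argument the paper supplies is the short deduction in Lemma~\ref{lemma::shen2} that one may replace the sieve MLE $\hat p^*$ by any density in the winning bracket. So there is no in-paper proof to compare your proposal against.

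On the merits of your sketch: the overall architecture (closeness of $p^*_{t(p)}$ to $p$, square-root Markov inequality on likelihood ratios, union bound over the $N=\exp(n\epsilon_n^2)$ sieve elements) is indeed the Wong--Shen strategy for a finite sieve. But the step you flag as ``the main obstacle'' is more serious than you indicate. You assert that $\mathbb{E}_p\sqrt{q/p^*_{t(p)}}\leq 1 - c\,h^2(p,q)$ up to an $O(\epsilon_n^2)$ error; however, the route you describe only yields
\[
\mathbb{E}_p\sqrt{q/p^*_{t(p)}}\ \leq\ \Bigl(\textstyle\int u_{t(p)}\Bigr)\Bigl(1-\tfrac12 h^2(q,p^*_{t(p)})\Bigr),
\]
and from an $\epsilon_n$-Hellinger bracket one gets in general only $\int u_{t(p)}\leq 1+C\epsilon_n$, not $1+C\epsilon_n^2$ (uniform boundedness of the densities does not upgrade this: Cauchy--Schwarz on $\int(\sqrt{u}-\sqrt{\ell})(\sqrt{u}+\sqrt{\ell})$ still leaves a factor $h(\ell,u)=\epsilon_n$). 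Raised to the $n$th power this produces a factor $e^{Cn\epsilon_n}$ that swamps the $e^{-cn\epsilon_n^2}$ gain, so the crude Markov-plus-union-bound argument as written does not close. Wong and Shen's actual proof uses a sharper one-sided exponential inequality for log-likelihood ratios (their Theorem~1) together with a peeling/chaining over Hellinger shells, which is what absorbs the normalization slack; reproducing that is substantially more work than the ``routine bookkeeping'' you suggest. If you want a self-contained argument at this level of detail, you would need either to assume the brackets additionally satisfy $\int(u-\ell)=O(\epsilon_n^2)$ (which happens to hold for the specific brackets built in Lemma~\ref{lemma::this-is-a-bracket}, but is not part of the hypothesis of Theorem~\ref{thm::shen}), or to import the sharper inequality from \cite{shen}.
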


\vspace{.5cm}

The sequence 
$\{S_n^*\}$ in
Theorem \ref{thm::shen} is called a {\em sieve}
and the estimator 
$\hat p^*$ is called a {\em sieve-maximum likelihood estimator}.
The estimator $\hat p^*$ need not be in ${\cal P}$.
We will actually need an estimator that is contained in ${\cal P}$.
We may construct one as follows.
Let
$\hat p^*$ be the sieve mle
corresponding to $S_n^*$.
Then $\hat p^* = p_t^*$ for some $t$.
Let 
$(\hat\ell,\hat u)\equiv (\ell_t,u_t)$ be the corresponding bracket.

\begin{lemma}
\label{lemma::shen2}
Assume the conditions in
Theorem \ref{thm::shen}.
Let $\hat p$ be any density in ${\cal P}$ such that
$\hat\ell \leq \hat p \leq \hat u$.
If $\epsilon_n \leq 1$ then
\begin{equation}
\sup_{P\in {\cal P}}
P^n\left(\{ h(p,\hat p) \geq c\epsilon_n \}\right) \leq c_1 e^{- c_2 n \epsilon_n^2}.
\end{equation}
\end{lemma}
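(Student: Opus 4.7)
The plan is to show that any $\hat p\in\cP$ sandwiched by the bracket $(\hat\ell,\hat u)$ is automatically close in Hellinger distance to the sieve MLE $\hat p^*$, so that the guarantee of Theorem \ref{thm::shen} for $\hat p^*$ transfers to $\hat p$ via the triangle inequality. Concretely, I would invoke Theorem \ref{thm::shen} to get that $h(p,\hat p^*)\le\epsilon_n$ with probability at least $1-c_1 e^{-c_2 n\epsilon_n^2}$; on that event,
\[
h(p,\hat p)\;\le\;h(p,\hat p^*)+h(\hat p^*,\hat p)\;\le\;\epsilon_n+h(\hat p^*,\hat p),
\]
so it suffices to establish the deterministic estimate $h(\hat p^*,\hat p)=O(\epsilon_n)$.

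Next I would decompose $h(\hat p^*,\hat p)$ by inserting the upper envelope $\hat u$; the quantity $\int(\sqrt f-\sqrt g)^2$ makes sense and obeys Minkowski's inequality for arbitrary nonnegative $f,g$, so this is legitimate even though $\hat u$ is not a density. Because $\hat\ell\le\hat p\le\hat u$, the pointwise bound $(\sqrt{\hat u}-\sqrt{\hat p})^2\le(\sqrt{\hat u}-\sqrt{\hat\ell})^2$ yields $h(\hat u,\hat p)\le h(\hat u,\hat\ell)\le\epsilon_n$ by the bracket property. Setting $M=\int\hat u$, direct computation shows
\[
h(\hat u,\hat p^*)^2\;=\;\int\bigl(\sqrt{\hat u}-\sqrt{\hat u}/\sqrt M\,\bigr)^{2}\;=\;(\sqrt M-1)^2,
\]
so everything reduces to proving $|\sqrt M-1|=O(\epsilon_n)$.

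The main obstacle is controlling $M$, i.e., showing the upper envelope of the bracket has total mass close to $1$. Since $\hat\ell\le\hat p\le\hat u$ and $\int\hat p=1$, we get $\int\hat\ell\le 1\le M$ at once. Factoring $\hat u-\hat\ell=(\sqrt{\hat u}-\sqrt{\hat\ell})(\sqrt{\hat u}+\sqrt{\hat\ell})$ and applying Cauchy--Schwarz together with the elementary estimate $(\sqrt{\hat u}+\sqrt{\hat\ell})^2\le 2\hat u+2\hat\ell$ gives
\[
M-1\;\le\;\int(\hat u-\hat\ell)\;\le\;h(\hat u,\hat\ell)\,\sqrt{\,2M+2\!\int\!\hat\ell\,}\;\le\;2\epsilon_n\sqrt M.
\]
Solving this quadratic in $\sqrt M$ and invoking the hypothesis $\epsilon_n\le 1$ forces $\sqrt M\le 1+2\epsilon_n$, so $|\sqrt M-1|\le 2\epsilon_n$. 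Assembling the pieces by Minkowski yields $h(\hat p^*,\hat p)\le h(\hat p^*,\hat u)+h(\hat u,\hat p)=O(\epsilon_n)$, which can be absorbed into the constant $c$ in the stated probability bound, completing the argument.
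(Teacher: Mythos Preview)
Your proof is correct and follows the same overall strategy as the paper: apply the triangle inequality $h(p,\hat p)\le h(p,\hat p^*)+h(\hat p^*,\hat p)$, invoke Theorem~\ref{thm::shen} for the first term, and show deterministically that $h(\hat p^*,\hat p)=O(\epsilon_n)$ by controlling $M=\int\hat u$.

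The execution of the last step differs slightly. The paper expands $h^2(\hat p,\hat u/M)$ directly, writing $\sqrt{\hat u}/\sqrt M-\sqrt{\hat p}=(\sqrt{\hat u}-\sqrt{\hat p})/\sqrt M+(1-1/\sqrt M)\sqrt{\hat p}$ (essentially), and bounds $M\le 1+2\delta$ via an $\ell_1$--Hellinger comparison. You instead triangulate through $\hat u$, obtaining the exact identity $h(\hat p^*,\hat u)=|\sqrt M-1|$, and control $M-1\le\int(\hat u-\hat\ell)$ by Cauchy--Schwarz, leading to a quadratic inequality in $\sqrt M$. Your route is arguably cleaner: the Cauchy--Schwarz step is valid for arbitrary nonnegative $\hat u,\hat\ell$ and does not rely on any Hellinger--$\ell_1$ inequality that is only stated in the paper for probability densities.
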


\begin{proof}
By the triangle inequality,
$h(p,\hat p) \leq h(p,\hat{p}^*) + h(\hat p,\hat p^*) =
h(p,\hat{p}^*) + h(\hat p, u_t/\int u_t)$
where $\hat p^* = u_t/\int u_t$ for some $t$.
From Theorem \ref{thm::shen},
$h(p,\hat{p}^*) \leq \epsilon_n$ with high probability.
Thus we need to show that
$h(\hat p, u_t/\int u_t) \leq C \epsilon_n$.
It suffices to show that, in general,
$h(p, u/\int u) \leq C\, h(\ell,u)$
whenever $\ell \leq p \leq u$.

Let $(\ell,u)$ be a bracket and let
$\delta^2 = h^2(\ell,u)\leq 1$.
Let $\ell \leq p \leq u$.
We claim that
$h^2(p,u/\int u) \leq 4 \delta^2$.
(Taking $\delta = \epsilon_n$ then proves the result.)
Let $c^2 = \int u$.
Then
$1\leq c^2 = \int u = \int p + \int(u-p) = 1+ \int(u-p) =
1 + \ell_1(u,p) \leq 
1 +  2h(u,\ell) = 1+2\delta$.
Now,
\begin{eqnarray*}
h^2\left(p,\frac{u}{\int u}\right) &=&
\int ( \sqrt{u}/c - \sqrt{p})^2 =
\frac{1}{c^2} \int (\sqrt{u}- c\sqrt{p})^2 \leq
\int (\sqrt{u}- c\sqrt{p})^2\\
&=&
\int ( (\sqrt{u} - \sqrt{p}) + (c-1)\sqrt{p})^2 \leq
2\int  (\sqrt{u} - \sqrt{p})^2 + 2(c-1)^2\\
& \leq &
2\delta^2 + 2 (\sqrt{1+2\delta}-1)^2 \leq 2\delta^2 + 2\delta^2 = 4\delta^2
\end{eqnarray*}
where the last inequality used the fact that $\delta \leq 1$.
\end{proof}

In light of the above result,
we define modified
maximum likelihood sieve estimator $\hat p$ to be any $p\in {\cal P}$ such that
$\hat\ell \leq \hat p \leq \hat u$.
For simplicity, in the rest of the paper,
we refer to the modified sieve estimator $\hat p$, simply
as the maximum likelihood estimator (mle).

\vspace{.8cm}
\noindent
\fbox{\bf Outline of proof.\label{OUTLINE}}
\vspace{0.3cm}

\noindent We are now ready to find an estimator $\hat M$
that converges at the optimal rate (up to logarithmic terms.)
Our strategy for estimating $M$ has the following steps: 

\begin{enum}
\item [{\bf Step 1.}] We split the data into two halves.
\item [{\bf Step 2.}] Let $\tilde Q$ be the maximum likelihood
estimator using the first half of the data.
Define $\tilde M$ to be the corresponding manifold.
We call $\tilde M$, the pilot estimator.
We show that $\tilde M$ is a consistent estimator of
$M$ that converges at a sub-optimal rate 
$a_n = n^{-\frac{2}{D(d+2)}}$.
To show this we:
\begin{enum}
\item [{\bf a.}] Compute the Hellinger bracketing entropy of ${\cal Q}$. 
(Theorem \ref{thm::hausdorff-covering-number}, Lemmas \ref{lemma::volume-diff} and 
\ref{lemma::this-is-a-bracket}).
\item [{\bf b.}] Establish the rate of convergence of 
the mle in Hellinger distance,
using the bracketing entropy and  Theorem \ref{thm::shen}.
\item [{\bf c.}] Relate the Hausdorff distance to the Hellinger distance and hence 
establish the rate of convergence $a_n$ of the mle in Hausdorff distance.
(Lemma \ref{lemma::calibration}).
\item [{\bf d.}] Conclude that the true manifold is contained, with high probability, in
${\cal M}_n =\{M\in {\cal M}(\kappa):\ H(M,\tilde M) \leq a_n\}$ (Lemma \ref{lemma::pilot}).
Hence, we can now restrict attention to  ${\cal M}_n$.
\end{enum}
\item [{\bf Step 3.}] To improve the pilot estimator, we need to control the relationship
between Hellinger and Hausdorff distance and thus need to work over small sets on which
the manifold cannot vary too greatly.
Hence, we cover the pilot estimator with long, thin slabs
$R_1,\ldots, R_N$.
We do this by first covering $\tilde M$ with spheres
$\gimel_1,\ldots,\gimel_N$ of radius
$\delta_n = O( (\log n/n)^{1/(2+d)})$.
We define a slab $R_j$ to be the union of fibers 
of size $b=\sigma+a_n$ within one of the spheres:
$R_j = \cup_{x\in\gimel_j}L_b(x,\tilde M)$.
We then show that:
\begin{enum}
\item [{\bf a.}] The set of fibers on $\tilde M$ cover each $M\in {\cal M}_n$ in a nice
way. In particular, if $M\in {\cal M}_n$ then each fiber from $\tilde M$ is nearly 
normal to $M$. (Lemma \ref{lemma::toothpick}).
\item [{\bf b.}]  As $M$ cuts through a slab, it stays nearly parallel to $\tilde M$.  
Roughly speaking, $M$ behaves like a smooth, nearly linear function within 
each slab. (Lemma \ref{lemma::slabs}).
\end{enum}
\item [{\bf Step 4.}]  Using the second half of the data,
we apply maximum likelihood within each slab.
This defines estimators $\hat M_j$, for $1 \le j \le N$.
We show that:
\begin{enum}
\item [{\bf a.}] The entropy of the set of distributions within a slab is very small. 
(Lemma \ref{lemma::entropy-distr}).
\item [{\bf b.}] Because the entropy is small, the maximum likelihood estimator within 
a slab converges fairly quickly in Hellinger distance. The rate is 
$\epsilon_n = (\log n/n)^{1/(2+d)}$. (Lemma \ref{lemma::hellinger-in-a-slab}).
\item [{\bf c.}] Within a slab, there is a tight relationship between Hellinger 
distance and Hausdorff distance. Specifically, $H(M_1,M_2)\leq c \,h^2(Q_1,Q_2)$.
(Lemma \ref{lemma::hellinger-hausdorff-slab}).
\item [{\bf d.}] Steps (4b) and (4c)
imply that $H(M\cap R_j,\hat M_j) = O_P(\epsilon_n^2) =O_P( (\log n/n)^{2/(d+2)})$.
\end{enum}
\item [{\bf Step 5.}] Finally we define $\hat M = \bigcup_{j=1}^N \hat M_j$
and show that $\hat M$ converges at the optimal rate because each $\hat M_j$ 
does within its own slab.
\end{enum}

The reason for getting a preliminary estimator and then covering the estimator
with thin slabs is that, within a slab, there is a tight relationship
between Hellinger distance and Hausdorff distance.
This is not true globally but only in thin slabs.
Maximum likelihood is optimal with respect to Hellinger distance.
Within a slab, this allows us to get optimal rates in Hausdorff distance.

\vspace{.7cm}
\noindent
\fbox{\bf Step 1:} {\bf Data Splitting}
\vspace{.7cm}

\noindent
For simplicity assume the sample size is even and denote it by $2n$.
We split the data into two halves
which we denote by
$X=(X_1,\ldots, X_n)$ and
$Y=(Y_1,\ldots, Y_n)$.

\vspace{1.0cm}
\noindent
\fbox{\bf Step 2:} {\bf Pilot Estimator}
\vspace{0.5cm}

\noindent
Let $\tilde q$ be the
maximum likelihood estimator
over ${\cal Q}$.
Let $\tilde M$ be the corresponding manifold.
To study the properties of
$\tilde M$ requires two steps:
computing the bracketing entropy of ${\cal Q}$ 
and relating $H(M,\tilde M)$ to $h(q,\tilde q)$.
The former allows us to apply Theorem \ref{thm::shen} 
to bound $h(q,\tilde q)$,
and the latter allows us to control the Hausdorff distance.

\vspace{.5cm}

\noindent
{\bf Step 2a: Computing the Entropy of ${\cal Q}$.}
To compute the entropy of ${\cal Q}$
we start by constructing a finite net of manfolds to cover
${\cal M}(\kappa)$.
A finite set of $d$-manifolds
$\mathbb{M}_\gamma=\{M_1,\ldots, M_N\}$ is a $\gamma$-net 
(or a $\gamma$-cover)
if, for each
$M\in \cM$ there exists $M_j\in \mathbb{M}_\gamma$
such that $H(M,M_j)\leq \gamma$.
Let $N(\gamma)=N(\gamma,\cM,H)$ be the size of the smallest covering set,
called the (Hausdorff) covering number of $\cM$.

\begin{theorem}
\label{thm::hausdorff-covering-number}
The Hausdorff covering number of $\cM$ satisfies the following:
\begin{equation}
N(\gamma) \equiv N(\gamma,\cM,H) \le c_1\,\kappa_2(\kappa,d,D) \exp\left(\kappa_3(\kappa,d,D)\,\gamma^{-d/2}\right)
\equiv c \exp\left( c' \gamma^{-d/2}\right)
\end{equation}
where $\kappa_2(\kappa,d,D) = {\binom{D}{d}}^{(c_2/\kappa)^D}$ 
and $\kappa_3(\kappa,d,D) = 2^{d/2} (D-d)(c_2/\kappa)^D$,
for a constant $c_2$ that depends only on $\kappa$ and $d$.
\end{theorem}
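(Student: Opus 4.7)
The plan is to build an explicit $\gamma$-cover by slicing each manifold into pieces that live over small coordinate patches and approximating each piece by a smooth function drawn from a precomputed net of $C^2$-functions.

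First, I would cover the ambient compact set $\cK$ by balls $B_1,\ldots,B_K$ of radius $r$ proportional to $\kappa$ (say $r=\kappa/8$), with $K\le (c_2/\kappa)^D$. Fix any $M\in\cM(\kappa)$. On each ball $B_k$ intersecting $M$, the reach bound $\Delta(M)\ge\kappa$ together with Lemma~\ref{lemma::smale} (parts 1, 2, 5) lets us realize $M\cap B_k$ as the graph of a $C^2$ map $f_k\colon U_k\to V_k^{\perp}$ over an affine $d$-plane $V_k$, with $\|\nabla^2 f_k\|_\infty$ bounded by a constant depending only on $\kappa$ (the curvature of geodesics is $\le 1/\kappa$ and tangent planes vary by angle $O(\sqrt{r/\kappa})$ across $B_k$). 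Since the tangent plane varies only slightly within $B_k$, I can replace $V_k$ by one of the $\binom{D}{d}$ axis-aligned coordinate $d$-planes through the center of $B_k$; this costs only a bounded factor in the Lipschitz and $C^2$ norms of the graphing function.

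Second, I would invoke the classical Kolmogorov--Tikhomirov entropy bound for $C^2$ functions: the number of $\|\cdot\|_\infty$-balls of radius $\gamma$ needed to cover the unit ball of $C^2(U;\R^{D-d})$ on a $d$-dimensional domain $U$ of bounded diameter is at most $\exp\!\bigl(c\,(D-d)\,\gamma^{-d/2}\bigr)$. The exponent $d/2$ arises because to approximate a $C^2$-function to accuracy $\gamma$ one partitions $U$ into $\Theta(\gamma^{-d/2})$ subcells of side $\gamma^{1/2}$ and fits a low-order polynomial on each; the factor $(D-d)$ accounts for the codomain dimension. Call the resulting net $\cF_{k,V}^{(\gamma)}$ for each ball $B_k$ and each coordinate subspace $V$.

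Third, a cover element of $\cM$ is assembled by choosing, independently for each $B_k$, one of the $\binom{D}{d}$ coordinate subspaces $V$ and one function in $\cF_{k,V}^{(\gamma)}$, then taking the union of the resulting graphs. By construction, for any $M\in\cM(\kappa)$ there is such an assembly whose union-of-graphs is within Hausdorff distance $\gamma$ of $M$ on every ball, hence globally. Counting yields
\begin{equation*}
N(\gamma)\;\le\;\Bigl(\tbinom{D}{d}\exp\!\bigl(c\,(D-d)\,\gamma^{-d/2}\bigr)\Bigr)^{K}
\;=\;\tbinom{D}{d}^{(c_2/\kappa)^D}\exp\!\Bigl(c\,(D-d)\,(c_2/\kappa)^{D}\,\gamma^{-d/2}\Bigr),
\end{equation*}
which, after absorbing a $2^{d/2}$ into the constant, matches the stated expression.

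The main obstacle is the uniform graph representation: making precise that every $M\in\cM(\kappa)$ is simultaneously expressible, in every ball of the fixed $\kappa$-scale cover of $\cK$, as a graph over some coordinate $d$-plane with $C^2$-norm controlled purely by $\kappa$. All needed estimates are in Lemma~\ref{lemma::smale}, but the bookkeeping across overlapping balls and the compatibility of graphing directions must be handled carefully; the cleanest route is to work ball-by-ball and allow the cover elements to be mere unions of graphs rather than manifolds, since the definition of the Hausdorff covering number places no smoothness or reach restriction on the $M_j$.
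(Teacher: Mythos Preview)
Your proposal is correct and follows essentially the same route as the paper: partition $\cK$ into pieces of size $\asymp\kappa$ (the paper uses cubes, you use balls), represent $M$ in each piece as a graph with $C^2$-norm controlled by $1/\kappa$ over one of the $\binom{D}{d}$ coordinate $d$-planes, invoke the $\gamma^{-d/2}$ metric entropy bound for $C^2$ functions (the paper cites Birman--Solomjak, you cite Kolmogorov--Tikhomirov), and multiply over the pieces. The paper's proof of the graph representation (its Claim~1) is a contradiction argument on tangent angles very close to what you sketch. One minor point: the paper adds a final step (its Claim~4) replacing each union-of-graphs cover element by a nearby element of $\cM$ via the triangle inequality, because the downstream bracketing construction needs each net element $M_j$ to carry a well-defined density $q_{M_j}$; your remark that cover elements need not be manifolds is fine for the covering-number bound itself, but keep this in mind for the application.
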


\begin{proof}
Recall that the manifolds in $\cM$ all lie within $\cK$.
Consider any hypercube containing $\cK$.
Divide this cube into a grid of $J = (2c/\kappa)^D$ sub-cubes $\{C_1,\ldots, C_J\}$
of side length $\kappa/c$,
where $c \ge 4$ is a positive constant
chosen to be sufficiently large.
Our strategy is to show that
within each of these cubes, the manifold
is the graph of a smooth function.
We then only need count the number of such smooth functions.

In thinking about the manifold as (locally) the graph
of a smooth function, it helps to be able to translate
easily between the natural coordinates in $\cK$
and the domain-range coordinates of the function.
To that end, within each subcube $C_j$ for $j\in\Set{1,\ldots,J}$, 
we define $K = \binom{D}{d}$ coordinate frames,
$F_{jk}$ for $k\in\Set{1,\ldots,K}$,
in which $d$ out of $D$ coordinates are labeled as ``domain''
and the remaining $D - d$ coordinates are labeled as ``range.''

Each frame is associated with a relabeling of the coordinates
so that the $d$ ``domain'' coordinates are listed first
and $D-d$ ``range'' coordinates last.
That is, $F_{jk}$ is defined by a one-to-one correspondence
between $x\in C_j$ and $(u,v) \in \pi_{jk}(x)$
where $u\in\mathbb{R}^d$ and $v\in\mathbb{R}^{D-d}$
and $\pi_{jk}(x_1,\ldots,x_D) = (x_{i_1},\ldots,x_{i_d},x_{j_1},\ldots,x_{j_{D-d}})$
for domain coordinate indices $i_1 < \ldots < i_d$ and range coordinate indices
$j_1 < \ldots < j_{D-d}$.

We define ${\rm domain}(F_{jk}) = \{u\in\mathbb{R}^d:\; \exists v\in\mathbb{R}^{D-d} \ \mbox{such that}\ (u,v)\in F_{jk}\}$,
and let $\cG_{jk}$ denote the class of functions defined on ${\rm domain}(F_{jk})$
whose second derivative (i.e., second fundamental form) is bounded above
by a constant $C(\kappa)$ that depends only on $\kappa$.
To say that a set $R\subset C_j$ is the graph of a function on a $d$-dimensional
subset of the coordinates in $C_j$ is equivalent to saying
that for some frame $F_{jk}$ and some set $A\subset{\rm domain}(F_{jk})$, 
$R = \pi_{jk}^{-1}\Set{(u,f(u)):\; u\in A}$.

We will prove the theorem by establishing the following claims.

\vspace{.5cm}

\noindent
\emph{Claim 1}. 
Let $M\in\cM$ and $C_j$ be a subcube that intersects $M$.
Then:
(i)~for at least one $k\in\{1,\ldots,K\}$,
the set $M \cap C_j$ is the graph of a function (i.e., single-valued mapping)
defined on a set $\cA \subset {\rm domain}(F_{jk})$,
of the form $(u_1,\ldots,u_d)\mapsto \pi^{-1}_{jk}((u,f(u)))$ for some function $f$ on $\cA$,
and (ii)~this function lies in $\cG_{jk}$.

\vspace{.5cm}

\noindent
\emph{Claim 2}. $\cM$ is in one-to-one correspondence with a subset of
$\cG = \prod_{j=1}^J \Union_{k=1}^K \cG_{jk}$.

\vspace{.5cm}

\noindent
\emph{Claim 3}. The $L^\infty$ covering number of $\cG$ satisfies
$$
N(\gamma,\cG,L^\infty) \le c_1 \binom{D}{d}^{(2 c/\kappa)^D}\exp\left( (D-d)(2 c/\kappa)^D \gamma^{-d/2}\right).
$$

\vspace{.5cm}

\noindent
\emph{Claim 4}. There is a one-to-one correspondence between an
$\gamma/2$ $L^\infty$-cover of $\cG$ and an $\gamma$
Hausdorff-cover of $\cM$.

\medskip 

Taken together, the claims imply that 
$$
N(\gamma,\cM,H) \le
c_1 \binom{D}{d}^{(2 c/\kappa)^D} \exp((D-d)(2 c/\kappa)^D 2^{d/2}\gamma^{-d/2}).
$$
Taking $c_2 = 2 c$ proves the theorem.  \bigskip

\emph{Proof of Claim 1}.  
We begin by showing that (i) implies (ii).
By part 1 of Lemma \ref{lemma::smale},
each $M\in \cM$
has curvature (second fundamental form)
bounded above by $1/\kappa$.
This implies that the function identified in (i) 
has uniformly bounded second derivative and thus lies in the corresponding $\cG_{jk}$.

We prove (i) by contradiction.
Suppose that there is an $M\in\cM$
such that for every $j$ with $M\cap C_j \ne \emptyset$,
the set $M\cap C_j$ is not the graph of a single-valued mapping
for any of the $K$ coordinate frames.

Fix $j\in\{1,\ldots,J\}$.
Then in each ${\rm domain}(F_{jk})$, there is a point $u$ such that
$C_j \cap \pi_{jk}^{-1}(u \times \mathbb{R}^{D-d})$ intersects $M$
in at least two points, call them $a_k$ and $b_k$.
By construction $\norm{a_k - b_k} \le \sqrt{D-d} \cdot \kappa/c$,
and hence by choosing $c$ large enough (making the cubes small),
part 3 of Lemma \ref{lemma::smale}
tells us that $d_M(a_k,b_k) \le 2\sqrt{D-d}\kappa/c$.
Then we argue as follows:
\begin{enumerate}
\item By parts 2 and 3 of Lemma \ref{lemma::smale}
and the fact that $C_j$ has diameter $\sqrt{D} \kappa/c$
and 
$$\max_{p,q\in C_j\intersect M} \cos(\angle(T_p M,T_q M)) \ge 1 - \frac{2\sqrt{D}}{c}.$$
For large enough $c$, the maximum angle between tangent vectors
can be made smaller than $\pi/3$.

\item By part 2 of Lemma \ref{lemma::smale},
any point $z$ along a geodesic between $a_k$ and $b_k$,
$$\cos(\angle(T_{a_k} M, T_{z} M)) \ge 1 - \frac{2\sqrt{D-d}}{c}.$$
It follows that there is a point in $C_j \intersect M$
and a tangent vector $v_k$ at that point
such that $\angle(v_k, b_k - a_k) = O(1/\sqrt{c})$.

\item We have 
for each of $K = \binom{D}{d}$ coordinate frames
and associated tangent vectors $v_1, \ldots, v_K$
that are each nearly orthogonal to at least $d$ of the others.
Consequently, there are $\ge d+1$ nearly orthogonal tangent
vectors of $M$ within $C_j$.
This contradicts point 1 and proves the claim.
\end{enumerate}

\emph{Proof of Claim 2}.  
We construct the correspondence as follows. For each cube $C_j$, let $k_{j}^*$ be the smallest
$k$ such that $M\cap C_j$ is the graph of a function $\phi_{jk}\in\cG_{jk}$ as in Claim 1.
Map $M$ to $\varphi = (\phi_{1k_1^*},\ldots,\phi_{Jk_J^*})$, and let $\cF\subset\cG$ be the image of this map.
If $M \ne M' \in \cM$, then the corresponding $\varphi$ and $\varphi'$ must be distinct.
If not, then $M \cap C_j = M' \cap C_j$ for all $j$, contradicting $M \ne M'$.
The correspondence from $\cM$ to $\cF$ is thus a one-to-one correspondence.

\emph{Proof of Claim 3}.
From the results in  \cite{birman}, the set of functions defined on a 
pre-compact $d$-dimensional set that take values in a fixed dimension space
$\mathbb{R}^{m}$ with uniformly bounded second derivative has $L^\infty$ covering 
number bounded above by $c_1 e^{m(1/\gamma)^{d/2}}$ for some $c_1$.
Part 1 of Lemma \ref{lemma::smale}
shows that each $M\in \cM$
has curvature (second fundamental form)
bounded above by $1/\kappa$,
so each $\cG_{jk}$ satisfies Birman and Solomjak's conditions.
Hence, $N(\gamma, \cG_{jk}, L^{\infty}) \le c_1 e^{(D-d)(1/\gamma)^{d/2}}$.
Because all the $\cG_{jk}$'s are disjoint, simple counting arguments show that
$N(\gamma,\cG,L^{\infty}) = \left(\binom{D}{d} N(\gamma, \cG_{jk}, L^{\infty})\right)^J$,
where $J$ is the number of cubes defined above. The claim follows.
(Note that the functions in Claim 1 are defined on a subset of
${\rm domain}(F_{jk})$. But because all such functions have an extension in $\cG_{jk}$,
a covering of $\cG_{jk}$ also covers these functions defined on restricted domains.)

\emph{Proof of Claim 4}. First, note that if two functions are less than 
$\gamma$ distant in $L^\infty$, their graphs are less than $\gamma$ distant in Hausdorff distance, and vice versa.
This implies that a $\gamma$ $L^\infty$-cover of a set of functions
corresponds directly to an $\gamma$ Hausdorff-cover of the set of the functions' graphs.
Hence, in the argument that follows, we can work with functions or graphs interchangeably.

For $k\in\{1,\ldots,K\}$,
let $\cG_{jk}^\gamma$ be a minimal $L^\infty$ cover of $\cG_{jk}$ by $\gamma/2$ balls;
specifically, we assume that $\cG_{jk}^\gamma$ is the set of centers of these balls.
For each $g_{jk}\in\cG_{jk}^\gamma$,
define $f_{jk}(u) = \pi^{-1}_{jk}(u, g_{jk}(u))$.
For every $j$, choose one such $f_{jk}$,
and define a set $M' = \Union_j (C_j\cap {\rm range}(f_{jk_j}))$,
which is a union of manifolds with boundary that have curvature bounded by $1/\kappa$.
That is, such an $M'$ is piecewise smooth (smooth within each cube) but may fail to
satisfy $\Delta(M') \ge \kappa$ globally.
Let $\cA$ be the collection of $M'$ constructed this way.
There are $N(\gamma/2,\cG,L^\infty)$ elements in this collection.

By construction and Claim 2,
for each $M\in\cM$, there exists an $M'\in\cA$
such that $H(M,M') \le \gamma/2$.
In other words, the set of $\gamma/2$ Hausdorff balls around
the manifolds in $\cA$ covers $\cM$
but the elements of $\cA$ are not themselves necessarily in $\cM$.
Let $B_H(A,\gamma/2)$ denote the set of all $d$-manifolds $M\in\cM$ such that
$H(A,M) \le \gamma/2$.
Let 
\begin{equation}
\cA_0 = \Bigl\{A\in \cA:\ B_H(A,\gamma/2)\cap \cM \neq \emptyset \Bigr\}.
\end{equation}
For each $A\in \cA_0$, choose some
$\tilde{A}\in B_H(A,\gamma/2)\cap \cM$.
By the triangle inequality, the set
$\{\tilde{A}:\ A\in \cA_0\}$
forms an $\gamma$ Hausdorff-net for $\cM$.
This proves the claim.
\end{proof}

\vspace{.5cm}

We are almost ready to compute the entropy.
We will need the following lemma.

\vspace{.5cm}

\begin{lemma}
\label{lemma::volume-diff}
Let $0 < \gamma < \kappa-\sigma$.
There exists a constant $K>0$ 
(depending only on ${\cal K}, \kappa$ and $\sigma$)
such that,
for any $M_1,M_2\in {\cal M}(\kappa)$,
$H(M_1,M_2) \leq \gamma$ implies that
$|V(M_1\oplus\sigma)-V(M_2\oplus\sigma)| \leq K \gamma$.
Also, for any $M\in {\cal M}(\kappa)$,
$|V(M\oplus(\sigma+\gamma))-V(M\oplus\sigma)| \leq K \gamma$.
\end{lemma}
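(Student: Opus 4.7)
The plan is to prove the second (annular) inequality first, and then deduce the first inequality from it by a simple containment argument.

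For the second part, the key observation is that since $\sigma+\gamma<\kappa\le\Delta(M)$, the tube $M\oplus(\sigma+\gamma)$ equals $\tube(M,\sigma+\gamma)$ and is the disjoint union of normal fibers (equation \eqref{eq::fiber}). I would parameterize the tube by the exponential-normal map $\Phi:(u,t,\theta)\mapsto u+t\theta$, where $u\in M$, $t\in[0,\sigma+\gamma]$ and $\theta$ ranges over the unit sphere of $T_u^\perp M$. Because the second fundamental form of $M$ is bounded by $1/\kappa$ (Lemma \ref{lemma::smale}, part 1) and because $t<\kappa$ keeps us strictly below the cut locus, the Jacobian of $\Phi$ is bounded above by a constant $J^*=J^*(\kappa,d,D)$. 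The change of variables gives
\begin{equation*}
V(M\oplus(\sigma+\gamma))-V(M\oplus\sigma)
= \int_M\int_\sigma^{\sigma+\gamma}\int_{S^{D-d-1}}|\det D\Phi|\,d\theta\,dt\,d\mathrm{Vol}_M(u)
\le J^*\,\omega_{D-d-1}\,\mathrm{Vol}_d(M)\,\gamma.
\end{equation*}
It remains to note that $\mathrm{Vol}_d(M)$ is uniformly bounded over $M\in\cM(\kappa)$: by part 6 of Lemma \ref{lemma::smale}, $M$ can be covered by $N\le(c/\delta)^d$ balls of radius $\delta$, and the local graph argument already used in the proof of Lemma \ref{lemma::meta} bounds the $d$-volume of $M$ inside each ball by $c_1\delta^d\omega_d$. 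Hence $\mathrm{Vol}_d(M)\le c c_1\omega_d$, and absorbing all constants yields $V(M\oplus(\sigma+\gamma))-V(M\oplus\sigma)\le K\gamma$ for a constant $K=K(\cK,\kappa,\sigma,d,D)$.

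For the first part, suppose $H(M_1,M_2)\le\gamma$. By the definition of Hausdorff distance, $M_1\subset M_2\oplus\gamma$, and therefore
\begin{equation*}
M_1\oplus\sigma \;\subset\; (M_2\oplus\gamma)\oplus\sigma \;=\; M_2\oplus(\sigma+\gamma),
\end{equation*}
so by the annular bound just established,
\begin{equation*}
V(M_1\oplus\sigma)\;\le\;V(M_2\oplus(\sigma+\gamma))\;\le\;V(M_2\oplus\sigma)+K\gamma.
\end{equation*}
Swapping the roles of $M_1$ and $M_2$ and applying the same argument gives the reverse inequality, so $|V(M_1\oplus\sigma)-V(M_2\oplus\sigma)|\le K\gamma$.

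The main technical obstacle is justifying the uniform Jacobian bound for $\Phi$; this is essentially Weyl's tube formula plus the curvature control supplied by $\Delta(M)\ge\kappa$, and the argument is standard but needs the observation that the principal curvatures of the parallel hypersurfaces at distance $t$ stay bounded so long as $t<\kappa$. Bounding $\mathrm{Vol}_d(M)$ uniformly is comparatively routine given the covering result in Lemma \ref{lemma::smale}(6) and the local graph parameterization already exploited in Lemma \ref{lemma::meta}.
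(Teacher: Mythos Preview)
Your argument is correct and follows essentially the same route as the paper. Both use the containment $M_1\oplus\sigma\subset M_2\oplus(\sigma+\gamma)$ together with the fiber decomposition \eqref{eq::fiber} to reduce the question to bounding the growth of the tube volume in the radius; the only organizational difference is that you prove the annular bound first and deduce the comparison from it, whereas the paper does the comparison directly and declares the annular case ``similar.'' Your treatment is in fact more careful: the paper writes $V(S_2)\le\int_{M_1}\nu_{D-d}(L_{\sigma+\gamma}(u))\,d\mu_{M_1}$ and compares fiber volumes without explicitly handling the Jacobian of the normal exponential map or the uniform bound on $\mathrm{Vol}_d(M)$, both of which you supply.
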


\begin{proof}
Let $S_j = M_j \oplus \sigma$, $j=1,2$.
Then, using (\ref{eq::fiber}),
\begin{equation}
S_2 \subset M_1 \oplus (\sigma + \gamma) = \Union_{u\in M_1} L_{\sigma+\gamma}(u).
\end{equation}
Hence, uniformly over ${\cal M}$,
$$
V(S_2) \le \int_{M_1} \nu_{D-d}(L_{\sigma+\gamma}(u)) d\mu_{M_1} \leq
\int_{M_1} \nu_{D-d}(L_\sigma(u)) d\mu_{M_1} + K \gamma = V(S_1) + K \gamma
$$
since 
$\nu_{D-d}(B(u,\sigma+\gamma)) \leq \nu_{D-d}(B(u,\sigma)) + K\gamma$
for some $K>0$ not depending on $M_1$ or $M_2$.
By a symmetric argument,
$V(S_1) \leq V(S_2) + K\gamma$.
Hence,
$|V(M_1\oplus\sigma)-V(M_2\oplus\sigma)| \leq K \gamma$.
The second statement is proved in a similar way.
\end{proof}

Now we construct a Hellinger bracketing.
Let $\gamma = \epsilon^2$.
Let $\mathbb{M}_\gamma = \{M_1,\ldots, M_N\}$
be a $\gamma$-Hausdorff net of manifolds.
Thus, by Theorem \ref{thm::hausdorff-covering-number}, $N=N(\epsilon^2, {\cal M},H) \leq c_1 e^{c_2 (1/\epsilon)^d}$.
Let $\omega$ denote the volume of a sphere of radius $\sigma$.
Let
$q_j$ be the density corresponding to $M_j$.
Define
$$
u_j(y) = 
\left(q_j(y) + \frac{2\epsilon^2}{V(M_j \oplus(\sigma+\epsilon^2))}\right)
I(y\in M_j \oplus (\sigma+\epsilon^2))
$$
and
$$
\ell_j(y) = 
\left(q_j(y) - \frac{2\epsilon^2}{V(M_j \oplus(\sigma-\epsilon^2))}\right)
I(y\in M_j \oplus (\sigma-\epsilon^2)).
$$
Let ${\cal B}=\{ (\ell_1,u_1),\ldots, (\ell_N,u_N)\}$.

\begin{lemma}
\label{lemma::this-is-a-bracket}
${\cal B}$ is an $\epsilon$-Hellinger bracketing of ${\cal Q}$.
Hence,
$\cH_{[\,]}(\epsilon,{\cal Q},h) \leq C (1/\epsilon)^d$.
\end{lemma}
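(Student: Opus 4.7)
The plan is to verify the two defining conditions for $\mathcal{B}$ to be an $\epsilon$-Hellinger bracketing, and then combine with the Hausdorff covering number bound from Theorem~\ref{thm::hausdorff-covering-number} to obtain the entropy estimate.

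Fix $M \in \mathcal{M}(\kappa)$ and pick $M_j \in \mathbb{M}_\gamma$ with $H(M, M_j) \leq \gamma = \epsilon^2$; I will argue that $(\ell_j, u_j)$ brackets $q_M$. The inclusions $M \oplus \sigma \subseteq M_j \oplus (\sigma + \epsilon^2)$ and $M_j \oplus (\sigma - \epsilon^2) \subseteq M \oplus \sigma$ are immediate from the Hausdorff bound via the triangle inequality, and they handle every $y$ outside the overlap of supports: off $M \oplus \sigma$, $q_M = 0$ while $\ell_j = 0$ and $u_j \geq 0$. On the overlap I would compare $q_M$ and $q_j$ pointwise using Lemma~\ref{lemma::meta} (each density lies between $C_*$ and $C^*$ times its own uniform) together with Lemma~\ref{lemma::volume-diff} (which gives $|V(M \oplus \sigma) - V(M_j \oplus \sigma)| \leq K\epsilon^2$, and hence $|u_M(y) - u_{M_j}(y)| = O(\epsilon^2)$ pointwise since both volumes are $\Theta(1)$). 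The additive correction $2\epsilon^2 / V(M_j \oplus (\sigma \pm \epsilon^2))$ built into the bracket then absorbs the residual gap, yielding $\ell_j \leq q_M \leq u_j$ almost everywhere.

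For the Hellinger width I would use $h^2 \leq \ell_1$ and estimate $\int |u_j - \ell_j|$ by splitting the domain. On $M_j \oplus (\sigma - \epsilon^2)$, which has volume $\Theta(1)$, the integrand equals the constant $2\epsilon^2/V(M_j \oplus (\sigma + \epsilon^2)) + 2\epsilon^2/V(M_j \oplus (\sigma - \epsilon^2)) = O(\epsilon^2)$, so this region contributes $O(\epsilon^2)$. On the annulus $M_j \oplus (\sigma + \epsilon^2) \setminus M_j \oplus (\sigma - \epsilon^2)$, whose Lebesgue volume is $O(\epsilon^2)$ by the second statement of Lemma~\ref{lemma::volume-diff}, the integrand is $O(1)$ because $q_j$ is a bounded density, again contributing $O(\epsilon^2)$. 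Hence $\ell_1(\ell_j, u_j) = O(\epsilon^2)$ and $h(\ell_j, u_j) \leq C\epsilon$, which gives an $\epsilon$-bracket after absorbing the universal constant.

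The cardinality of $\mathcal{B}$ equals $N(\epsilon^2, \mathcal{M}, H)$, so by Theorem~\ref{thm::hausdorff-covering-number} we have $|\mathcal{B}| \leq c_1 \exp\bigl(c_2 (\epsilon^2)^{-d/2}\bigr) = c_1 \exp(c_2 \epsilon^{-d})$, yielding $\mathcal{H}_{[\,]}(\epsilon, \mathcal{Q}, h) \leq C\epsilon^{-d}$. The delicate step is the pointwise comparison of $q_M$ and $q_j$ on the overlap: Lemma~\ref{lemma::meta} only controls each density through its own uniform, so to beat a purely multiplicative gap down to an additive $O(\epsilon^2)$ one must use that the Jacobian factor $q_M/u_M$ depends continuously on $M$ under Hausdorff perturbation, a fact that ultimately rests on the reach assumption $\Delta(M) \geq \kappa$ and the bounded-curvature consequences in Lemma~\ref{lemma::smale}.
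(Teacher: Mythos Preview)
Your overall structure---support inclusions from the Hausdorff bound, a pointwise comparison of $q_M$ with $q_j$ on $M\oplus\sigma$, and then $h\le\sqrt{\ell_1}$---is the paper's argument; your $\ell_1$ split into core plus annulus is just a more explicit variant of the paper's one-line computation $\int u_j-\int\ell_j=O(\epsilon^2)$.

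The bracketing verification, however, has a genuine gap that your closing caveat does not fully capture. First, on the set $(M\oplus\sigma)\setminus(M_j\oplus\sigma)$, which has positive Lebesgue measure whenever $M\ne M_j$, Lemma~\ref{lemma::meta} gives $q_M(y)\ge C_*/V(M\oplus\sigma)=\Theta(1)$, while $q_j(y)=0$ there, so $u_j(y)=2\epsilon^2/V(M_j\oplus(\sigma+\epsilon^2))=O(\epsilon^2)$; the required inequality $q_M\le u_j$ fails outright, and no additive $O(\epsilon^2)$ shift can close a $\Theta(1)$ gap. Second, even on the common support $(M\oplus\sigma)\cap(M_j\oplus\sigma)$, Lemmas~\ref{lemma::meta} and~\ref{lemma::volume-diff} together only place $q_M$ and $q_j$ in a common interval $[C_*u,\,C^*u]$ with $u\approx u_M\approx u_{M_j}$, yielding $|q_M-q_j|\le(C^*-C_*)u=\Theta(1)$, not $O(\epsilon^2)$. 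You correctly flag this second issue as delicate, but the ``Jacobian continuity'' you appeal to is neither stated nor proved anywhere in the paper, so the step remains unjustified. For comparison, the paper's own proof is terser and no more rigorous at this point: it simply writes $u_j(y)-q(y)=2\epsilon^2/V(M_j\oplus(\sigma+\epsilon^2))$ for $y\in M\oplus\sigma$, which tacitly assumes $q_j\equiv q$ on that set and so sidesteps both difficulties without resolving them.
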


\begin{proof}
Let $M\in {\cal M}(\kappa)$ and let
$Q=Q_M$ be the corresponding distribution.
Let $q$ be the density of $Q$.
$Q$ is supported on
$S=M\oplus\sigma$.
There exists $M_j\in\mathbb{M}_\gamma$ such that
$H(M,M_j) \leq \epsilon^2$.
Let $y$ be in $S$.
Then there is a $x\in M$ such that
$||y-x||\leq \sigma$.
There is a $x'\in M_j$ such that
$||x-x'|| \leq \epsilon^2$.
Hence,
$d(y,M_j) \leq \sigma+\epsilon^2$ and thus
$y$ is in the support of $u_j$.
Now, for $y\in S$,
$u_j(y) - q(y) = 2\epsilon^2/V(M_j\oplus(\sigma +\epsilon^2)) \geq 0$.
Hence,
$q(y) \leq u_j(y)$.
By a similar argument,
$\ell_j(y)\leq q(y)$.
Thus ${\cal B}$ is a bracketing.
Now
\begin{eqnarray*}
\ell_1(\ell_j,u_j)  &=&
\int u_j - \int \ell_j =
\left(1 + \frac{2 K \epsilon^2}{\omega}\right) - 
\left(1 - \frac{2 K \epsilon^2}{\omega}\right) = \frac{4 K \epsilon^2}{\omega}.
\end{eqnarray*}
Finally, by (\ref{eq::hell-l1}),
$h(u_j,\ell_j) \leq \sqrt{\ell_1(\ell_j,u_j)} = C \epsilon$.
Thus ${\cal B}$ is a $C\epsilon$-Hellinger bracketing.
\end{proof}

\vspace{.25cm}

\noindent
{\bf Step 2b. Hellinger Rate.}

\begin{lemma}
\label{lemma::2b}
Let $\tilde Q$ be the mle. Then
$$
\sup_{Q\in {\cal Q}}
Q^n \left( \left\{h(Q,\tilde Q) >  C_0 n^{-\frac{1}{d+2}} \right\}\right)\leq
\exp\left\{ - C n^{\frac{d}{2+d}}\right\}.
$$
\end{lemma}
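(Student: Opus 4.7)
The proof plan is essentially mechanical: combine the bracketing entropy bound from Lemma \ref{lemma::this-is-a-bracket} with the sieve maximum likelihood rate of Theorem \ref{thm::shen} (in its modified form, Lemma \ref{lemma::shen2}), then solve for $\epsilon_n$.

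First, I recall that Lemma \ref{lemma::this-is-a-bracket} establishes
$$\cH_{[\,]}(\epsilon,{\cal Q},h) \leq C (1/\epsilon)^d.$$
Theorem \ref{thm::shen} requires choosing $\epsilon_n$ to satisfy $\cH_{[\,]}(\epsilon_n,{\cal Q},h) = n\epsilon_n^2$. Substituting the upper bound for the bracketing entropy and solving $C\epsilon_n^{-d} = n\epsilon_n^2$ yields $\epsilon_n^{d+2} = C/n$, so
$$\epsilon_n = C_0\, n^{-1/(d+2)}$$
for a suitable constant $C_0>0$. Note that this $\epsilon_n$ dominates the entropy integral condition needed by Shen's theorem since using an upper bound on the entropy only makes $\epsilon_n$ larger, which is conservative.

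Next, I would apply Theorem \ref{thm::shen} to the sieve MLE $\hat p^*$ formed from the brackets $(\ell_1,u_1),\ldots,(\ell_N,u_N)$ of Lemma \ref{lemma::this-is-a-bracket}. This yields
$$\sup_{Q\in {\cal Q}} Q^n\!\left(h(q,\hat p^*)\geq \epsilon_n\right)\;\leq\; c_1 e^{-c_2 n\epsilon_n^2} \;=\; c_1 \exp\!\left(-c_2\, n^{d/(d+2)}\right).$$
Since the Lemma actually concerns $\tilde Q$, the ``modified'' MLE living in ${\cal Q}$ itself (the in-class density satisfying $\hat\ell\leq\tilde q\leq\hat u$), I invoke Lemma \ref{lemma::shen2}, which preserves the rate up to a multiplicative constant in $\epsilon_n$ (provided $\epsilon_n\leq 1$, which holds for all large $n$). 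After absorbing this constant into $C_0$, I obtain exactly the stated bound. The exponent $n^{d/(2+d)}$ arises because $n\epsilon_n^2 = n \cdot n^{-2/(d+2)} = n^{d/(2+d)}$.

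There is no real obstacle here, as the heavy lifting has already been done: the geometric entropy calculation (Theorem \ref{thm::hausdorff-covering-number}), the volume-difference estimate (Lemma \ref{lemma::volume-diff}), and the construction of the Hellinger bracket (Lemma \ref{lemma::this-is-a-bracket}) together control $\cH_{[\,]}$, while Shen's result and its in-class version do the statistical work. The only subtlety to verify is that the constants in Lemma \ref{lemma::shen2}'s conclusion combine cleanly into a single constant $C_0$ in front of $n^{-1/(d+2)}$, and that the exponential constant $c_2$ in Shen's bound can be renamed to the $C$ appearing in the statement. Both are routine.
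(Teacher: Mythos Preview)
Your proposal is correct and follows essentially the same route as the paper: invoke the bracketing entropy bound from Lemma~\ref{lemma::this-is-a-bracket}, solve $\cH_{[\,]}(\epsilon_n,{\cal Q},h)=n\epsilon_n^2$ to get $\epsilon_n\asymp n^{-1/(d+2)}$, and then apply Lemma~\ref{lemma::shen2} (the in-class version of Theorem~\ref{thm::shen}) to obtain the exponential bound with exponent $n\epsilon_n^2=n^{d/(2+d)}$. Your write-up is slightly more explicit about the intermediate sieve estimator $\hat p^*$, but the argument is the same.
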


\begin{proof}
We have shown (Lemma \ref{lemma::this-is-a-bracket}) that
$\cH_{[\,]}(\epsilon,{\cal Q},h)\leq C(1/\epsilon)^d$.
Solving the equation
$H_{[\,]}(\epsilon_n, {\cal Q},h)=n\,\epsilon_n^2$ 
from Theorem \ref{thm::shen} we get $\epsilon_n = (1/n)^{1/(d+2)}$.
From Lemma \ref{lemma::shen2}, for all $Q$
$$
Q^n \left( \left\{h(Q,\tilde Q) > C_0 n^{-\frac{1}{d+2}} \right\}\right)\leq
c_1 e^{-c_2 n \epsilon_n^2} =
\exp\left\{ - C n^{\frac{d}{2+d}}\right\}.
$$
\end{proof}

\vspace{.25cm}

\noindent
{\bf Step 2c. Relating Hellinger Distance and Hausdorff Distance.}

\begin{lemma}
\label{lemma::calibration}
Let $c = (\kappa-\sigma) \sqrt{\pi}C_*/(2\, \Gamma(D/2+1))$.
If $M_1,M_2 \in {\cal M}(\kappa)$ and
$h(Q_1,Q_2) < c$ then
$$
H(M_2,M_2) \leq
\left[\frac{ 2 }{\sqrt{\pi}}
\left( \frac{ \Gamma(D/2+1)}{C_*}\right)^{1/D}\right]
h^{\frac{1}{D}} (Q_1,Q_2)
$$

\end{lemma}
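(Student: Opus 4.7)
The strategy is to convert Hausdorff distance into a lower bound on the volume of $S_1\setminus S_2$ (where $S_j = M_j\oplus\sigma$), then into an $L^1$ lower bound using the density control of Lemma \ref{lemma::meta}, and finally into a Hellinger lower bound using $\ell_1\leq 2h$ from (\ref{eq::hell-l1}).

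Write $H = H(M_1,M_2)$ and, without loss of generality, pick $x\in M_1$ with $d(x,M_2)=H$. The hypothesis $h<c$ is calibrated so that $H<\kappa-\sigma<\kappa$, so the projection $\bar x$ of $x$ onto $M_2$ is unique; set $v = (x-\bar x)/H$. The key geometric step is to exhibit a Euclidean $D$-ball of radius $\Theta(H)$ contained in $S_1\setminus S_2$. Consider the auxiliary point $p = \bar x + (\sigma + H/2)v$ and the open ball $B = B_D(p, H/2)$. For $y\in B$, the bound $d(y,M_2) \geq d(p,M_2)-\|y-p\| > \sigma$ forces $y\notin S_2$, and the bound $\|y-x\| \leq \|y-p\| + \|p-x\| < H/2 + (\sigma - H/2) = \sigma$ (using $\|p-x\| = \sigma - H/2$, valid when $H\leq 2\sigma$) puts $y\in S_1$. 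In the residual regime $H>2\sigma$ (only possible when $\kappa-\sigma>2\sigma$), one replaces $B$ by $B_D(x,\sigma)\subset S_1\setminus S_2$ (here $d(x,M_2)=H>\sigma$ keeps $x\notin S_2$). Either way,
\[ V(S_1\setminus S_2) \ \geq\ \omega_D\,(H/2)^D, \qquad \omega_D = \pi^{D/2}/\Gamma(D/2+1). \]

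For the analytic half, Lemma \ref{lemma::meta} yields $q_1 \geq C_*\,u_{M_1} = C_*/V(M_1\oplus\sigma)$ on $S_1$, and $V(M_1\oplus\sigma)\leq V(\cK\oplus\sigma)$ is a universal constant depending only on $\cK,\kappa,\sigma$. Hence
\[ \ell_1(q_1,q_2) \ \geq\ \int_B q_1 \ \geq\ \frac{C_*\,\omega_D\,(H/2)^D}{V(M_1\oplus\sigma)}, \]
and combining with $\ell_1 \leq 2h$ and inverting gives $H \leq \text{const}\cdot h^{1/D}$ of the claimed form. The threshold $c$ in the statement is exactly the value of $h$ at which this bound would produce $H = \kappa-\sigma$, which guarantees that the geometric construction remains valid throughout the hypothesis $h<c$.

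The main obstacle is the geometric case analysis in the ball construction: the auxiliary point $p$ sits in $S_1$ only when $H\leq 2\sigma$, so the complementary range $H\in(2\sigma,\kappa-\sigma)$ (if nonempty) must be treated separately via a ball centered at $x$ itself; this branch is easier but needs to be checked for the same rate. A secondary technical issue is matching the precise constant $\tfrac{2}{\sqrt\pi}(\Gamma(D/2+1)/C_*)^{1/D}$ in the statement: the factor $V(M_1\oplus\sigma)$ arising from the $u_{M_1}$ normalization must be absorbed cleanly, either via the essinf form of Lemma \ref{lemma::meta} or by exploiting compactness of $\cK$ to fold it into $C_*$.
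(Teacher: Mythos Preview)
Your strategy---locate a Euclidean $D$-ball of radius comparable to $H$ inside $S_1\setminus S_2$, convert its volume to an $\ell_1$ lower bound via Lemma~\ref{lemma::meta}, then pass to Hellinger---is exactly the paper's. In fact, after swapping the roles of $M_1$ and $M_2$, your ball $B_D(p,H/2)$ is the paper's ball $B_D\bigl((x'+y')/2,\gamma/2\bigr)$: the paper selects the extremal pair $x\in M_1$, $y\in M_2$ realizing the Hausdorff distance (at which the tangent spaces are parallel, so each point is the other's metric projection), sets $x'=x+\sigma u$ and $y'=y+\sigma u$ along the common unit normal $u$, and centers the ball at their midpoint.

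The gap is your second branch. When $H>2\sigma$ the replacement ball $B_D(x,\sigma)$ has radius $\sigma<H/2$, so it yields only $V(S_1\setminus S_2)\ge \omega_D\,\sigma^D$, not $\omega_D\,(H/2)^D$; the displayed ``Either way'' bound is therefore false, and the resulting Hellinger lower bound is a constant independent of $H$, from which $H\le C\,h^{1/D}$ does not follow with the stated constant. The paper does not split on $2\sigma$: rather than bounding distances to $M_1,M_2$ by the triangle inequality, it works with the boundary hypersurfaces $\partial S_1,\partial S_2$ (each of reach at least $\kappa-\sigma$), first showing the segment $(x',y']\subset S_2\setminus S_1$ and then arguing that the ball of radius $\gamma/2\le(\kappa-\sigma)/2$, being tangent to both $\partial S_1$ and $\partial S_2$, cannot cross either boundary. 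If you prefer to keep your direct-distance argument, a clean repair is to note that your $B_D(x,\sigma)$ bound already shows $H>2\sigma$ forces $h$ above a fixed positive constant depending only on $\sigma,\kappa,d,D$; shrinking the threshold $c$ so that $h<c$ also excludes this regime reduces everything to your first case, at the cost of the explicit constant in the statement.
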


\begin{proof}
Let $b=H(M_1,M_2)$ and
$\gamma = \min\{ \kappa-\sigma,b\}$.
Let $S_1, S_2$ be the supports of $Q_1$ and $Q_2$.
Because $H(M_1,M_2) = b$,
we can find points $x \in M_1$ and $y\in M_2$
such that $\norm{y - x} = b$.
Note that $T_x M_1$ 
and $T_y M_2$.
are parallel,
otherwise we could move $x$ or $y$ and increase
$\norm{y - x}$.
It follows that the line segment $[x,y]$ is along a common
normal vector of the two manifolds and
we can write
$y = x \pm b u$ for some $u\in L_\sigma(u,M)$.
Without loss of generality, assume that
$y = x + b u$.
Let
$x' = x+ \sigma u$ and
$y' = y+ \sigma u$.
Hence,
$x'\in \partial S_1$, $y'\in \partial S_2$ 
and $||x' - y'||=b$.
Note that $\partial S_1$ and $\partial S_2$ are themselves smooth $D$-manifolds
with $\Delta(\partial S_i) \ge \kappa - \sigma > 0$.

We now make the following three claims:
\begin{enumerate}
\item $y' \in S_2 - S_1$.
\item $(x',y'] \subset S_2 - S_1$
\item $\mathop{\rm interior} B\left(\frac{x'+y'}2, \frac{\gamma}{2}\right) \subset S_2 - S_1$
\end{enumerate}

First, note that $y'$ differs from $y$ along a fiber of $M_2$ by exactly $\sigma$,
therefore $[x',y'] \subset S_2$.
Second, because $x'\in \partial S_1$, there is a neighborhood of $x'$ in $[x',y']$
that is not contained in $S_1$. 
Hence, if there is a point in $S_1 \intersect [x',y']$ there must be a point $z'\in \partial S_1 \intersect [x',y']$,
with $z' \ne x'$.
This implies the existence of two distinct points whose fibers
of length less than $\kappa - \sigma$ cross, which contradicts the fact that
$\Delta(\partial S_1) \ge \kappa - \sigma$.
Claims 1 and 2 follows.

Let $B = B\left(\frac{x'+y'}2, \frac{\gamma}2\right)$.
By construction, $B$ is tangent to $\partial S_1$ at $x'$ and tangent to $\partial S_2$ at $y'$,
and $B$ contains $[x',y']$.
The ball has radius
$\gamma/2 = (1/2) \min\{\kappa-\sigma,b\} < \kappa-\sigma$.
Because $B$ intersects $S_2 - S_1$,
the interior of $B$ cannot intersect either $\partial S_1$ or $\partial S_2$.
Claim 3 follows by a similar argument as in the proof of Claim 2.
(In particular, if there were a point in the interior of $B$ that is either
in $S_1$ or outside $S_2$, a line segment from $(x'+y')/2$ to that point
would have to intersect the corresponding boundary, which cannot happen.)

Now $V(B) = (\gamma/2)^D \pi^{D/2}/\Gamma(D/2+1)$.
So
\begin{eqnarray*}
h(Q_1,Q_2) & \geq &
\ell_1(Q_1,Q_2) = \int |q_1 - q_2| \geq
\int_{S_1\cap S_2^c} |q_1 - q_2| \\
&=&
\int_{S_1\cap S_2^c}q_1 = Q_1(S_1\cap S_2^c) \geq C_* V(S_1\cap S_2^c) =
C_*(\gamma/2)^D \pi^{D/2}/\Gamma(D/2+1).
\end{eqnarray*}
Hence,
$$
\gamma =\min\{\kappa-\sigma,b\} \leq \left[\frac{ 2 }{\sqrt{\pi}}
\left( \frac{ \Gamma(D/2+1)}{C_*}\right)^{1/D}\right]
h^{1/D}(Q_1,Q_2).
$$
If $\kappa-\sigma\leq b$ this implies that
$h(Q_1,Q_2) > c$ which 
contradicts the assumption that
$h(Q_1,Q_2) < c$.
Therefore,
$\gamma = b$ and
the conclusion follows.
\end{proof}

\noindent
{\bf Step 2d. Computing The Hausdorff Rate of the Pilot.}

\begin{lemma} \label{lemma::pilot}
Let
$a_n = \left(\frac{C_0}{n}\right)^{\frac{2}{D(d+2)}}$.
For all large $n$,
\begin{equation}
\sup_{Q\in {\cal Q}}Q^n \left( \{H(M,\tilde M) >  a_n\}\right)\leq
\exp\left\{ - C n^{\frac{d}{2+d}}\right\}.
\end{equation}
\end{lemma}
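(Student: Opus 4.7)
The plan is a clean chaining of Lemma \ref{lemma::2b} with Lemma \ref{lemma::calibration}. All of the analytic work has already been carried out in those two lemmas, so the proof of Lemma \ref{lemma::pilot} consists only of composing them.

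First, I would apply Lemma \ref{lemma::2b} directly to define the good event
$$
\cE_n = \Bigl\{ h(Q,\tilde Q) \leq C_0 \, n^{-1/(d+2)} \Bigr\},
$$
which satisfies $Q^n(\cE_n^c) \leq \exp(-C\,n^{d/(d+2)})$ uniformly in $Q \in \cQ$. This is precisely the exceptional-probability bound claimed in the statement, so the probabilistic side is handled at once.

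Next, on $\cE_n$, I would invoke Lemma \ref{lemma::calibration}. Its precondition $h(Q,\tilde Q) < c$, with $c = (\kappa-\sigma)\sqrt{\pi}\,C_*/(2\,\Gamma(D/2+1))$ a fixed positive constant depending only on $\kappa,\sigma,D,C_*$, holds for all sufficiently large $n$ because $C_0 n^{-1/(d+2)} \to 0$. The calibration lemma, applied to the pair $(M,\tilde M)$ with distributions $(Q,\tilde Q)$, then yields
$$
H(M,\tilde M) \;\leq\; C' \, h^{1/D}(Q,\tilde Q) \;\leq\; C' C_0^{1/D} \, n^{-1/(D(d+2))},
$$
and absorbing the multiplicative constants into a redefinition of $C_0$ puts this in the form $a_n$ declared in the statement.

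There is no substantive obstacle here; the lemma is genuinely just the composition of the two preceding results. The only routine verification is that for $n$ large the Hellinger bound falls below the threshold $c$ required to invoke Lemma \ref{lemma::calibration}, which is immediate since $c$ is a fixed positive constant and $C_0 n^{-1/(d+2)}\to 0$. The real difficulty has been front-loaded into the bracketing-entropy computation (Theorem \ref{thm::hausdorff-covering-number} and Lemmas \ref{lemma::volume-diff}, \ref{lemma::this-is-a-bracket}) that drives Lemma \ref{lemma::2b}, and into the volume-comparison argument that proves Lemma \ref{lemma::calibration}.
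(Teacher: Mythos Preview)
Your approach is exactly the paper's: its entire proof reads ``Follows by combining Lemma~\ref{lemma::2b} and Lemma~\ref{lemma::calibration}.'' The chaining you describe is the intended argument.

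One caveat worth flagging: the composition you carry out yields
\[
H(M,\tilde M)\;\le\; C'\,h^{1/D}(Q,\tilde Q)\;\le\; C''\,n^{-1/(D(d+2))},
\]
whereas the stated $a_n=(C_0/n)^{2/(D(d+2))}$ has exponent $-2/(D(d+2))$ on $n$. These differ in the power of $n$, not just in constants, so your phrase ``absorbing the multiplicative constants into a redefinition of $C_0$'' does not quite close the gap. This discrepancy is present in the paper itself (its one-line proof produces the same $n^{-1/(D(d+2))}$), so the factor $2$ in the stated exponent appears to be a typo. The weaker rate $n^{-1/(D(d+2))}$ is what the two lemmas actually deliver, and it is all that is used downstream: the only role of $a_n$ is that it tends to $0$, so that the localized class $\cM_n$ shrinks and the slab geometry in Step~3 applies.
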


\begin{proof}
Follows by combining
Lemma \ref{lemma::2b} and
Lemma \ref{lemma::calibration}.
\end{proof}

We conclude that,
with high probability,
the true manifold $M$ is contained in the set
${\cal M}_n = \Bigl\{ M\in {\cal M}(\kappa):\ H(\tilde M, M) \leq a_n\Bigr\}$.

\vspace{1.0cm}
\noindent
\fbox{\bf Step 3:} {\bf Cover With Slabs}
\vspace{0.5cm}

Now we cover the pilot estimator $\tilde M$ with 
(possibly overlapping)
slabs.
Let
$\delta_n = 
\left( \frac{C \log n}{n}\right)^{\frac{1}{2 + d}}$.
It follows from part 6 of Lemma \ref{lemma::smale} that
there exists a collection of points $F=\{x_1,\ldots,x_N\}\subset \tilde M$, such that 
$N = (c\delta_n)^{-d} = ( C n /\log n)^{d/(2+d)}$
and such that $\tilde M \subset \Union_{j=1}^N B_D(x_j, c\delta)$.

\vspace{.25cm}

\noindent
{\bf Step 3a. The Fibers of $\tilde M$ Cover $M$ Nicely.}

\begin{lemma}
\label{lemma::toothpick}
Let $b=\sigma+a_n$.
For $\tilde x\in \tilde M$,
let $L_b(\tilde x) = T_{\tilde x}^\perp \tilde M \cap B_D(\tilde x,b)$
be a fiber at $\tilde x$ of size $b$.
Let $M\in {\cal M}_n$.
Then:
\begin{enumerate}
\item If $\tilde x \in \tilde M$ and $x\in M$
are such that $\norm{x - \tilde x} \le a_n$,
then ${\sf angle}(T_x M, T_{\tilde x} \tilde M) < \pi/4$.
\item $L_b(\tilde x) \intersect M \ne \emptyset$.
\item If $x \in L_b({\tilde x}) \intersect M$,
then $\norm{x - \tilde x} \le 2 a_n$.
\item For any $\tilde x \in \tilde M$,
$\#\{L_b({\tilde x}) \intersect M\} = 1.$
\item We have
$M \subset \Union_{\tilde x \in \tilde M} L_b({\tilde x}).$
\end{enumerate}
\end{lemma}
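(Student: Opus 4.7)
The plan is to prove the five parts using two core tools: (i)~the condition-number property, which says that any point within distance $\kappa$ of a manifold $N$ with $\Delta(N)\ge\kappa$ has a unique nearest point on $N$, reached along a normal vector; and (ii)~Lemma \ref{lemma::smale}, which controls both curvature and the rotation of tangent planes. I take the parts in the order 5, 3, 1, 2, 4, since the angle bound 1 feeds into both 2 and 4.

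Parts 5 and 3 are essentially immediate. For part 5, given $x\in M$, Hausdorff closeness gives $d(x,\tilde M)\le a_n$, which for large $n$ is smaller than $\kappa\le\Delta(\tilde M)$, so $x$ has a unique projection $\tilde x\in\tilde M$ satisfying $x-\tilde x\in T_{\tilde x}^\perp\tilde M$ and $\|x-\tilde x\|\le a_n\le b$; hence $x\in L_b(\tilde x)$. For part 3, if $x\in L_b(\tilde x)\cap M$, the vector $x-\tilde x$ lies in $T_{\tilde x}^\perp\tilde M$ with length at most $b<\kappa$ for large $n$; the condition-number property then forces $\tilde x$ to be the unique projection of $x$ onto $\tilde M$, so $\|x-\tilde x\|=d(x,\tilde M)\le H(M,\tilde M)\le a_n\le 2a_n$.

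For part 1, I argue that if $\theta:=\angle(T_xM,T_{\tilde x}\tilde M)$ were not small then the two manifolds would separate linearly in $\theta$, contradicting $H(M,\tilde M)\le a_n$. Pick a unit vector $u_0\in T_{\tilde x}\tilde M$ realizing the max principal angle with $T_xM$, so that the component of $u_0$ perpendicular to $T_xM$ has length $\sin\theta$. Walking along $\tilde M$ from $\tilde x$ by arclength $t$ in direction $u_0$ reaches a point $\gamma(t)$ with $\|\gamma(t)-(\tilde x+tu_0)\|\le t^2/(2\kappa)$ by part~1 of Lemma \ref{lemma::smale}. Projecting $\gamma(t)-x$ onto $T_x^\perp M$ and using the graph-over-tangent property of $M$ (reach $\ge\kappa$ forces perpendicular displacement of order $r^2/\kappa$ at tangential scale $r$) yields the inequality $t\sin\theta\le C(a_n+t^2/\kappa)$. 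Optimizing at $t\asymp\kappa\sin\theta$ gives $\sin\theta\le C'\sqrt{a_n/\kappa}$, which is below $\sin(\pi/4)=1/\sqrt{2}$ for all sufficiently large $n$.

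Part 2 is then a contraction-mapping/inverse-function-theorem argument. Pick $x_0\in M$ with $\|x_0-\tilde x\|\le a_n$ and parameterize $M$ near $x_0$ as a graph $u\mapsto x_0+u+f(u)$ over $T_{x_0}M$ with $f(0)=0$, $f'(0)=0$, and $\|f(u)\|=O(\|u\|^2/\kappa)$. Define $F(u)=\Pi_{T_{\tilde x}\tilde M}(x_0+u+f(u)-\tilde x)$, so zeros of $F$ parameterize $M\cap(\tilde x+T_{\tilde x}^\perp\tilde M)$. Since $\|F(0)\|\le a_n$, and since part~1 ensures that $DF(0)=\Pi_{T_{\tilde x}\tilde M}|_{T_{x_0}M}$ has inverse of operator norm at most $\sqrt{2}$, the contraction principle produces $u^*$ with $\|u^*\|=O(a_n)$ and $F(u^*)=0$; the corresponding point lies in $L_b(\tilde x)\cap M$ since its distance from $\tilde x$ is $O(a_n)\ll b=\sigma+a_n$. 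Finally, for part 4, suppose $x_1\ne x_2\in L_b(\tilde x)\cap M$; then $v=x_2-x_1\in T_{\tilde x}^\perp\tilde M$ has length $O(a_n)$ by part 3, and because the max principal angle is invariant under orthogonal complementation (for equi-dimensional subspaces), part~1 forces $\|\Pi_{T_{x_1}M}(v)\|/\|v\|<1/\sqrt{2}$. On the other hand, parts 2--4 of Lemma \ref{lemma::smale} show that for two nearby points on a manifold of reach $\ge\kappa$ the chord aligns with the tangent up to $O(\|v\|/\kappa)=o(1)$, forcing $\|\Pi_{T_{x_1}M}(v)\|/\|v\|\ge 1-O(a_n/\kappa)>1/\sqrt{2}$, a contradiction.

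The main obstacle is the quantitative angle estimate of part 1: once that is in place, parts 2 and 4 are routine consequences. The delicate point is to balance the linear separation produced by a tangent-plane mismatch against the quadratic bending of each manifold in such a way that every constant depends only on $\kappa$, not on $a_n$ or $n$, so that the bound $\theta<\pi/4$ holds uniformly over $M\in\cM_n$ for all large $n$.
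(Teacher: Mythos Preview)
Your argument is correct and complete, but it differs from the paper's in three of the five parts. For part~3 you invoke the unique-projection characterization of reach directly (since $x-\tilde x\in T_{\tilde x}^\perp\tilde M$ has length $<\kappa$, $\tilde x$ must be the nearest point of $\tilde M$ to $x$, so $\|x-\tilde x\|=d(x,\tilde M)\le a_n$), whereas the paper applies part~5 of Lemma~\ref{lemma::smale}; your route is shorter and in fact gives the sharper bound $a_n$ rather than $2a_n$. For part~1 the paper runs a picture-based contradiction: it works in the plane spanned by the two offending unit tangent vectors, places tangent circles of radius $\kappa$, walks along a geodesic of $M$, and chases angles using Lemma~\ref{lemma::smale} to reach a contradiction. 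Your ``linear separation vs.\ quadratic bending'' balance is a genuinely different mechanism and yields the quantitative estimate $\sin\theta\le C\sqrt{a_n/\kappa}$, which is strictly more than the bare $\theta<\pi/4$. For part~2 the paper builds a thin cylinder around $\tilde x$ and argues topologically that $M$ cannot exit through the top or bottom (that would violate part~1), hence must cross the fiber; your contraction-mapping argument is a different, analytic mechanism that also pins down the intersection to within $O(a_n)$ of $x_0$. Parts~4 and~5 are the same idea in both proofs, though you phrase part~4 via projections and principal angles while the paper uses a mean-value point on the connecting geodesic. In short, the paper's proof is hands-on geometric and tied closely to the itemized statements of Lemma~\ref{lemma::smale}; yours is more systematically analytic and delivers sharper intermediate constants.
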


\begin{proof} {\em 1.}
Let $x$ and $\tilde x$ be as given in the statement of the lemma
and let $\theta = {\sf angle}(T_x M, T_{\tilde x} \tilde M)$.
Suppose that $\theta \geq \pi/4$.
There exists unit vectors $u \in T_{\tilde x} \tilde M$
and $v\in T_x M$ such that ${\sf angle}(u,v) = \theta$.
Without loss of generality, we can assume that $x = \tilde x$.
(The extension to the case $x\neq \tilde x$ is straightforward.)

Consider the plane defined by $u$ and $v$
as in Figure \ref{fig::angleplot}.
We assume, without loss of generality, that $(u+v)/2$ 
generates the $x$-axis in this plane and that $v$ lies above the 
$x$-axis and $u$ lies below the $x$ axis.
Let $\ell$ denote the horizontal line, parallel to the $x$-axis and 
lying $2 a_n$ units above the horizontal axis.
Hence, $u$ and $v$ each make an angle greater than $\pi/8$
with respect to the $x$-axis.

\begin{figure}
\begin{center}
\includegraphics[width=2.9in,angle=4]{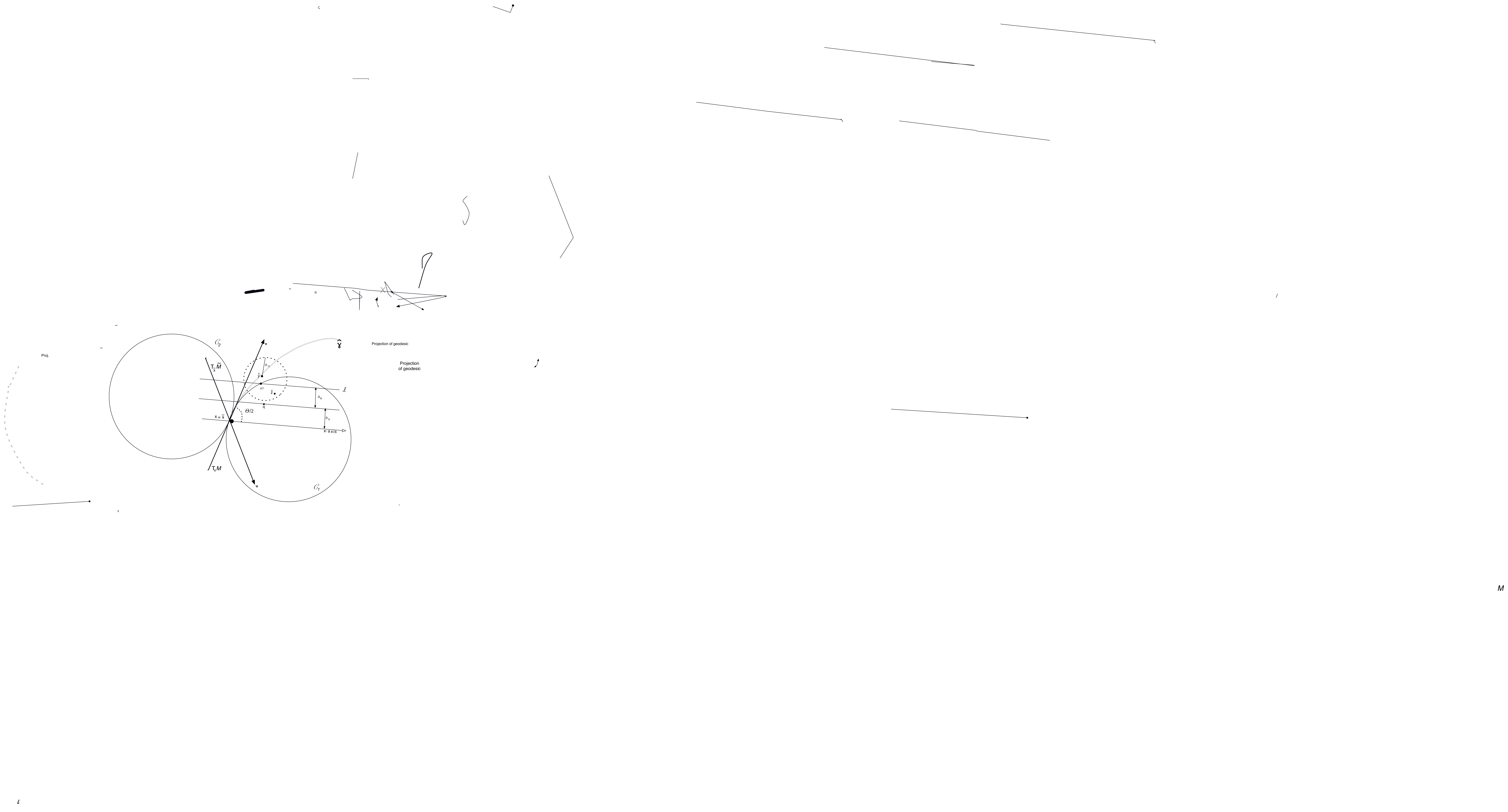}
\vspace{-1.3cm}
\caption{Figure for the proof of part 1 of Lemma \ref{lemma::toothpick}.}
\label{fig::angleplot}
\end{center}
\end{figure}

Consider the two circles $\cC_1$ and $\cC_2$
tangent to $M$ at $x$ with radius $\kappa$
where $\cC_1$ lies below $v$ and $\cC_2$ lies above $v$.
Let $w$ be the point at which $\cC_1$
intersects $\ell$.
The arclength of $\cC_1$ from $x$ to $w$ is $C a_n$ for some $C>1$.
Let $\gamma$ be 
the geodesic on $M$ through $x$
with gradient $v$.
The projection $\hat\gamma$ of $\gamma$ into the plane
must fall between $\cC_1$ and $\cC_2$.
Let $y=\gamma(C a_n)$ and $\hat y$ be the projection of $y$ into the plane.

Now 
$||y-\tilde x|| \geq ||\hat y - \tilde x|| \geq ||w - \tilde x|| \geq 2 a_n > a_n$.
There exists $\tilde z \in \tilde M$ such that
$||\tilde z - y || \leq a_n$.
Hence,
$||\hat z - \hat y || \leq a_n$
where $\hat z$ is the projection of $\tilde z$ into the plane.
Let $q$ be the point on the plane with coordinates $( a_n \sqrt{C^2-1}, a_n)$.
Thus, $||q-\tilde x|| = C\, a_n$.
Note that
$\angle(\hat z - \tilde x,u)$
is larger than 
the angle between
$q-\tilde x$ and the $x$-axis
which is
${\rm arctan}\left( \frac{1}{\sqrt{C^2-1}}\right) \equiv \alpha >0$.
Hence,
$$ 
\angle(\tilde z - \tilde x,u) \geq 
\angle(\hat z - \tilde x,u) \geq  \alpha.
$$

Let $\tilde \gamma$ be a geodesic on $\tilde M$,
parameterized by arclength connecting
$\tilde x$ and $\tilde z$.
Thus
$\tilde \gamma(0) = \tilde x$ and
$\tilde \gamma(T) = \tilde z$ for some $T$.
There exists some $0 \leq t\leq T$ such that
$\gamma'(t) \propto \tilde z - \tilde x$. So
$$
\angle(\gamma'(t),\gamma'(0)) = \alpha >0.
$$

However, $||\tilde z - \tilde x|| \leq (C+1) \, a_n$
which implies, by part 2 of 
Lemma \ref{lemma::smale}, that
$\angle(\gamma'(t),\gamma'(0)) = O(\sqrt{a_n}) < \alpha$
which is a contradiction.

\vspace{1cm}

\noindent
{\em 2.} For any $\tilde x \in \tilde M$, the closest point $x\in M$
must satisfy $\norm{x - \tilde x} \le a_n$.
Let $y$ be the projection of $x$ onto $T_{\tilde x} \tilde M$.
Let $U = T_{\tilde x} \tilde M \intersect B_d(y,  a_n)$.
Let ${\rm Cyl} = \Union_{u\in U} B_D(u, 3 a_n) \intersect \left(T_{\tilde x}\tilde M\right)^\perp$.
${\rm Cyl}$ is a small hyper-cylinder containing $y$ and $\tilde x$, with the former in the center.
$M$ cannot intersect the top or bottom faces of the cylinder.
Otherwise, we can find a point $p\in M$ such that
$\angle (T_{\tilde x}\tilde M,T_p M) > {\rm arctan}(1) = \pi/4$
contradicting {\em 1.}
Thus, any path through $x$ on $M$ must intersect the sides of ${\rm Cyl}$.
Hence, $L_b(\tilde x) \intersect M \ne \emptyset$.

\vspace{1cm}

\noindent {\em 3.} Let $x\in M\cap L_b(\tilde x)$.
Suppose that $||x-\tilde x|| > 2a_n$.
There exists $q\in \tilde{M}$ such that
$||q-x|| \leq a_n$.
Note that $||q- \tilde x|| > a_n$.
Now we apply
part 5 Lemma \ref{lemma::smale}
with
$p=\tilde x$ and $v=x$.
This implies that
$||v-p|| = ||x-\tilde x|| < a_n^2/\kappa$
which contradicts the assumption that
$||x-\tilde x|| > 2a_n$.

\vspace{1cm}

\noindent {\em 4.}
Suppose that more than one point of $M$ were in $L_b({\tilde x})$.
Pick two and call them $x_1$ and $x_2$.
By {\em 3}, $\norm{x_i - \tilde x} \le 2a_n$.
It follows that $\norm{x_1 - x_2} \le 4 a_n$
and thus they are $O(a_n)$ close in geodesic distance
by part 3 of Lemma \ref{lemma::smale}.
Hence, there is a geodesic on $M$ connecting $x_1$ and $x_2$
that is contained strictly within the $C a_n$ ball.
Because $x_2 - x_1$ lies in $L_b({\tilde x})$ and is consequently
orthogonal to $T_{\tilde x} \tilde M$,
there must exist a point on the geodesic whose angle with
$T_{\tilde x} \tilde M$ equals $\pi/2$,
contradicting part {\em 1.}

\vspace{1cm}

\noindent {\em 5.}
Because $H(\tilde M,M)\leq a_n$,
we have that $M \subset {\sf tube}(\tilde M, a_n)$.
Because $a_n < \kappa$, the fibers $L_b({\tilde x})$ partition
${\sf tube}(\tilde M, a_n)$.
Hence, each $x\in M$ must lie on one (and only one) $L_b({\tilde x})$.
\end{proof}

\vspace{.25cm}

\noindent
{\bf Step 3b. Construct slabs that cover $M$ nicely.}
Let $\gimel_j = B_D(x_j,\delta_n)\cap \tilde M$.
Define the slab
\begin{equation}
R_j = \bigcup_{x\in\gimel_j}  L_b(x,\tilde M).
\end{equation}

\begin{lemma}
\label{lemma::slabs}
The collection of slabs
$R_1,\ldots, R_N$ has the following properties.
Let $M\in {\cal M}_n$.
\begin{enumerate}
\item $M \subset \bigcup_{j=1}^N R_j$.
\item $M\cap R_j$ is function-like over $R_j$.
That is, there exists a function
$g_j:\gimel_j \to \mathbb{R}^{D-d}$ such that
$M\cap R_j= \{ g_j(x): \ x\in \gimel_j\}$.
\item For each $x\in \gimel_j$,
$L_b(x)\cap M\neq \emptyset$.
\item There exists a linear function
$\ell_j:\gimel_j \to \mathbb{R}^{D-d}$ such that
$\sup_{x\in\gimel_j} ||g_j(x) - \ell_j(x)|| \leq C \delta_n^2$.
\item $\sup_{M\in {\cal M}_n} {\sf diam}(M\cap R_j) \leq C \delta_n$.
\end{enumerate}
\end{lemma}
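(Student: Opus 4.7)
The plan is to derive the five properties in order, with Claims 1--3 being essentially bookkeeping consequences of Lemma \ref{lemma::toothpick}, Claim 4 being the substantive step that requires a careful Taylor/curvature argument, and Claim 5 falling out as a corollary.

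For Claim 1, I would invoke part 5 of Lemma \ref{lemma::toothpick} to write $M\subset \bigcup_{\tilde x\in\tilde M}L_b(\tilde x,\tilde M)$. Because the balls $B_D(x_j,c\delta_n)$ cover $\tilde M$, every $\tilde x\in\tilde M$ lies in some $\gimel_j$, and so its fiber $L_b(\tilde x)$ is contained in the slab $R_j$; taking the union gives $M\subset\bigcup_j R_j$. Claims 2 and 3 then follow from parts 2 and 4 of Lemma \ref{lemma::toothpick}: for each $x\in\gimel_j$, $L_b(x,\tilde M)\cap M$ is nonempty and a singleton, so we may define $g_j(x)$ to be that unique point. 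Identifying the fiber $L_b(x,\tilde M)\cong \mathbb{R}^{D-d}$ via coordinates (e.g., an orthonormal basis of $T_{x_j}^\perp \tilde M$ parallel-transported to $T_x^\perp \tilde M$) realizes $g_j$ as an $\mathbb{R}^{D-d}$-valued function, and the equality $M\cap R_j = \{g_j(x): x\in\gimel_j\}$ is immediate from $R_j = \bigcup_{x\in\gimel_j}L_b(x,\tilde M)$.

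The main step is Claim 4. Let $y_j=g_j(x_j)$ and let $\phi:\tilde M\to M$ denote the normal-projection map $\tilde x\mapsto L_b(\tilde x,\tilde M)\cap M$, so that $g_j=\phi|_{\gimel_j}$. Because $\Delta(M),\Delta(\tilde M)\geq\kappa$, part 1 of Lemma \ref{lemma::smale} bounds the second fundamental form of each manifold by $1/\kappa$. This gives two quadratic expansions: for $x\in\gimel_j$ there is $u\in T_{x_j}\tilde M$ with $\|u\|\leq\delta_n$ and $x=x_j+u+O(\delta_n^2/\kappa)$; similarly $g_j(x)-y_j = v + O(\delta_n^2/\kappa)$ for some $v\in T_{y_j}M$. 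By part 1 of Lemma \ref{lemma::toothpick}, since $\|x_j-y_j\|\leq 2a_n$, the angle between $T_{x_j}\tilde M$ and $T_{y_j}M$ is at most $\pi/4$, so the differential $A_j=d\phi_{x_j}:T_{x_j}\tilde M\to T_{y_j}M$ is well defined with operator norm bounded by $\sqrt{2}$. Combining the two expansions yields $v = A_j u + O(\delta_n^2/\kappa)$, whence setting $\ell_j(x) = y_j + A_j\,\pi_{T_{x_j}\tilde M}(x-x_j)$ produces an affine function satisfying $\|g_j(x)-\ell_j(x)\| \leq C\delta_n^2$ uniformly in $x\in\gimel_j$.

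Claim 5 is then a consequence: $\ell_j$ is affine with Lipschitz constant bounded by $\|A_j\|\leq\sqrt{2}$, so $\operatorname{diam}(\ell_j(\gimel_j))\leq \sqrt{2}\,\operatorname{diam}(\gimel_j) = O(\delta_n)$, and the $O(\delta_n^2)$ approximation from Claim 4 is negligible, giving $\operatorname{diam}(M\cap R_j)\leq C\delta_n$. The main obstacle is Claim 4: one must make the two second-order Taylor expansions rigorous and, crucially, obtain bounds on the second-order remainder that are uniform over the whole class $\cM_n$ (so that $C$ depends only on $\kappa$, $d$, $D$). The key enabler is that both manifolds have the same reach bound $\kappa$, which controls the quadratic deviation from each tangent plane by the same constant and makes the remainder estimates uniform.
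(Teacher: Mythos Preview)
Your proposal is correct and follows essentially the same route as the paper. For Claims 1--3 the paper likewise says they follow immediately from Lemma~\ref{lemma::toothpick}, with $g_j(x)$ defined as the unique point $M\cap L_b(x)$; for Claim~4 the paper simply writes the second-order Taylor expansion $g_j(x)=g_j(x_j)+(x-x_j)^T\nabla g_j+\tfrac12(x-x_j)^T\,{\sf Hess}\,(x-x_j)$, takes $\ell_j$ to be the first-order part, and bounds the Hessian via the curvature bound from part~1 of Lemma~\ref{lemma::smale}, so the remainder is $O(\delta_n^2)$. Your version expresses the same idea through the differential $A_j=d\phi_{x_j}$ of the normal-projection map and separate quadratic expansions of $\tilde M$ and $M$ over their tangent planes; this is a bit more explicit geometrically but amounts to the same Taylor argument, and the step ``combining the two expansions yields $v=A_j u+O(\delta_n^2/\kappa)$'' is exactly where one needs the uniform second-derivative bound on $g_j$ that the paper invokes directly. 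Claim~5 is dismissed in one line in the paper as well.
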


Thus the slabs cover $M$
and $M$ cuts across $R_j$
is a function-like way.
Moreover, $M\cap R_j$ is nearly linear.

\vspace{1cm}

\begin{proof}
The first three claims follow immediately
from Lemma \ref{lemma::toothpick}.
In particular, $g_j$ in claim {\em 2} is defined by
$g_j(x) = \{M \cap L_b(x)\}$.
Now we show {\em 4.}
We can write
$g_j(x) = g_j(x_j) + (x-x_j)^T \nabla g + \frac{1}{2} (x-x_j)^T {\sf Hess}\, (x-x_j)$
where
${\sf Hess}$ is the Hessian matrix of $g_j$ evaluated at some point between $x$ and $x_j$.
By part 1 of Lemma \ref{lemma::smale},
the largest eigenvalue of {\sf Hess} is bounded above by $1/\kappa$.
Since $||x-x_j|| \leq c \delta_n^2$, the claim follows.
Part {\em 5} follows easily.
\end{proof}

\vspace{1.0cm}
\noindent
\fbox{\bf Step 4:} {\bf Local Conditional Likelihood}
\vspace{0.5cm}

\noindent
Recall that
${\cal M}_n = \{ M\in {\cal M}(\kappa):\ H(\tilde M,M) \leq a_n\}$.
Let
\begin{equation}
{\cal Q}_n = \{ Q_M:\ M\in {\cal M}_n\}.
\end{equation}
Consider a slab $R_j$.
For each $Q\in {\cal Q}_n$ define
$Q_j \equiv Q(\cdot |R_j)$ by
$Q_j(A) = Q(A\cap R_j)/Q(R_j)$.
Note that 
$Q_j$ is supported over $\tube(M,\sigma)\cap R_j$.
Let
${\cal Q}_{n,j} =\{ Q_j:\ Q\in {\cal Q}_n\}$.
Before we proceed
we need to establish the following.

\vspace{.25cm}

\begin{lemma}
\label{lemma::thevolume}
Let ${\cal I}_j(M) = {\sf tube}(M,\sigma)\cap R_j$.
Then there exists $c_0>0$ such that
$$
\inf_{M\in {\cal M}_n} V( {\cal I}_j(M)) \geq c_0\delta_n^d.
$$
\end{lemma}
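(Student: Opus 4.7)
The plan is to parameterize the slab $R_j$ via the normal exponential map on $\tilde M$, use Lemma~\ref{lemma::toothpick} to pin down the single intersection of each fiber with $M$, and then exhibit a $(D-d)$-ball of radius of order $\sigma$ inside each fiber that lies in $M\oplus\sigma$. Integrating over $\gimel_j$, together with a standard $\delta_n^d$ lower bound on its $d$-volume, then yields the claim.

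\textbf{Step 1 (parameterization of $R_j$).} For $n$ large, $b=\sigma+a_n<\kappa\leq\Delta(\tilde M)$, so the map $\Phi(x,v)=x+v$ sends $\{(x,v):\ x\in\gimel_j,\ v\in T^\perp_x\tilde M,\ \|v\|<b\}$ diffeomorphically onto $R_j$. Since all principal curvatures of $\tilde M$ are bounded by $1/\kappa$ (part 1 of Lemma~\ref{lemma::smale}), a standard Jacobian calculation for normal tubes of positive reach gives $|\det D\Phi|\geq c_1:=((\kappa-\sigma)/(2\kappa))^d>0$ throughout this domain. Changing variables,
\begin{equation*}
V(\cI_j(M))\ \geq\ c_1\int_{\gimel_j}\nu_{D-d}\!\left(L_b(x,\tilde M)\cap(M\oplus\sigma)\right)d{\sf Vol}(x).
\end{equation*}

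\textbf{Step 2 (fiberwise lower bound).} For each $x\in\gimel_j$, parts 2, 3, and 4 of Lemma~\ref{lemma::toothpick} provide a unique $g_j(x)\in L_b(x,\tilde M)\cap M$, with $\|g_j(x)-x\|\leq 2a_n$. Let $\mathcal{D}(x)\subset x+T^\perp_x\tilde M$ be the $(D-d)$-ball of radius $\sigma-a_n$ centered at $g_j(x)$. Any $z\in\mathcal{D}(x)$ satisfies $\|z-x\|\leq(\sigma-a_n)+2a_n=b$, so $\mathcal{D}(x)\subset L_b(x,\tilde M)$; moreover $\|z-g_j(x)\|<\sigma$ with $g_j(x)\in M$, so $\mathcal{D}(x)\subset M\oplus\sigma$. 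Hence $\nu_{D-d}(L_b(x,\tilde M)\cap(M\oplus\sigma))\geq\omega_{D-d}(\sigma-a_n)^{D-d}\geq c_2$ for $n$ large, where $c_2>0$ depends only on $\sigma,d,D$.

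\textbf{Step 3 (base-patch volume and combination).} By Lemma~5.3 of \cite{smale} (already invoked in the lower-bound half of the proof of Lemma~\ref{lemma::meta}), the fact that $\tilde M\in\cM(\kappa)$ and $x_j\in\tilde M$ implies ${\sf Vol}(\gimel_j)={\sf Vol}(B_D(x_j,\delta_n)\cap\tilde M)\geq c_3\delta_n^d$ for $n$ large, with $c_3$ depending only on $\kappa$ and $d$. Combining the three estimates yields $V(\cI_j(M))\geq c_0\delta_n^d$ with $c_0=c_1c_2c_3$, uniformly over $M\in\cM_n$. Nothing here is subtle; the only point to monitor is that $b<\kappa$ so the normal neighborhood of $\tilde M$ is a genuine diffeomorphic tube, which holds for $n$ large since $a_n\to 0$ and $\sigma<\kappa$.
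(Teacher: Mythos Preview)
Your argument is correct. The only quibble is cosmetic: the explicit Jacobian constant $((\kappa-\sigma)/(2\kappa))^d$ should really be stated in terms of $b=\sigma+a_n$, but since $a_n\to 0$ your choice is a valid uniform lower bound for all large $n$.

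Your route differs from the paper's. The paper argues directly that a full-dimensional neighborhood $\cJ_j=\bigcup_{x\in\gimel_j}B_D(x,\sigma/C)$ of the base patch $\gimel_j\subset\tilde M$ sits inside $\cI_j(M)$, and then reads off $V(\cJ_j)\geq c\,\delta_n^d$. You instead change variables through the normal exponential map of $\tilde M$, reduce the volume to an integral over $\gimel_j$ of fiberwise $(D-d)$-measures, and lower bound each fiber by exhibiting the $(D-d)$-ball $\mathcal D(x)$ centered at the unique point $g_j(x)=L_b(x,\tilde M)\cap M$. What you gain is cleanliness at the boundary: because your balls $\mathcal D(x)$ live in the fibers $L_b(x,\tilde M)$ that define $R_j$, they are automatically contained in $R_j$, whereas the paper's full $D$-balls $B_D(x,\sigma/C)$ have fixed radius $\sigma/C\gg\delta_n$ and hence, taken literally, spill outside $R_j$ in the tangential directions near $\partial\gimel_j$. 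The paper's idea is of course salvageable (restrict to the normal component, or accept that the overshoot only affects constants), but your fiberwise bookkeeping makes the containment $\mathcal D(x)\subset R_j\cap(M\oplus\sigma)$ transparent. The price you pay is invoking the Jacobian bound for tubes of positive reach, which is standard but one extra ingredient.
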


\begin{proof}
By Lemma \ref{lemma::slabs},
$M\cap R_j$ lies in a slab of size $a_n$ orthogonal to $\gimel_j$.
Because the angle between the two manifolds on this set must be no more than $\pi/4$
and because $a_n > \delta_n$,
the manifold $M$ cannot intersect both the ``top'' and ``bottom'' surfaces
of the slab.
Hence, for large enough $C>0$,
$\cJ_j = \Union_{x\in\gimel_j} B_D(x, \sigma/C) \subset \cI_j$.
By construction, $V(\cI_j) \ge V(\cJ_j)  \ge c \delta_n^d$.
\end{proof}

\vspace{.25cm}

\noindent
{\bf Step 4a. The Entropy of ${\cal Q}_{n,j}$.}

\begin{lemma} \label{lemma::entropy-distr}
$\cH_{[\,]}(\epsilon,{\cal Q}_{n,j},h) \leq c_1 \log(c_2 /\epsilon)$.
\end{lemma}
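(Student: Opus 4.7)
The plan is to exploit the near-linear structure of $M \cap R_j$ established in Lemma~\ref{lemma::slabs} to reduce the bracketing problem to a finite-dimensional parameter-space covering, which naturally produces a logarithmic entropy bound.

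First, I would parameterize $\cQ_{n,j}$ by the affine approximation of $M \cap R_j$ within the slab. By Lemma~\ref{lemma::slabs}, each $M \in \cM_n$ corresponds to a function $g_j^M : \gimel_j \to \R^{D-d}$ with $\|g_j^M - \ell_j^M\|_\infty \le C\delta_n^2$ for some affine $\ell_j^M$. The slope of $\ell_j^M$ is bounded by $\tan(\pi/4) = 1$ thanks to part~1 of Lemma~\ref{lemma::toothpick}, and the intercept is bounded since $M\in\cM_n$ implies $H(\tilde M,M)\le a_n$. Hence the affine approximations live in a compact subset of $\R^{(D-d)(d+1)}$, which admits an $\eta$-net in $L^\infty(\gimel_j)$ of size $N_\eta \le (C/\eta)^{(D-d)(d+1)}$ by a standard volumetric argument.

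Second, for each affine $\ell$ in the net, I would construct a bracket $(l_\ell,u_\ell)$ that sandwiches $q_{M,j}$ for every $M$ with $\|\ell_j^M - \ell\|_\infty \le \eta$. Take $u_\ell = (c^\ast / V_{\min})\,\mathbf{1}\{y \in T_\ell^+\}$ and $l_\ell = (c_\ast / V_{\max})\,\mathbf{1}\{y \in T_\ell^-\}$, where $T_\ell^\pm$ are the tubes of radius $\sigma \pm (\eta + C\delta_n^2)$ around the graph of $\ell$ inside $R_j$, $V_{\min},V_{\max}$ are the infimum/supremum of $V(\cI_j(M))$ over such $M$ (both of order $\delta_n^d$ by Lemma~\ref{lemma::thevolume}), and $c_\ast,c^\ast$ are the constants from Lemma~\ref{lemma::meta} applied to the conditional density on $R_j$. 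Since $\cI_j(M)\subset T_\ell^+$ and $T_\ell^-\subset\cI_j(M)$ for any such $M$, this gives a valid bracket.

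Third, I would bound the Hellinger size of the bracket. Using Lemma~\ref{lemma::volume-diff} together with the $\delta_n^d$-volume estimate from Lemma~\ref{lemma::thevolume}, the symmetric-difference Lebesgue volume between $T_\ell^+$ and $T_\ell^-$ is $O\bigl((\eta + \delta_n^2)\,\delta_n^d\bigr)$, while both tubes have volume $\Theta(\delta_n^d)$. This yields $\ell_1(l_\ell,u_\ell) = O\bigl((\eta+\delta_n^2)/\sigma\bigr)$, and by \eqref{eq::hell-l1}, $h(l_\ell,u_\ell) = O\bigl(\sqrt{(\eta+\delta_n^2)/\sigma}\bigr)$. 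Choosing $\eta = c\epsilon^2$ (in the scale-regime $\epsilon \gtrsim \delta_n$ that is relevant when this lemma is combined with Theorem~\ref{thm::shen}) gives bracket size at most $\epsilon$, and hence
\[
\cH_{[\,]}(\epsilon,\cQ_{n,j},h) \le \log N_\eta \le (D-d)(d+1)\log(C/\eta) = c_1\log(c_2/\epsilon).
\]

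The main obstacle I expect is the pointwise tightness of the bracket: the Lebesgue density $q_M$ carries a Jacobian factor $\det(I - v\,K(x))$ depending on the second fundamental form $K$, which is not controlled by the affine parameterization (two manifolds with very close $\ell_j^M$ can still have very different Hessians of $g_j^M$). The resolution is that the brackets above are built to absorb this variation into the constants $c_\ast,c^\ast$ from Lemma~\ref{lemma::meta}; the parametric count is not affected, only the proportionality constants, so the logarithmic form of the entropy bound survives. Verifying carefully that the resulting Hellinger size is indeed $O(\sqrt{\eta/\sigma} + \sqrt{\delta_n^2/\sigma})$ -- rather than a $\Theta(1)$ quantity coming from the $c^\ast - c_\ast$ mismatch -- is the calculation that requires the most care, and relies on using the fact that the fat and thin tubes coincide on a set of volume that differs from each by only a $(\eta+\delta_n^2)/\sigma$ fraction.
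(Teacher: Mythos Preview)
Your high-level strategy---reduce to a finite-dimensional parameter space, take a polynomial-sized net, get logarithmic entropy---is exactly the paper's, and your affine parameterization in $\R^{(D-d)(d+1)}$ is a perfectly good substitute for the paper's ``crossing point plus angle'' description of $M\cap R_j$. The gap is in the bracket construction itself. With $u_\ell=(c^\ast/V_{\min})\mathbf{1}_{T_\ell^+}$ and $l_\ell=(c_\ast/V_{\max})\mathbf{1}_{T_\ell^-}$, the Hellinger integral over the \emph{common} region $T_\ell^-$ is
\[
\int_{T_\ell^-}\Bigl(\sqrt{c^\ast/V_{\min}}-\sqrt{c_\ast/V_{\max}}\Bigr)^2
= V(T_\ell^-)\cdot\frac{1}{V_{\min}}\Bigl(\sqrt{c^\ast}-\sqrt{c_\ast}\sqrt{V_{\min}/V_{\max}}\Bigr)^2
\ge c\,(\sqrt{c^\ast}-\sqrt{c_\ast})^2,
\]
which is a fixed positive constant whenever $c^\ast\neq c_\ast$. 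The constants from Lemma~\ref{lemma::meta} are genuinely different (the Jacobian of the fiber map varies by a $\Theta(1)$ factor across a tube of width $\sigma$, even inside a single slab), so this term does not vanish with $\eta$. Your proposed fix---that $T_\ell^+$ and $T_\ell^-$ agree up to an $O((\eta+\delta_n^2)/\sigma)$ volume fraction---only controls the contribution from the annulus $T_\ell^+\setminus T_\ell^-$; it says nothing about the $\Theta(1)$ contribution on $T_\ell^-$, which is precisely where the $c^\ast$--$c_\ast$ mismatch bites.

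The paper avoids this by centering each bracket at the \emph{actual} conditional density $q_{ij}$ of a manifold $M_{ij}$ in the net and using a small \emph{additive} perturbation: $u_{ij}=\bigl(q_{ij}+C\epsilon^2/V\bigr)\mathbf{1}_{M_{ij}\oplus(\sigma+\epsilon^2)}$ and $\ell_{ij}=\bigl(q_{ij}-C\epsilon^2/V\bigr)\mathbf{1}_{M_{ij}\oplus(\sigma-\epsilon^2)}$. Then $\int(u_{ij}-\ell_{ij})=O(\epsilon^2)$ directly, and $h(u_{ij},\ell_{ij})=O(\epsilon)$ without any cancellation miracle. The point is that a single density $q_{ij}$ already encodes the correct Jacobian profile along each fiber, so nearby densities differ from it by $O(\epsilon^2/V)$ pointwise rather than by a fixed multiplicative factor. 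If you want to salvage your construction, replace the constant heights $c^\ast/V_{\min}$, $c_\ast/V_{\max}$ by $q_{\ell}(y)\pm C\epsilon^2/V$ for the density $q_\ell$ of one representative manifold with affine part $\ell$; the net-size count is unchanged and the Hellinger size becomes $O(\epsilon)$.
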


\begin{proof}
We begin by creating a $\gamma$ Hausdorff net for $\cQ_{n,j}$.
To do this, we will parameterize the support of these
distributions.
Each $Q\in\cQ_{n,j}$ 
has support in the collection 
$\cS_{n,j} = \{(M \oplus \sigma)\cap R_j :\ M\in\cM_n\}$.
We will construct a $\gamma$-Hausdorff net for $\cS_{n,j}$.

Let $\tilde x\in\tilde M$ be the center of $\gimel_j$.
Let $y_1, \ldots, y_r$ be a $c_1\gamma$-net of $L_b(\tilde x)$,
and let $\theta_1 < \theta_2 < \cdots < \theta_s < \pi/2 - \eta$
for a small, fixed $\eta > 0$
where $\theta_j - \theta_{j-1} \leq c_2 \gamma$.
Note that $r = O(\gamma^{-(D - d)})$ and $s = O(1/\gamma)$.
For every pair $y_i$ and $\theta_j$, let $M_{ij}$ be a $M\in\cM_n$
that crosses through $y_i$ with ${\sf angle}(T_{y_i} M, T_{\tilde x} \tilde M) = \theta_j$.
These manifolds comprise a collection 
of size $O((1/\gamma)^{D-d-1})$
which we will denote by ${\sf Net}(\gamma)$.

Let $M\in\cM_n$.  
Let $y$ be the point where $M$ crosses $L_b(\tilde x)$.
Let $y_i$ be the closest point in the net to $y$
and let $\theta_j$ be the closest angle in the net to
$\angle(T_{y}M, T_{\tilde x} \tilde M)$.
Because the angle between $M$ and $M_{ij}$ is strictly less than $\pi/4$
(part 1 of Lemma \ref{lemma::toothpick})
and the slab $R_j$ has radius $\delta_n$,
it follows that $H(M, M_{ij}) \le C_1 \gamma + \delta_n C_2 \gamma \le C \gamma$.
Hence, ${\sf Net}(\gamma)$ is a $\gamma$-Hausdorff net.

Now consider ${\sf Net}(\gamma)$
with $\gamma = \epsilon^2$.
For each $M_{ij}\in {\sf Net}(\gamma)$
let $q_{ij}$ be the correspondng density and
define $u_{ij}$ and $\ell_{ij}$ by
$$
u_{ij}(y) = 
\left(q_{ij}(y) + \frac{C\epsilon^2}{V(M_{ij} \oplus(\sigma+\epsilon^2))}\right)
I(y\in M_{ij} \oplus (\sigma+\epsilon^2))
$$
and
$$
\ell_{ij}(y) = 
\left(q_{ij}(y) - \frac{C\epsilon^2}{V(M_{ij} \oplus(\sigma-\epsilon^2))}\right)
I(y\in M_j \oplus (\sigma-\epsilon^2)).
$$
Let ${\cal B} = \{ (\ell_{ij},u_{ij})\}$.

Let $M\in {\cal M}_n$
and let $M_{ij}$ be the element of the net closest to $M$.
It follows easily that
$u_{ij} \geq q_M  \geq \ell_{ij}$.
Thus ${\cal B}$ is a bracketing.
Now,
$$
\int u_{ij} - \ell_{ij} =
1+C \epsilon^2 - (1 - C \epsilon^2) = 2 C \epsilon^2.
$$
Hence,
$h(u_{ij},\ell_{ij}) \leq \sqrt{\int |u_{ij} - \ell_{ij}|} = \sqrt{2C} \epsilon$.
Hence, ${\cal B}$ is an $\sqrt{2C}-\epsilon$-bracketing.
So,
\begin{equation}
\cH_{[\,]}(\epsilon, {\cal Q}_{n,j}, h)   \le  (D-d-1)  \log(c/\epsilon),
\end{equation}
which proves the lemma.
\end{proof}

\vspace{.25cm}

\noindent
{\bf Step 4b. Hellinger Rate of the Conditional MLE.}
Let $\hat q$ be the mle over
${\cal Q}_{n,j}$ using the $Y_i$'s in $R_j$.
Let $\hat M$ be the manifold
corresponding to $\hat q$ and let
$\hat M_j = \hat M \cap R_j$.

\begin{lemma}
\label{lemma::hellinger-in-a-slab}
For all $Q$, all $A>0$ and all large $n$,
\begin{eqnarray*}
Q^n \left(\left\{ h(Q,\hat Q) > \left(\frac{C_0\log n}{n}\right)^{\frac{1}{2+d}}\right\} \right)\leq  n^{-A}.
\end{eqnarray*}
\end{lemma}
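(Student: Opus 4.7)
The plan is to apply Theorem \ref{thm::shen} together with Lemma \ref{lemma::shen2} to the class of conditional distributions $\cQ_{n,j}$, feeding in the logarithmic bracketing entropy bound $\cH_{[\,]}(\epsilon,\cQ_{n,j},h)\le c_1\log(c_2/\epsilon)$ from Lemma \ref{lemma::entropy-distr}. The main subtlety is that $\hat q$ is a maximum likelihood estimator built only from the random subsample $\{Y_i : Y_i\in R_j\}$, so the effective sample size is $N_j = \#\{i : Y_i\in R_j\}$ rather than $n$, and we must verify that $N_j$ is large enough to drive the Hellinger rate down to $(C_0\log n/n)^{1/(2+d)}$.

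First I would bound $N_j$ from below. By Lemma \ref{lemma::meta}, $Q(R_j)\ge C_* U_M(R_j)=C_* V(R_j\cap (M\oplus\sigma))/V(M\oplus\sigma)$, and by Lemma \ref{lemma::thevolume} this is at least $c\,\delta_n^d$. Hence $\mathbb{E}[N_j]\ge c\,n\delta_n^d = c\,n^{2/(2+d)}(C\log n)^{d/(2+d)}$. A one-sided Chernoff bound for the binomial $N_j$ gives
\begin{equation}
Q^n\Bigl(N_j < \tfrac12 \mathbb{E}[N_j]\Bigr) \le \exp\bigl(-c'\,n\delta_n^d\bigr) \le n^{-A}
\end{equation}
for all large $n$, since $n\delta_n^d$ grows like $n^{2/(2+d)}(\log n)^{d/(2+d)}$, which dominates $A\log n$.

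Next, conditional on $N_j$, the observations in the slab are iid $Q_j$, and Theorem \ref{thm::shen} applies with sample size $N_j$. The critical Hellinger rate $\epsilon_{N_j}$ solves $c_1\log(c_2/\epsilon_{N_j}) = N_j\,\epsilon_{N_j}^2$, so $\epsilon_{N_j}^2\asymp (\log N_j)/N_j \asymp (\log n)/(n\delta_n^d)$. Substituting $\delta_n = (C\log n/n)^{1/(2+d)}$ yields
\begin{equation}
\epsilon_{N_j}^2 \;\le\; \left(\frac{C_0\log n}{n}\right)^{\!2/(2+d)}
\end{equation}
for a suitable constant $C_0$ depending on $C$ and $c$. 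Lemma \ref{lemma::shen2} then produces the in-class MLE $\hat Q_j\in\cQ_{n,j}$ with $h(Q_j,\hat Q_j)\le (C_0\log n/n)^{1/(2+d)}$ except on a conditional event of probability at most $c_1\exp(-c_2 N_j\epsilon_{N_j}^2)$. A union bound over this event and the low-probability event $\{N_j<\tfrac12\mathbb{E}[N_j]\}$ gives the stated result.

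The main obstacle is arranging the constants so that the exponential tail translates to an arbitrarily small polynomial $n^{-A}$. Because $N_j\epsilon_{N_j}^2\asymp\log n$, the bound $c_1\exp(-c_2 N_j\epsilon_{N_j}^2)$ is only $n^{-c_2'}$ for some fixed $c_2'$; to make it $n^{-A}$ for arbitrary $A$ one inflates $C_0$ in the definition of $\epsilon_{N_j}$, which multiplies the exponent by a factor that can be made arbitrarily large. A secondary, purely bookkeeping issue is that $\hat q$ uses the random subsample: this is handled by noting that, conditional on which indices fall in $R_j$, those observations are iid $Q_j$, so Shen's theorem applies verbatim with $N_j$ in place of $n$.
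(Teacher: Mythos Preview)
Your proposal is correct and follows essentially the same approach as the paper: lower-bound $N_j$ via Lemmas~\ref{lemma::meta} and~\ref{lemma::thevolume} plus a binomial concentration inequality (the paper uses Bernstein, you use Chernoff), then apply Theorem~\ref{thm::shen}/Lemma~\ref{lemma::shen2} with the logarithmic entropy bound from Lemma~\ref{lemma::entropy-distr} at the effective sample size $N_j\asymp n\delta_n^d$. Your remarks about inflating $C_0$ to obtain an arbitrary polynomial tail $n^{-A}$ and about the conditional-iid structure of the subsample make explicit two points the paper leaves implicit.
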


\begin{proof}
Let $N_j$ be the number of observations 
from the second half of the data that are in $R_j$.
Let $\mu_j = \mathbb{E}(N_j)$ and
define
$m_n =  n^{\frac{2}{2+d}}$.
First, we claim that
$N_j \geq \mu_j/2 = O(m_n)$
for all $j$,
except on a set of probability
$e^{-c n^{2/(2+d)}}$. 
Let $\pi_j = Q(R_j)$.
By Lemma \ref{lemma::thevolume}
and Lemma \ref{lemma::meta},
$\pi_j \geq c \delta_n^d$ for some $c>0$.
Hence, $\mu_j \geq m_n$.
Note that
$\sigma^2 \equiv {\sf Var}(N_j)/n = \pi_j (1-\pi_j) \leq \pi_j$.
Let $t= \mu_j/2$.
By Bernstein's inequality,
$$
\mathbb{P}(N_j \leq \mu_j/2) =
\mathbb{P}(N_j -\mu_j \leq -\mu_j/2)\leq
\exp\left\{ - \frac{ t^2}{2n\sigma^2 + 2t/3} \right\}
\leq
\exp\left\{ - c n ^{2/(2+d)} \right\}.
$$
Hence, by the union bound, 
$$
\mathbb{P}(N_j \leq \mu_j/2\ {\rm for\ some\ }j) \leq
\frac{1}{N} \exp\left\{ - c n ^{2/(2+d)} \right\} \leq
 \exp\left\{ - c' n ^{2/(2+d)} \right\}
$$
since there are $N = O(1/\delta_n)$ slabs.
Thus we can assume that there are
at least order $m_n$
observations in each $R_j$.

Since
$\cH_{[\,]}(\epsilon,{\cal Q}_{n,j},h)\leq \log(C(1/\epsilon))$,
solving the equation
$\cH_{[\,]}(\epsilon,{\cal Q}_{n,j},h) = m_n\epsilon^2$ we get
$\epsilon_m \geq \sqrt{C \log m_n/m_n} = (\log n/n)^{ 2/(2(2+d))} = \delta_n$.
From
Lemma \ref{lemma::shen2},
we have, for all $Q\in Q_{n,j}$,
\begin{eqnarray*}
Q^n \left(\left\{ h(Q,\hat Q) > \delta_n\right\}\right)=
Q^n \left(\left\{ h(Q,\hat Q) > \epsilon_m\right\}\right)\leq
c_1 e^{-c_2 m_n \epsilon_m^2} \leq n^{-A}.
\end{eqnarray*}
\end{proof}

\noindent
{\bf Step 4c. Relating Hausdorff Distance to Hellinger Distance Within a Slab.}

\begin{lemma}
\label{lemma::hellinger-hausdorff-slab}
For each $M_1,M_2 \in {\cal M}_n$,
$H(M_1\cap R_j,M_2\cap R_j) \leq C\, h^2 (Q_{j1},Q_{j2})$.
\end{lemma}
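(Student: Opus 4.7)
The plan is to reduce the claim to a volumetric estimate for the symmetric difference of the two supports, then to evaluate that estimate using the near-affine slab geometry provided by Lemmas \ref{lemma::toothpick} and \ref{lemma::slabs}. Set $\tilde S_i = (M_i \oplus \sigma) \cap R_j$, the support of $Q_{ji}$, and write $b = H(M_1 \cap R_j, M_2 \cap R_j)$. Since $q_{j2}=0$ on $\tilde S_1\setminus\tilde S_2$ and symmetrically, the pointwise identity $(\sqrt{q_{j1}}-\sqrt{q_{j2}})^2 = q_{j1}$ on $\tilde S_1\setminus\tilde S_2$ yields
\[
h^2(Q_{j1},Q_{j2}) \;\geq\; Q_{j1}(\tilde S_1 \setminus \tilde S_2) + Q_{j2}(\tilde S_2 \setminus \tilde S_1).
\]
The conditional analogue of Lemma \ref{lemma::meta}, together with the two-sided bound $V(\tilde S_i) \asymp \sigma^{D-d}\delta_n^d$ (Lemma \ref{lemma::thevolume}), reduces the claim to showing $V(\tilde S_1 \circ \tilde S_2) \gtrsim b\,\sigma^{D-d-1}\delta_n^d$.

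To estimate the symmetric-difference volume I would use the fibered structure of $R_j$. By Lemma \ref{lemma::toothpick}, each fiber $L_b(u,\tilde M)$ for $u \in \gimel_j$ meets $M_i$ in a single point $p_i(u)$, and up to bounded distortion arising from the $<\pi/4$ tilt between $T^\perp_u \tilde M$ and $T^\perp_{p_i(u)} M_i$, the cross-section $\tilde S_i \cap L_b(u,\tilde M)$ is a $(D{-}d)$-ball of radius $\sigma$ centered at $p_i(u)$. The $(D{-}d)$-volume of the symmetric difference of two such balls offset by $s = \|p_1(u) - p_2(u)\|$ is of order $\sigma^{D-d-1}s$, and a Fubini-style integration in $u$ yields
\[
V(\tilde S_1 \circ \tilde S_2) \;\asymp\; \sigma^{D-d-1}\int_{\gimel_j}\|p_1(u)-p_2(u)\|\,d\mu_{\tilde M}(u).
\]

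The crucial remaining step is to prove $\int_{\gimel_j}\|p_1-p_2\|\,d\mu_{\tilde M} \gtrsim b\,\delta_n^d$. By part 4 of Lemma \ref{lemma::slabs}, each $p_i$ differs in sup norm by at most $C\delta_n^2$ from an affine map $\ell_i:\gimel_j \to \mathbb{R}^{D-d}$, and Lemma \ref{lemma::toothpick} together with the definition of $b$ supplies some $u^* \in \gimel_j$ with $\|p_1(u^*)-p_2(u^*)\| \geq cb$. Thus the affine map $\ell_1 - \ell_2$ attains sup norm at least $cb - O(\delta_n^2)$ on $\gimel_j$, and a direct computation shows that any affine map on a $d$-ball of radius $\delta_n$ has $L^1$ norm comparable to its sup norm times $\delta_n^d$, whether the constant term or the linear term dominates. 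Chaining the estimates gives $h^2(Q_{j1},Q_{j2}) \gtrsim b/\sigma$, and since $\sigma$ is a fixed positive constant this is the desired conclusion.

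The principal obstacle is the affine reduction at the end: it bites cleanly only when $b \gtrsim \delta_n^2$, so that the $O(\delta_n^2)$ linearization error does not overwhelm the sup-norm lower bound on $\ell_1-\ell_2$. In the complementary regime $b \lesssim \delta_n^2$ one must work directly with the Taylor expansion of $p_1-p_2$ at $u^*$, invoking the $1/\kappa$ curvature bound from part 1 of Lemma \ref{lemma::smale} to guarantee that $\|p_1-p_2\|$ remains of order $b$ over a $d$-region whose volume is still comparable to $\delta_n^d$, restoring the integral lower bound.
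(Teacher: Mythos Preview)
Your overall strategy matches the paper's: reduce $h^2$ to the measure of the symmetric difference of the two supports, pass via Fubini to the integral $\int_{\gimel_j}\|p_1-p_2\|$, and finally argue that this integral is $\gtrsim b\,\delta_n^d$. The paper simply \emph{asserts} this last step --- that there is $\gimel'\subset\gimel_j$ of volume $\ge c\,\delta_n^d$ on which $\|g_1-g_2\|\ge\gamma/2$ --- citing only smoothness, the $a_n$-slab constraint, and the $\pi/4$ angle bound. Your affine-approximation route is more careful and does furnish a genuine proof in the regime $b\gtrsim\delta_n^2$, so in that range you are doing the paper's work for it.

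The gap is in your handling of the complementary regime $b\ll\delta_n^2$. A second-order Taylor expansion of $p_1-p_2$ at $u^*$ with Hessian bounded by $O(1/\kappa)$ keeps $\|p_1-p_2\|\ge b/2$ only on a ball of radius $O(\sqrt{\kappa b})$, hence of volume $O(b^{d/2})$, \emph{not} $\delta_n^d$. Concretely, take $M_1=\tilde M$ and let $M_2$ be $\tilde M$ perturbed by a single bump of height $b\ll\delta_n^2$ as in the lower-bound construction of Section~\ref{sec::flying-saucer}; both lie in $\cM_n$, yet $\int_{\gimel_j}\|p_1-p_2\|\asymp b^{(d+2)/2}$ and so $h^2(Q_{j1},Q_{j2})\asymp b^{(d+2)/2}/\delta_n^d\ll b$. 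Thus the inequality $H\le C\,h^2$ is genuinely false for such pairs, and no curvature argument can restore it. (The paper's $\gimel'$ claim fails on the same example.) What Step~4d actually requires is only the weaker implication $h\le\delta_n\Rightarrow H\le C\delta_n^2$, and this \emph{does} follow by a dichotomy: either $H\le\delta_n^2$ already, or $H\gtrsim\delta_n^2$ and your affine argument delivers $H\le C\,h^2\le C\delta_n^2$.
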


\begin{proof}
Let $g_1$ and $g_2$ be defined as in Lemma \ref{lemma::slabs}.
There exists $x\in\gimel_j$
such that
$g_1(x)\in M_1$, 
$g_2(x)\in M_2$
and $||g_1(x) - g_2(x)||=\gamma$.
We claim there exists
$\gimel' \subset \gimel_j$ such that
$\inf_{x\in\gimel'}||g_1(x) - g_2(x)|| \geq \gamma/2$
and such that
$V(\gimel')\geq c\delta_n^d$.
This follows since
$g_1$ and $g_2$ are smooth,
they both lie in a slab of size $a_n$ around $\gimel_j$
and the angle between the tangent of $g_j(x)$
and $\gimel_j$ is bounded by $\pi/4$.

Create a modified manifold $M_2'$ such that $M_2'$ differs from
$M_1$ over $\gimel'$ by a $\gamma/2$ shift orthogonal to $\gimel_j$
and such that $M_2'$ is otherwise equal to $M_1$.
It follows that $\ell_1(M_1,M_2) \ge \ell_1(M_1,M_2')$
and $h(Q_1,Q_2) \ge h(Q_1,Q_2')$.

Every point in the support of the conditioned distributions can be written as an ordered pair
$(x,y)$ where $x\in \gimel_j$ and $y$ lies in a $d'$ ball
of radius $\sigma$.
$M_2'$ is shifted a distance of $\gamma/2$ in the direction orthogonal to $\gimel_j$.
As a result, the $\ell_1$ distance between $M_1$ and $M_2'$ 
equals the integral over $C'$ of the volume difference between
two $d'$ balls of the same radius that are shifted by $\gamma/2$
relative to each other.
This volume $\delta_n^d\gamma$.
Hence, 
$V(M_1\cap \gimel_j)\circ (M_2\cap\gimel_j) \geq \gamma \delta_n^d$.
Let
$A = \{x\in\gimel_j :\ q_1 >0, q_2 = 0\}$,
$B = \{x\in\gimel_j :\ q_1 >0, q_2 > 0\}$,
$C = \{x\in\gimel_j :\ q_1 =0, q_2 > 0\}$.
At least one of $A$ or $B$ has volume at least
$\gamma\delta_n^d/2$.
Without loss of generality, assume that it is $A$.
Then
\begin{eqnarray*}
h^2 (q_1,q_2) &=& \int (\sqrt{q_1} - \sqrt{q_2})^2 \geq
\int_A (\sqrt{q_1} - \sqrt{q_2})^2 = \int_A q_1\\
& \geq & \frac{ C_* c \delta_n^d \gamma}{\delta_n^d} = c C_* \gamma = cC_* H(M_1,M_2).
\end{eqnarray*}
\end{proof}

\vspace{.5cm}

\noindent
{\bf Step 4d. The Hausdorff Rate.}

\begin{lemma}
For any $A > 0$ there exists $C_0$ such that
$$
Q^n\left(\left\{ H(M\cap R_j,\hat M_j) >
\left(\frac{C_0\log n}{n}\right)^{\frac{2}{2+d}}\right\}\right)\leq
\frac{1}{n^A}.
$$
\end{lemma}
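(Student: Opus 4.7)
The plan is to obtain this bound by chaining Lemma \ref{lemma::hellinger-in-a-slab} with Lemma \ref{lemma::hellinger-hausdorff-slab}, so that a Hellinger rate within the slab is converted into a Hausdorff rate on $M\cap R_j$.

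First, I would reduce to the good event on which the slab construction is valid. By Lemma \ref{lemma::pilot}, the true manifold $M$ lies in ${\cal M}_n$ except on a set of $Q^n$-probability at most $\exp(-cn^{d/(2+d)})$. On this event, Lemma \ref{lemma::slabs} applies, so $M\cap R_j$ is function-like and can be meaningfully compared to $\hat M_j$. Moreover, by Lemma \ref{lemma::thevolume} combined with Lemma \ref{lemma::meta}, the conditional distribution $Q_j=Q(\cdot\mid R_j)$ is well-defined and belongs to ${\cal Q}_{n,j}$.

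Next, I would apply Lemma \ref{lemma::hellinger-in-a-slab} to the data in $R_j$. For any target exponent $A>0$, choose the constant inside the lemma sufficiently large that
\begin{equation*}
Q^n\left(\Bigl\{ h(Q_j,\hat Q_j) > \Bigl(\tfrac{C_1 \log n}{n}\Bigr)^{\frac{1}{2+d}}\Bigr\}\right) \leq \tfrac{1}{2 n^A}.
\end{equation*}
Then invoke Lemma \ref{lemma::hellinger-hausdorff-slab}, which gives the purely geometric bound $H(M\cap R_j,\hat M_j)\leq C\, h^2(Q_j,\hat Q_j)$ for every pair in ${\cal M}_n$. Squaring the Hellinger rate yields
\begin{equation*}
H(M\cap R_j,\hat M_j) \leq C\,\Bigl(\tfrac{C_1 \log n}{n}\Bigr)^{\frac{2}{2+d}} = \Bigl(\tfrac{C_0 \log n}{n}\Bigr)^{\frac{2}{2+d}}
\end{equation*}
for a suitable $C_0=C_0(C,C_1,A)$. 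A union bound across the pilot event and the Hellinger event absorbs the $\exp(-cn^{d/(2+d)})$ term into $\tfrac{1}{2n^A}$ for all large $n$, giving the desired $n^{-A}$ bound.

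There is no real obstacle here beyond bookkeeping: the heavy lifting has already been done in Steps 2, 3, and the earlier parts of Step 4. The only thing to be careful about is that Lemma \ref{lemma::hellinger-hausdorff-slab} controls Hausdorff distance only when both $M_1,M_2\in{\cal M}_n$, which is precisely why we had to first restrict to the pilot event; once that reduction is made, the two lemmas combine directly and the choice of $C_0$ is immediate.
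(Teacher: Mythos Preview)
Your proposal is correct and follows exactly the paper's approach: the paper's proof consists of the single sentence ``This follows by combining Lemma~\ref{lemma::hellinger-hausdorff-slab} and Lemma~\ref{lemma::hellinger-in-a-slab},'' and your write-up simply spells out how those two lemmas chain together. Your additional care about first restricting to the pilot event $\{M\in{\cal M}_n\}$ is appropriate bookkeeping that the paper leaves implicit.
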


\begin{proof}
This follows by combining
Lemma
\ref{lemma::hellinger-hausdorff-slab} and Lemma
\ref{lemma::hellinger-in-a-slab}.
\end{proof}

\vspace{1.0cm}
\noindent
\fbox{\bf Step 5:} {\bf Final Estimator}
\vspace{0.5cm}

Now we can combine the estimators from the difference slabs.
Let $\hat M = \bigcup_{j=1}^N \hat M_j$.
Recall that the number of slabs is
$N = (c\delta_n)^{-d} = ( C n /\log n)^{d/(2+d)}$.

\vspace{1cm}

\noindent
{\bf Proof of Theorem \ref{thm::upper}.}
Choose an $A > 2/(2 + d)$.
We have:
\begin{eqnarray*}
Q^n\left(\left\{H(\hat M,M) > \left(\frac{C_0\log n}{n}\right)^{\frac{2}{2+d}}\right\}\right) & \leq &
\sum_j Q^n\left(\left\{H(\hat M_j ,M\cap R_j) > \left(\frac{C_0\log n}{n}\right)^{\frac{2}{2+d}}\right\}\right)\\
& \leq & \frac{N}{n^A}\\
&=&
\left( \frac{n}{C \log n}\right)^{\frac{1}{2 + d}} \times \frac{1}{n^A} \leq
\frac{c}{n^A}.
\end{eqnarray*}
Let
$r_n = \left(\frac{C_0\log n}{n}\right)^{2/(2+d)}.$
Since $M$ and $\hat M$ are contained 
in a compact set,
$H(\hat M,M)$ is uniformly bounded above by a constant $K_0$.
Hence,
\begin{eqnarray*}
\mathbb{E}_{Q}H(\hat M,M) &=&
\mathbb{E}_{Q}[ H(\hat M,M) I(H(\hat M,M)> r_n)] + 
\mathbb{E}_{Q}[ H(\hat M,M) I(H(\hat M,M)\leq r_n)]\\
& \leq &
K_0\, Q^n( H(\hat M,M) > r_n) + r_n\\
& \leq & \frac{c}{n^A} + r_n = O \left(\left(\frac{\log n}{n}\right)^{2/(2+d)}\right).
\end{eqnarray*}
$\blacksquare$

\section{A Simple, Consistent Estimator}
\label{sec::suboptimal} 

Here we give a practical, consistent estimator, one
that does not converge at the optimal rate.
It is a generalization of the estimator in
\cite{us::2010} and is similar to the estimator
in \cite{smale}.
Let
\begin{equation}
\hat S = \bigcup_{i=1}^n B_D(Y_i,\epsilon)
\end{equation}
and define
$\hat{\partial S} = \partial (\hat S)$,
$\hat\sigma = \max_{y\in \hat S} d(y,\hat{\partial S})$
and
\begin{equation}
\hat M = \Bigl\{ y\in\hat S:\ d(y,\hat{\partial S}) \geq \hat\sigma - 2\epsilon\Bigr\}.
\end{equation}

\begin{lemma}
\label{thm::simple}
Let $\epsilon_n = C (\log n/n)^{1/D}$
in the estimator $\hat M$.
Then
\begin{equation}
H(M,\hat M) = O\left(\frac{\log n}{n}\right)^{1/D}
\end{equation}
almost surely for all large $n$.
\end{lemma}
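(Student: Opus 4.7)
The plan is to set $\epsilon = \epsilon_n = C(\log n/n)^{1/D}$ for a sufficiently large constant $C$ and to show $H(M, \hat M) \leq 4 \epsilon_n$ almost surely for all large $n$ via a deterministic geometric argument driven by a single probabilistic statement: the support $S = M \oplus \sigma$ is covered by the sample at rate $\epsilon_n$. I would first cover $S$ by $O(\epsilon_n^{-D})$ balls of radius $\epsilon_n/3$. By Lemma \ref{lemma::meta} each such ball centered in $S$ has $Q_M$-mass at least $c\,\epsilon_n^D$, so a union bound yields failure probability at most $\epsilon_n^{-D} e^{-c n \epsilon_n^D}$, which is summable for $C$ large by the choice of $\epsilon_n$. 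Borel--Cantelli then gives the deterministic inclusion
\begin{equation}
S \subset \hat S \subset S \oplus \epsilon_n \qquad \text{eventually almost surely,}
\end{equation}
where the right inclusion is automatic from $Y_i \in S$.

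From this I would deduce $H(\partial S, \partial \hat S) \leq \epsilon_n$. For $y \in \partial \hat S$, since $\hat S$ is a union of open balls, $y \notin \hat S$ and hence $y \notin S$, so $d(y, \partial S) = d(y, S) \leq \epsilon_n$. Conversely, for $z \in \partial S$, the outward unit normal to $\partial S$ at $z$ is well-defined because $\partial S$ has reach at least $\kappa-\sigma > 0$, and moving along it a distance exceeding $\epsilon_n$ produces points outside $S \oplus \epsilon_n \supset \hat S$, whence $d(z, \partial \hat S) \leq \epsilon_n$. Combined with the fiber identity $d(z, \partial S) = \sigma - d(z, M)$ valid for $z$ in the interior of $S$ (a direct consequence of $\Delta(M) > \sigma$), this yields the key bound $|\hat \sigma - \sigma| \leq \epsilon_n$: for $y \in S$, $d(y, \partial S) \leq \sigma$ with equality iff $y \in M$, and for $y \in \hat S \setminus S$, $d(y, \partial S) \leq \epsilon_n$; in either case $d(y, \partial \hat S) \leq \sigma + \epsilon_n$, while points of $M$ realize $d(y, \partial \hat S) \geq \sigma - \epsilon_n$.

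Given these two bounds the result follows by chasing definitions. For $y \in M$, $d(y, \partial \hat S) \geq \sigma - \epsilon_n \geq \hat \sigma - 2 \epsilon_n$, so $y \in \hat M$ and $M \subset \hat M$. For $\hat y \in \hat M$, $d(\hat y, \partial \hat S) \geq \hat \sigma - 2\epsilon_n \geq \sigma - 3\epsilon_n$, and hence $d(\hat y, \partial S) \geq \sigma - 4\epsilon_n$; for $n$ large enough this forces $\hat y \in S$ (otherwise $d(\hat y, \partial S) \leq \epsilon_n$ would contradict the previous inequality), and the fiber identity then gives $d(\hat y, M) \leq \sigma - d(\hat y, \partial S) \leq 4\epsilon_n$. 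Combining the two directions yields $H(M,\hat M) \leq 4\epsilon_n = O((\log n/n)^{1/D})$.

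The main obstacle is the tight calibration of $\hat \sigma$ in the second paragraph: a looser estimate $|\hat \sigma - \sigma| = O(\epsilon_n)$ with a large constant would enlarge the threshold in the definition of $\hat M$ and break the final inclusion argument. The key insight is that points of $\hat S \setminus S$ are always shallow (within $\epsilon_n$ of $\partial S$) while only points close to $M$ realize the maximal depth in $\hat S$ up to $O(\epsilon_n)$; this in turn relies on $\partial S$ having positive reach, which is ensured by the standing assumption $\sigma < \kappa$.
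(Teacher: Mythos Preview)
Your proof is correct and follows the same line as the paper's: establish $S\subset\hat S\subset S\oplus\epsilon_n$ and $H(\partial S,\partial\hat S)\le\epsilon_n$, deduce $|\hat\sigma-\sigma|\le\epsilon_n$, and then chase the definition of $\hat M$ together with the fiber identity $d(y,M)=\sigma-d(y,\partial S)$ on $S$ to get $M\subset\hat M$ and $d(\hat y,M)\le 4\epsilon_n$ for $\hat y\in\hat M$. The only difference is packaging: the paper obtains the first two facts by invoking the Cuevas--Rodr\'iguez-Casal support-estimation result (Theorem~\ref{lemma::supp}) after verifying that $S$ is standard and partly expandable (Lemma~\ref{lemma::standard}), while you argue directly via a covering and Borel--Cantelli; note that your appeal to Lemma~\ref{lemma::meta} for the ball-mass lower bound also tacitly uses the standardness of $S$, which is exactly what Lemma~\ref{lemma::standard} supplies.
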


Before proving the lemma we need a few definitions.
Following \cite{cuevas-boundary}, we say that a set $S$ is 
{\em $(\chi,\lambda)$-standard} if there exist positive numbers
$\chi$ and $\lambda$ such that
\begin{equation}
\nu_D(B_D(y,\epsilon)\cap S) \geq \chi \ \nu_D(B(y,\epsilon)) \ \ \ \ \ 
{\rm for \ all\ }y\in S,\  0< \epsilon \leq \lambda.
\end{equation}
We say that $S$ is {\em partly expandable} 
if there exist $r>0$ and $R\geq 1$ such that 
$H(\partial S, \partial (S\oplus\epsilon)) \leq R \epsilon$
for all $0\leq \epsilon < r$. 
A standard set has no sharp peaks while a partly expandable 
set has not deep inlets.

\begin{lemma}
\label{lemma::standard}
If $\sigma < \Delta(M)$ then 
$S = M\oplus \sigma$ is standard
with $\chi=2^{-D}$ and $\lambda=\sigma$
and partly expandable with $r=\Delta(M)-\sigma$ and $R=1$.
\end{lemma}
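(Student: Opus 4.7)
The plan is to handle the two properties separately. For the \emph{standard} property, I would take any $y \in S = M\oplus\sigma$ and let $w$ be its (unique, since $\sigma<\Delta(M)$) projection onto $M$, so $\|y-w\|\leq \sigma$. The key idea is to exhibit an explicit ball of radius $\epsilon/2$ contained in $B_D(y,\epsilon)\cap S$, because such a ball has volume exactly $2^{-D}\nu_D(B_D(y,\epsilon))$. Specifically, for $\epsilon\leq \sigma$, I would set
\begin{equation*}
y^\star = y + t\,\frac{w-y}{\|w-y\|}, \qquad t = \min\{\epsilon/2,\,\|w-y\|\},
\end{equation*}
(taking $y^\star=y$ if $y=w$) and verify that $B_D(y^\star,\epsilon/2)\subset B_D(y,\epsilon)\cap S$. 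Containment in $B_D(y,\epsilon)$ follows immediately from $\|y^\star - y\|=t\leq \epsilon/2$ and the triangle inequality. For containment in $S$, I would show that every $z\in B_D(y^\star,\epsilon/2)$ satisfies $\|z-w\|\leq \sigma$: by triangle inequality $\|z-w\|\leq \epsilon/2 + \|y^\star-w\| = \epsilon/2 + (\|y-w\|-t)$, which in either branch of the $\min$ is bounded by $\max\{\epsilon/2,\|y-w\|\}\leq \sigma$. Since $w\in M$, this gives $z\in M\oplus\sigma = S$.

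For \emph{partly expandable}, I would use that $S\oplus\epsilon = M\oplus(\sigma+\epsilon)$ and that, as long as $\sigma+\epsilon<\Delta(M)$, both level sets have the clean descriptions
\begin{equation*}
\partial S = \{y\in\R^D:\,d(y,M)=\sigma\},\qquad \partial(S\oplus\epsilon)=\{y\in\R^D:\,d(y,M)=\sigma+\epsilon\}.
\end{equation*}
From Lemma \ref{lemma::smale} (or directly from the definition of $\Delta(M)$), the projection $\pi_M$ is well-defined and unique on $M\oplus(\sigma+\epsilon)$. I would then define two maps: for $y\in\partial S$ with unit outward normal $n(y)=(y-\pi_M(y))/\sigma$, send $y\mapsto y+\epsilon\,n(y)\in\partial(S\oplus\epsilon)$; conversely, for $y'\in\partial(S\oplus\epsilon)$, send $y'\mapsto y'-\epsilon\,n'(y')\in \partial S$, where $n'(y')=(y'-\pi_M(y'))/(\sigma+\epsilon)$. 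Each map moves a point by exactly $\epsilon$, which yields both one-sided Hausdorff inequalities and hence $H(\partial S,\partial(S\oplus\epsilon))\leq\epsilon$, i.e., $R=1$.

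The most delicate point, though really only a matter of care, is verifying in the first part that the constructed ball $B_D(y^\star,\epsilon/2)$ is actually contained in $S$: one has to avoid the tempting but wrong argument of using the half-ball of $B_D(y,\epsilon)$ pointing toward $w$ (which only gives $\|z-w\|\leq \sqrt{\sigma^2+\epsilon^2}$), and instead shift the center inward by $\epsilon/2$ so that the two radii ``compensate.'' The partly expandable part is routine once one notes that the normal-extension map is a well-defined bijection between the two level sets inside the reach of $M$.
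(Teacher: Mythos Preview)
Your argument for the standard property is essentially the paper's: both find a ball of radius $\epsilon/2$, centered at a point on the segment from $y$ to its projection $w\in M$, that sits inside $B_D(y,\epsilon)\cap S$. Your use of $t=\min\{\epsilon/2,\|w-y\|\}$ is in fact a bit cleaner than the paper, which splits into the cases $d(y,\partial S)\geq\epsilon$ and $d(y,\partial S)<\epsilon$ and then places $y^\star$ at distance exactly $\epsilon/2$ along the segment without explicitly addressing the possibility $\|y-w\|<\epsilon/2$.

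For partly expandable, your route differs from the paper's. The paper does not bound $H(\partial S,\partial(S\oplus\epsilon))$ directly; instead it invokes Proposition~1 of \cite{cuevas-boundary}, which says that the outer rolling-ball condition implies partly expandable, and then simply exhibits a ball of radius $\Delta(M)-\sigma$ tangent to $\partial S$ and contained in $\overline{S^c}$. Your approach is more self-contained: you use $S\oplus\epsilon=M\oplus(\sigma+\epsilon)$ and the uniqueness of the projection $\pi_M$ inside the reach to build explicit normal-extension maps between $\partial S$ and $\partial(S\oplus\epsilon)$ that move each point by exactly $\epsilon$. This avoids the external citation at the cost of a couple of extra lines checking that the normal-extended point really lands on the correct level set (which, as you note, is exactly where $\sigma+\epsilon<\Delta(M)$ enters). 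Both arguments are correct; the paper's is shorter given the cited proposition, yours is elementary and stands alone.
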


\begin{proof}
Let $\chi=2^{-D}$.
Let $y$ be a point in $S$ and let $\Lambda(y) \leq \sigma$ be its distance 
from the boundary $\partial S$. If $\Lambda(y) \geq \epsilon$ then
$B_D(y,\epsilon)\cap S =B_D(y,\epsilon)$ so that
$\nu_D(B_D(y,\epsilon)\cap S) =\nu_D (B_D(y,\epsilon)) \geq
\chi \,\nu_D(B_D(y,\epsilon))$.

Suppose that $\Lambda(y) < \epsilon$. 
Let $v$ be a point on the manifold
closest to $y$ and let $y^*$ be the point on the segment joining $y$ to $v$ such 
that $||y-y^*||=\epsilon/2$. The ball $A=B_D(y^*,\epsilon/2)$ is contained in both 
$B_D(y,\epsilon)$ and $S$. Hence,
$\nu_D(B_D(y,\epsilon)\cap S) \geq \nu_D (A) \geq
 \chi \nu_D(B_D(y,\epsilon))$.
This is true for all $\epsilon \leq \sigma$, hence $S$ is $(\chi, \lambda)$-standard for
$\chi = 1/2^{D}$ and $\lambda=\sigma$.

Now we show that $S$ is partly expandable. By Proposition 1 in \cite{cuevas-boundary}
it suffices to show that a ball of radius $r$ rolls freely outside $S$ for some $r$,
meaning that, for each $y\in \partial S$, there is an $a$ such that
$y\in B(a,r)\subset \overline{S^c}$, where $S^c$ is the complement of $S$.
Let $O_y$ be the ball of radius $\Delta-\sigma$ tangent to $y$ such that
$O_y\subset S^c$. Such a ball exists by virtue of the 
fact that $\sigma < \Delta(M)$.
\end{proof}

\begin{theorem}[\cite{cuevas-boundary}]
\label{lemma::supp}
Let $Y_1,\ldots, Y_n$ be a random sample from a distribution with support $S$.
Let $S$ be compact, $(\lambda,\chi)$-standard and partly expandable.
Let 
\begin{equation}
\hat{S} = \bigcup_{i=1}^n B(Y_i,\epsilon_n)
\end{equation}
and let $\hat{\partial S}$ be the boundary of $\hat S$.
Let $\epsilon_n = C (\log n /n)^{1/D}$ with
$C > (2/(\chi\  \omega_D))^{1/D}$ 
where $\omega_D = V(B_D(0,1))$.
Then, with probability one,
\begin{equation}
H(S,\hat{S}) \leq C \left(\frac{\log n}{n}\right)^{1/D}
\ \ \ {\rm and}\ \ \ \ 
H(\partial S,\hat{\partial S}) \leq C \left(\frac{\log n}{n}\right)^{1/D}
\end{equation}
for all large $n$.
Also, $S\subset \hat S$ almost surely for all large $n$.
\end{theorem}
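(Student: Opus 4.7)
The plan is to treat the three conclusions in order, observing that the single covering argument for $H(S,\hat S)$ automatically yields $S\subset\hat S$, and then to bootstrap from $H(S,\hat S)$ to $H(\partial S,\hat{\partial S})$ using the partly expandable hypothesis. The easy inclusion $\hat S\subset S\oplus\epsilon_n$ is immediate since each $Y_i\in S$, so every point of $\hat S$ lies within $\epsilon_n$ of some $Y_i\in S$. The nontrivial half of the Hausdorff bound is therefore to show that, almost surely for all large $n$, every $y\in S$ is within $\epsilon_n$ of some $Y_i$; if so, $y\in B_D(Y_i,\epsilon_n)\subset\hat S$, which simultaneously gives $H(S,\hat S)\le\epsilon_n$ and $S\subset\hat S$.

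For the coverage claim I would use a deterministic cover plus Borel--Cantelli. Since $S$ is compact, cover it by $N_n=O(\epsilon_n^{-D})=O(n/\log n)$ balls of radius $\epsilon_n/2$ centered at points $z_1,\ldots,z_{N_n}\in S$. By $(\chi,\lambda)$-standardness, once $\epsilon_n\le\lambda$ each event $\{Y_1\in B_D(z_k,\epsilon_n/2)\cap S\}$ has probability at least a constant multiple of $\chi\omega_D\epsilon_n^D$, so the probability that no $Y_i$ falls in $B_D(z_k,\epsilon_n/2)$ is at most $\exp(-cn\epsilon_n^D)=n^{-\alpha}$ with $\alpha$ proportional to $C^D$. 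Choosing $C$ as in the statement makes $\alpha$ large enough that $N_n\cdot n^{-\alpha}$ is summable in $n$, and Borel--Cantelli then yields the required coverage almost surely for all large $n$. In particular, any $y\in S$ sits in some $B_D(z_k,\epsilon_n/2)$ which also contains a sample $Y_i$, whence $\|y-Y_i\|<\epsilon_n$.

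For the boundary bound I would argue directly from the two-sided containment $S\subset\hat S\subset S\oplus\epsilon_n$. If $\hat y\in\hat{\partial S}$, then $\hat y\in\hat S\subset S\oplus\epsilon_n$, so $d(\hat y,S)\le\epsilon_n$; moreover $\hat y$ cannot lie in the interior of $S$, because a neighborhood of such a point would be contained in $S\subset\hat S$ and would miss $\hat{\partial S}$. Hence $d(\hat y,\partial S)\le\epsilon_n$. The reverse inclusion is where partial expandability does the real work: for any $y\in\partial S$, the inclusions $\hat S\supset S$ and $\hat S\subset S\oplus\epsilon_n$ force the outward normal segment from $y$ of length $\epsilon_n$ to cross $\hat{\partial S}$, and the bound $H(\partial S,\partial(S\oplus\epsilon))\le R\epsilon$ guarantees that this crossing point is within $R\epsilon_n$ of $y$. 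Absorbing $R$ into the constant gives $H(\partial S,\hat{\partial S})\le C(\log n/n)^{1/D}$.

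The principal obstacle is this last geometric step: without partial expandability, $\hat S$ could in principle have deep inlets or missed protrusions that leave portions of $\partial S$ unapproximated from outside, so the proof must explicitly use the hypothesis to rule out such pathologies and to identify a point of $\hat{\partial S}$ sitting close to each $y\in\partial S$. Calibrating the constant $C$ so that the Borel--Cantelli tail is summable in $n$ while still absorbing the expansion factor $R$ is the remaining delicate piece, but it is purely a constants computation once the geometric picture is fixed.
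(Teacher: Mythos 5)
The paper does not actually prove this theorem: it is imported verbatim from \cite{cuevas-boundary} and is used as a black box in the proof of Lemma \ref{thm::simple}, so there is no in-paper argument to compare your attempt against. Your reconstruction follows the canonical Devroye--Wise analysis --- the trivial inclusion $\hat S \subset S \oplus \epsilon_n$, a covering/union-bound/Borel--Cantelli argument for $S \subset \hat S$ (which delivers $H(S,\hat S)\le\epsilon_n$ simultaneously), and then partial expandability to transfer the Hausdorff bound from the sets to their boundaries --- and that is indeed the right overall plan.

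Two steps need repair. The more substantive one is the conversion from volume to probability: you write that $(\chi,\lambda)$-standardness of $S$ gives $\mathbb{P}\left(Y_1\in B_D(z_k,\epsilon_n/2)\right)\ge c\,\chi\,\omega_D\,\epsilon_n^D$, but as defined in this paper standardness bounds the \emph{Lebesgue} measure $\nu_D\left(B\cap S\right)$ from below, not $P(B)$. Without an additional lower bound on the density of $Y$ over $S$, the Borel--Cantelli tail estimate does not follow from the stated hypotheses, and the sharp constant $C>(2/(\chi\omega_D))^{1/D}$ is not justified. (In \cite{cuevas-boundary} the standardness hypothesis is placed on the sampling measure $P$ itself, which is the form the argument actually needs; in this paper's application the missing density lower bound is supplied by Lemma \ref{lemma::meta}, but that is not part of the theorem as stated.) The second, minor, issue: $\hat S$ is a union of open balls, hence open, so $\hat{\partial S}\cap\hat S=\emptyset$ and your premise ``$\hat y\in\hat{\partial S}\Rightarrow\hat y\in\hat S$'' is false as written. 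The conclusion $d(\hat y,S)\le\epsilon_n$ still holds because $\hat y\in\overline{\hat S}\subset\overline{S\oplus\epsilon_n}$, but the stated justification should be corrected. Finally, for the reverse boundary bound it is worth making the crossing argument explicit: every $z\in\partial(S\oplus\epsilon_n)$ satisfies $d(z,Y_i)\ge\epsilon_n$ for all $i$, hence $z\notin\hat S$; partial expandability supplies such a $z$ with $\norm{z-y}\le R\epsilon_n$ for any $y\in\partial S$; and since $y\in S\subset\hat S$, the segment $[y,z]$ must meet $\hat{\partial S}$, giving $d(y,\hat{\partial S})\le R\epsilon_n$, with $R$ absorbed into the constant.
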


{\bf Proof of Lemma \ref{thm::simple}.}
Theorem \ref{lemma::supp} and 
Lemma \ref{lemma::standard} imply that
$H(S,\hat S) \leq C (\log n /n)^{1/D}$ and
$H(\partial S,\hat {\partial S}) \leq C (\log n /n)^{1/D}$.
It follows that $\hat\sigma \geq \sigma - \epsilon$.
First we show that $y\in\hat{M}$ implies that $d(y,M) \leq 4\epsilon$.
Let $y\in\hat{M}$. Then
$d(y,\partial S)  \geq  
d(y,\hat{\partial S}) -\epsilon \geq \hat\sigma - 2\epsilon-\epsilon \geq
\sigma - \epsilon - 2\epsilon - \epsilon = \sigma - 4\epsilon$.
So
$d(y,M) = \sigma - d(y,\partial S) \leq 
\sigma - \sigma + 4\epsilon  = 4 \epsilon$.
Now we show that $M\subset\hat M$. Suppose that $y\in M$. Then,
$$
d(y,\hat{\partial S})  \geq  
d(y,\partial S) -\epsilon =\sigma - \epsilon \geq
\hat\sigma - 2\epsilon
$$
so that $y\in\hat{M}$.
$\blacksquare$

\section{Conclusion and Open Questions}

We have established that the optimal rate 
for estimating a smooth manifold in Hausdorff distance
is
$n^{-\frac{2}{2+d}}$.
We conclude with some comments and open questions.

\vspace{-.15cm}

\begin{enum}
\item We have assumed that the noise is perpendicular to the manifold.
In current work we are deriving the minimax rate
under the more general assumption that $\epsilon$ is drawn from
a general, spherically symmetric distribution.
We also allow the distribution along the manifold to be any smooth density
bounded away from 0.
The rates are quite different and the methods for proving the rates
are substantially more involved.
Moreover, the rates depends on the behavior of the noise density
near the boundary of its support.
We will report on this elsewhere.
\item Perhaps the most important open question is to find
a computationally tractable estimator that
achieves the optimal rate.
It is possible that combining the estimator
in Section \ref{sec::suboptimal}
with one of the estimators in the computational geometry
literature 
(\cite{Dey})
could work.
However, it appears that some modification of such
an estimator is needed.
This is a difficult question which we hope to address in the future.
\item 
It is interesting to note that
\cite{smale}
have a Gaussian noise distribution.
While it is possible to infer the homology of $M$
with Gaussian noise it is not possible to infer
$M$ itself with any accuracy.
The reason is that manifold estimation is 
similar to (and in fact, more difficult than) nonparametric regression with 
measurement error.
In that case, it is well known
that the fastest possible rates under Gaussian noise are
logarithmic.
This highlights an important distinction
between estimating the topological structure of $M$ versus estimating $M$ in Hausdorff distance.
\item The current results take $\Delta(M)$, $d$ and $\sigma$ as known
(or at least bounded by known
constants). In practice these must be estimated.
We do not know whether there exist minimax estimators that are adaptive over
$d, \Delta(M)$ and $\sigma$.
\end{enum}

\section*{Acknowledgments}

The authors thank Don Sheehy for
helpful comments on an earlier draft of this paper.
The authors also thank the reviewers for their
comments and questions.

\section{Appendix}

\subsection{Proof of Equation \ref{eq::affinity-product}}
\label{sec::app1}

We will use the following two results
(see Section 2.4 of \cite{Tsybakov}):
\begin{equation}
h^2(P^n,Q^n) = 2 \left( 1 - \left[1 - \frac{h^2(P,Q)}{2}\right]^n\right)
\end{equation}
and
\begin{equation}
P\wedge Q \geq \frac{1}{2}\left(1 - \frac{h^2(P,Q)}{2}\right)^2.
\end{equation}
We have
\begin{eqnarray*}
P^n \wedge Q^n & \geq &
\frac{1}{2}\left(1 - \frac{h^2(P^n,Q^n)}{2}\right)^2 =
\frac{1}{8} \left( 1-\frac{h^2(P,Q)}{2}\right)^{2n}\\
& \geq &
\frac{1}{8} \left( 1-\frac{\ell_1(P,Q)}{2}\right)^{2n}
\end{eqnarray*}
since
$h^2(P,Q)\leq \ell_1(P,Q)$.

\subsection{Proof of Theorem \ref{thm::geometric} }
\label{sec::flying-saucer}

We define two manifolds
$M_1$ and $M_2$ 
with corresponding distributions $Q_1$ and $Q_2$
such that
(i) $\Delta(M_i) \geq \kappa$ $i=1,2$,
(ii) $H(M_1,M_2)=\gamma$ and 
(iii) such that
the volume of $S_1\circ S_2$ is of order $\gamma^{\frac d2 + 1}$,
where $S_i$ is the support of $Q_i$.

We write a generic $D$-dimensional vector as 
$y=(u, v, z)$, with $u \in \mathbb{R}^d$, $v \in \mathbb{R}$, $z \in \mathbb{R}^{D-d-1}$.
For each $u \in \mathbb{R}^d$ with $||u|| \leq 1$, 
define the disk in $\mathbb{R}^{d+1}$ 
$$
D_0 = \Bigl\{ (u,0) \in \mathbb{R}^{d+1}:\  u \in B_d(0, 1) \Bigr\}
$$
and let 
$$
F_0 = \partial \left(\bigcup_{(u,v) \in D_0} B_{d+1}((u,v), \kappa) \right).
$$
Now define the following $d$-dimensional manifold in $\mathbb{R}^D$
\begin{eqnarray*}
M_0
&=& 
\Bigl\{ (u,v,  0_{D-d-1}):\  (u,v) \in F_0 \Bigr\} \\
&=& 
\Bigl\{ (u, a(u),  0_{D-d-1}):\  u \in B_{d}(0,1 + \kappa) \Bigr\}
\cup
\Bigl\{ (u,-a(u),  0_{D-d-1}):\  u \in B_{d}(0,1 + \kappa) \Bigr\}
\end{eqnarray*}
where
$$
a(u) =
\left\{
\begin{array}{ll}
\kappa & {\rm if }  \;||u|| \leq 1 \\
\sqrt{\kappa^2 - (||u|| - 1 )^2}
 & {\rm if } \; 1 < \ ||u|| \leq 1 + \kappa .
\end{array}
\right.
$$
The manifold $M_0$ has no boundary and, by construction, 
$\Delta(M_0) \geq \kappa$.

Now define a second manifold that coincides with $M_0$ but has a small perturbation:
\begin{eqnarray*}
M_1
&=& 
\Bigl\{ (u, b(u),  0_{D-d-1}):\  u \in B_{d}(0,1 + \kappa) \Bigr\}
\cup
\Bigl\{ (u,-a(u),  0_{D-d-1}):\  u \in B_{d}(0,1 + \kappa) \Bigr\}
\end{eqnarray*}
where
$$
b(u) =
\left\{
\begin{array}{ll}
\gamma + \sqrt{\kappa^2 - ||u||^2}
 & {\rm if }  \;||u|| \leq \frac 12 \sqrt{4\gamma \kappa-\gamma^2} \\
2\kappa - \sqrt{\kappa^2 - (||u|| - \sqrt{4\gamma \kappa-\gamma^2} )^2}
 & {\rm if } \;\frac 12 \sqrt{4\gamma \kappa-\gamma^2} < \ ||u|| \leq \sqrt{4\gamma \kappa-\gamma^2}\\
a(u)
 & {\rm if } \; \sqrt{4\gamma \kappa-\gamma^2} < \ ||u|| \leq \sqrt{4\gamma \kappa-\gamma^2} + \kappa.\\
\end{array}
\right.
$$
Note that
$\Delta(M_1)\geq\kappa$ 
since the perturbation is obtained using portions of spheres
of radius $\kappa$.
In fact
\begin{itemize}
\item
for $||u|| \leq \frac 12\sqrt{4\gamma \kappa-\gamma^2}$, $b(u)$ is the 
$d+1$-th coordinate of the
``upper'' portion of the $(d+1)$-dimensional sphere
with radius $\kappa$ centered at $(0, \cdots, 0, \gamma)$, hence $b(u)$ satisfies
$$
||u||^2 + (b(u) -\gamma )^2 = \kappa^2  
\qquad \mbox{with } b(u) \geq \gamma;
$$
\item
for $\frac 12 \sqrt{4\gamma \kappa-\gamma^2} < \ ||u|| \leq \sqrt{4\gamma \kappa-\gamma^2}$,
$b(u)$ is the 
$(d+1)$-th coordinate of the
``lower'' portion of the $(d+1)$-dimensional sphere
with radius $\kappa$ centered at $( u \cdot\sqrt{4\gamma \kappa-\gamma^2}/||u||, 2\kappa)$
(note that the center of the sphere differs according to the direction of $u$), hence $b(u)$ satisfies
$$
\left|\left| u - \frac{u}{||u||} \sqrt{4\gamma \kappa-\gamma^2}\right|\right|^2 + (b(u) -2\kappa)^2 = \kappa^2 
\qquad \mbox{with } b(u) \leq 2\kappa.
$$
\end{itemize}

To summarize,
$M_0$ and $M_1$ are both manifolds with no boundary,
$\Delta(M_0)\geq\kappa$ 
and $\Delta(M_1)\geq\kappa$.
See Figure \ref{fig::proofTh5}.
Now
\begin{eqnarray*}
E_0 &=& M_0 - M_1 = \Bigl\{ (u,a(u),  0_{D-d-1}):\  u \in B_d(0, \sqrt{4\gamma \kappa-\gamma^2}) \Bigr\}
\\
E_1 &=& M_1 - M_0 = \Bigl\{ (u,b(u),  0_{D-d-1}):\  u \in B_d(0, \sqrt{4\gamma \kappa-\gamma^2}) \Bigr\}.
\end{eqnarray*}

\begin{figure}[h]
\begin{center}
\includegraphics[width=3.9in]{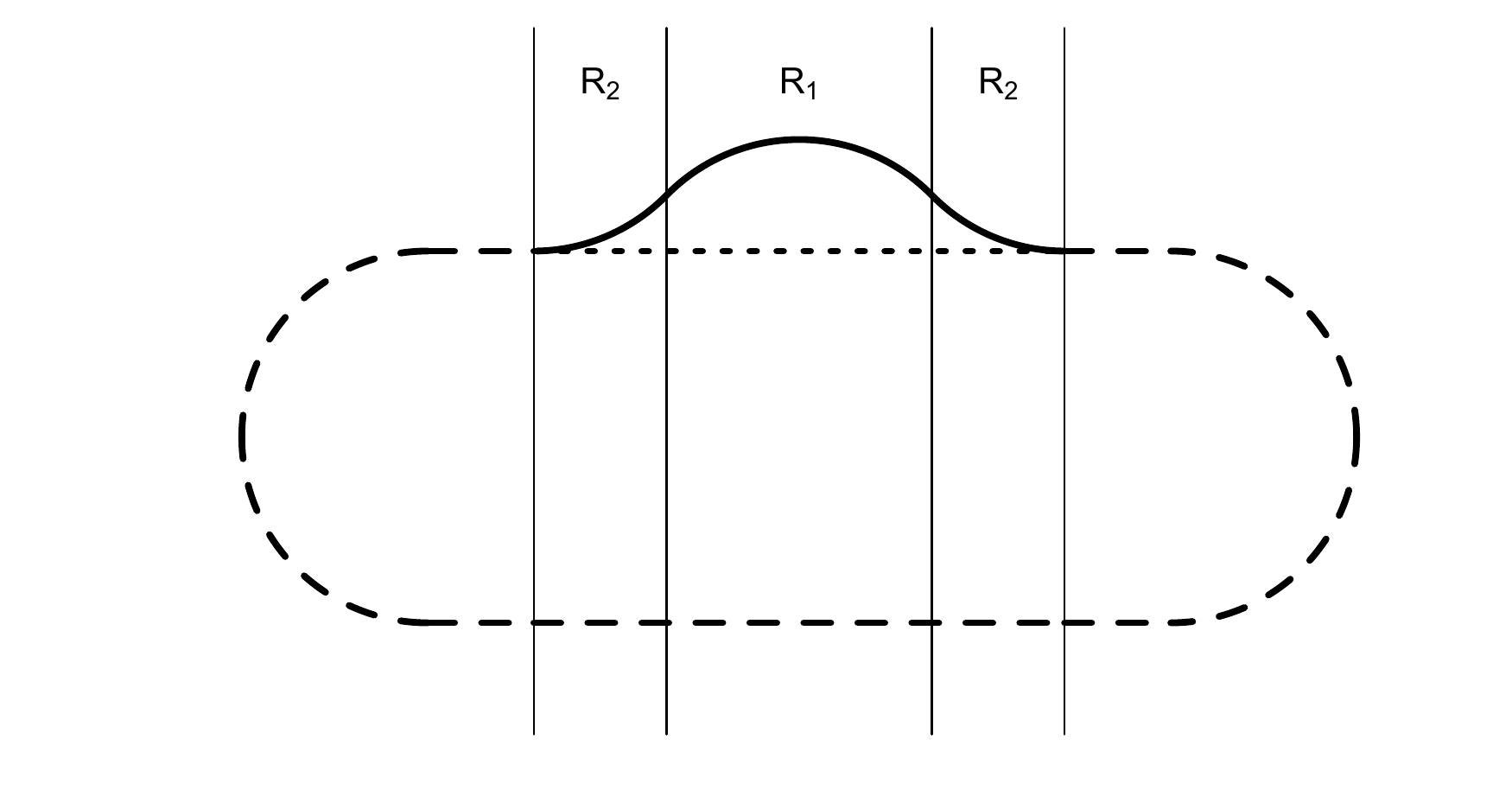}
\caption{One section of manifolds $M_0$ and $M_1$. 
The common part is dashed, $E_0$ is dotted and $E_1$ solid.
$R_1$ and $R_2$ denote the regions where the different definitions of the perturbation apply:
$R_1$ is $||u|| \leq \frac 12\sqrt{4\gamma \kappa-\gamma^2}$ while $R_2$ denotes
$\frac 12 \sqrt{4\gamma \kappa-\gamma^2} < ||u|| \leq \sqrt{4\gamma \kappa-\gamma^2}$. }
\label{fig::proofTh5}
\end{center}
\end{figure}

Note that for each point $y \in  E_0$ there exists $y' \in E_1$ such that
$||y-y'|| \leq |a(u) - b(u)| \leq \gamma$. 
Also, $y_0 = (0, a(0), 0) \in M_0$ has as its closest $M_1$ point $y_1= (0, b(0), 0)$, so that $||y_0 - x_0||=\gamma$.
Hence $H(M_0, M_1) = H(E_0, E_1) = \gamma$.

To find an upper bound for $V(S_0 \circ S_1)$, we show that each $y=(u,v,z) \in S_1 - S_0$ 
satisfies the following
conditions:
\begin{itemize}
\item[(i)]
$u \in B_d(0, \sqrt{4\gamma \kappa-\gamma^2})$;
\item[(ii)]
$z \in B_{D-d-1}(0, \sigma)$;
\item[(iii)]
$\kappa +\sigma - ||z|| < v \leq \kappa + \gamma +\sigma - ||z||]$.
\end{itemize}

If $y=(u,v,z)$ belongs to $S_1$ and has $||u|| > \sqrt{4\gamma \kappa-\gamma^2}$,
then there is a point of $M_0 \cap M_1$ within distance $\sigma$, hence $y \not \in S_1 - S_0$.
This proves (i).
Before proving (ii) and (iii), note that if $u \in B_d(0, \sqrt{4\gamma \kappa-\gamma^2})$ then
$$
\kappa = a(u) \leq b(u) \leq \kappa + \gamma.
$$
Now, let $y'=(u', b(u'), 0) \in E_1$ be the point in $S_1$ closest to $y$. We have
$$
d(y,S_1)= ||y-y'|| =  ||u-u'|| + |v-b(u')|  + ||z|| \leq \sigma.
$$
This gives condition (ii) above $||z|| \leq \sigma$ and also
\begin{equation} \label{eqn::condit}
|v-b(u')| \leq \sigma - ||z||.
\end{equation}
Since $b(u') \leq \kappa + \gamma$, we obtain
$$
v \leq b(u') +  \sigma - ||z|| \leq \kappa + \gamma + \sigma - ||z||
$$
which is the right inequality in (iii).
Finally,  
$$
\sigma < d(y, M_0) \leq || y - (u, a(u), 0) || = |v-a(u)| + ||z||
$$
which implies either $v < a(u) - (\sigma - ||z||)$ or $v > a(u) + (\sigma - ||z||)$.
The former inequality would imply 
$$
v < a(u) - (\sigma - ||z||) = \kappa - (\sigma - ||z||) \leq \inf_{u'} b(u') - (\sigma - ||z||)
$$
so that $|v-b(u')| > \sigma - ||z||$ for all $u'$, which is in contadiction with (\ref{eqn::condit}).
Hence we have $v > a(u) + (\sigma - ||z||)= \kappa + (\sigma - ||z||)$ that is  
the left inequality in (iii). 

As a consequence, 
$$
S_1 - S_0 \subset B_d(0, \sqrt{4\gamma \kappa-\gamma^2}) \times
\Bigl\{ (v,z) \in  \mathbb{R}^{D-d}:\ \kappa - \gamma +\sigma - ||z|| < v \leq \kappa + \gamma +\sigma - ||z||], 
z \in B_{D-d-1}(0, \sigma)  \Bigr\} 
$$
and
$$
V(S_0-S_1) \leq C \cdot (\sqrt{4\gamma \kappa-\gamma^2})^{d} \cdot \gamma \cdot \sigma^{D-d-1}.
$$
Hence, $V(S_0-S_1) = O(\gamma^{\frac d2 + 1})$.

With similar arguments one can show that $V(S_1 - S_0) = O(\gamma^{\frac d2 + 1})$ 
so that 
$$
V(S_0 \circ S_1) = O(\gamma^{\frac d2 + 1}).
$$
It then follows that
$\int|q_0-q_1| = O(\gamma^{(d+2)/2})$.

\bibliography{manifolds}

\begin{thebibliography}{19}
\providecommand{\natexlab}[1]{#1}
\providecommand{\url}[1]{\texttt{#1}}
\expandafter\ifx\csname urlstyle\endcsname\relax
  \providecommand{\doi}[1]{doi: #1}\else
  \providecommand{\doi}{doi: \begingroup \urlstyle{rm}\Url}\fi

\bibitem[Baraniuk and Wakin(2007)]{baraniuk}
Richard~G. Baraniuk and Michael~B. Wakin.
\newblock Random projections of smooth manifolds.
\newblock \emph{Foundations of Computational Mathematics}, 9:\penalty0 51--77,
  2007.

\bibitem[Birman and Solomjak(1967)]{birman}
M.~Birman and M.~Solomjak.
\newblock Piecewise-polynomial approximation of functions of the classes $w_p$.
\newblock \emph{Mathematics of USSR Sbornik}, 73:\penalty0 295--317, 1967.

\bibitem[Boissonnat and Ghosh(2010)]{boissonnatghosh}
Jean-Daniel Boissonnat and Arijit Ghosh.
\newblock Manifold reconstruction using tangential delaunay complexes.
\newblock In \emph{Proceedings of the 2010 annual symposium on computational
  geometry}, pages 324--333. ACM, 2010.

\bibitem[Chazal and Lieutier(2008)]{chazal2008}
Frederic Chazal and Andre Lieutier.
\newblock Smooth manifold reconstruction from noisy and non-uniform
  approximation with guarantees.
\newblock \emph{Computational Geometry}, 40:\penalty0 156--170, 2008.

\bibitem[Cheng and Dey(2005)]{chengdey}
Siu-Wing Cheng and Tamal Dey.
\newblock Manifold reconstruction from point samples.
\newblock In \emph{Proceedings of the sixteenth annual ACM-SIAM symposium on
  discrete algorithms}, pages 1018--1027. SIAM, 2005.

\bibitem[Cuevas and Rodr\'{i}guez-Casal(2004)]{cuevas-boundary}
Antonio Cuevas and Alberto Rodr\'{i}guez-Casal.
\newblock On boundary estimation.
\newblock \emph{Advances in Applied Probability}, 36\penalty0 (2):\penalty0
  340--354, 2004.

\bibitem[Devroye and Wise(1980)]{dw}
Luc Devroye and Gary~L. Wise.
\newblock Detection of abnormal behavior via nonparametric estimation of the
  support.
\newblock \emph{SIAM Journal on Applied Mathematics}, 38:\penalty0 480--488,
  1980.

\bibitem[Dey(2006)]{Dey}
Tamal Dey.
\newblock \emph{Curve and Surface Reconstruction: Algorithms with Mathematical
  Analysis}.
\newblock Cambridge University Press, 2006.

\bibitem[Dey and Goswami(2004)]{deygoswami}
Tamal Dey and Samrat Goswami.
\newblock Provable surface reconstruction from noisy samples.
\newblock In \emph{Proceedings of the twentieth annual symposium on
  computational geometry}, pages 330--339. ACM, 2004.

\bibitem[Federer(1959)]{federer}
Herbert Federer.
\newblock Curvature measures.
\newblock \emph{Transactions of the American Statistical Society}, 93:\penalty0
  418--491, 1959.

\bibitem[Genovese et~al.(2010)Genovese, Perone-Pacifico, Verdinelli, and
  Wasserman]{us::2010}
Christopher~R. Genovese, Marco Perone-Pacifico, Isabella Verdinelli, and Larry
  Wasserman.
\newblock Nonparametric filament estimation.
\newblock \emph{arXiv:1003.5536}, 2010.

\bibitem[Gonzalez and Maddocks(1999)]{gonzalez}
Oscar Gonzalez and John~H. Maddocks.
\newblock Global curvature, thickness, and the ideal shapes of knots.
\newblock \emph{Proceedings of the National Academy of Sciences}, 96\penalty0
  (9):\penalty0 4769--4773, 1999.

\bibitem[LeCam(1973)]{lecam}
L.~LeCam.
\newblock Convergence of estimates under dimensionality restrictions.
\newblock \emph{The Annals of Statistics}, pages 38--53, 1973.

\bibitem[Lee(2002)]{Lee2002}
J.M. Lee.
\newblock \emph{Introduction to Smooth Manifolds}.
\newblock Springer, 2002.

\bibitem[Niyogi et~al.(2006)Niyogi, Smale, and Weinberger]{smale}
Partha Niyogi, Steven Smale, and Shmuel Weinberger.
\newblock Finding the homology of submanifolds with high confidence from random
  samples.
\newblock \emph{Discrete and Computational Geometry}, 39:\penalty0 419--441,
  2006.

\bibitem[Niyogi et~al.(2008)Niyogi, Smale, and Weinberger]{NSW2008}
Partha Niyogi, Steven Smale, and Shmuel Weinberger.
\newblock A topological view of unsupervised learning from noisy data.
\newblock \emph{Unpublished technical report, University of Chicago}, 2008.

\bibitem[Shen and Wong(1995)]{shen}
Xiaotong Shen and Wing Wong.
\newblock Probability inequalities for likelihood ratios and convergence rates
  of sieve mles.
\newblock \emph{The Annals of Statistics}, 23:\penalty0 339--362, 1995.

\bibitem[Tsybakov(2008)]{Tsybakov}
Alexandre Tsybakov.
\newblock \emph{Introduction to Nonparametric Estimation}.
\newblock Springer, 2008.

\bibitem[Yu(1997)]{binyu}
Bin Yu.
\newblock Assouad, {F}ano, and {L}e {C}am.
\newblock In \emph{Festschrift for Lucien Le Cam}. Springer, 1997.

\end{thebibliography}

\end{document}